\newcommand{\xmark}{\ding{55}}%
\newtheorem{theorem}{Theorem}
\newtheorem{lemma}{Lemma}[section]
\newtheorem{corollary}[lemma]{Corollary}
\newtheorem{proposition}[theorem]{Proposition}
\newtheorem{definition}{Definition}
\newcommand{\dt}{\Delta}
\newcommand{\paren}[1]{\left (#1 \right)}
\newcommand{\dd}{\mathop{}\!d}
\DeclareMathOperator*{\diag}{diag}
\newcommand{\discont}{t_0}
\newcommand{\twpa}{a}
\newcommand{\twpb}{b}
\newcommand{\twpc}{c}
\newcommand{\phid}{d}
\newcommand{\para}[1]{\paragraph{#1}}
\newlength{\defbaselineskip}
\newcommand*\samethanks[1][\value{footnote}]{\footnotemark[#1]}
\title{How to Train Your HiPPO: State Space Models with \\ Generalized Orthogonal Basis Projections}
\author[$\dagger$]{Albert Gu\thanks{Equal contribution.}}
\author[$\ddagger$]{Isys Johnson\samethanks}
\author[$\ddagger$]{Aman Timalsina}
\author[$\ddagger$]{Atri Rudra}
\author[$\dagger$]{Christopher R{\'e}}
\affil[$\dagger$]{Department of Computer Science, Stanford University}
\affil[$\dagger$]{{\texttt{albertgu@stanford.edu}, \texttt{chrismre@cs.stanford.edu}}}
\affil[$\ddagger$]{Department of Computer Science and Engineering, University at Buffalo}
\affil[$\ddagger$]{{\texttt{\{isysjohn,amantima,atri\}@buffalo.edu}}}
\date{}
\begin{document}

\maketitle

\begin{abstract}
  Linear time-invariant state space models (SSM) are a classical model from engineering and statistics, that have recently been shown to be very promising in machine learning through the Structured State Space sequence model (S4).
  A core component of S4 involves initializing the SSM state matrix to a particular matrix called a HiPPO matrix,
  which was empirically important for S4's ability to handle long sequences.
  However, the specific matrix that S4 uses was actually derived in previous work for a particular \emph{time-varying} dynamical system,
and the use of this matrix as a \emph{time-invariant} SSM had no known mathematical interpretation.
Consequently, the theoretical mechanism by which S4 models long-range dependencies actually remains unexplained.
We derive a more general and intuitive formulation of the HiPPO framework, which provides a simple mathematical interpretation of S4 as a decomposition onto exponentially-warped Legendre polynomials, explaining its ability to capture long dependencies.
Our generalization introduces a theoretically rich class of SSMs that also lets us derive more intuitive S4 variants for other bases such as the Fourier basis, and explains other aspects of training S4, such as how to initialize the important timescale parameter.
These insights improve S4's performance to 86\% on the Long Range Arena benchmark, with 96\% on the most difficult Path-X task.
\end{abstract}

\section{Introduction}
\label{sec:intro}

The Structured State Space model (S4) is a recent deep learning model based on continuous-time dynamical systems that has shown promise on a wide variety of sequence modeling tasks \citep{gu2022efficiently}.
It is defined as a linear time-invariant (LTI) state space model (SSM), which give it multiple properties \citep{gu2021lssl}:
as an SSM, S4 can be simulated as a discrete-time recurrence for efficiency in online or autoregressive settings, and as a LTI model, S4 can be converted into a convolution for parallelizability and computational efficiency at training time.
These properties give S4 remarkable computational efficiency and performance, especially when modeling continuous signal data and long sequences.

Despite its potential, several aspects of the S4 model remain poorly understood.
Most notably, \citet{gu2022efficiently} claim that the long range effects of S4 arise from instantiating it with a particular matrix they call the \textbf{HiPPO matrix}.
However, this matrix was actually derived in prior work for a particular \emph{time-varying} system \citep{gu2020hippo},
and the use of this matrix in a \emph{time-invariant} SSM did not have a mathematical interpretation.
Consequently, the mechanism by which S4 truly models long-range dependencies is actually not known.
Beyond this initialization, several other aspects of parameterizing and training S4 remain poorly understood.
For example, S4 involves an important timescale parameter $\dt$,
and suggests a method for parameterizing and initializing this parameter, but does not discuss its meaning or provide a justification.

This work aims to provide a comprehensive theoretical exposition of several aspects of S4.
The major contribution of this work is a cleaner, more intuitive, and much more general formulation of the HiPPO framework. This result directly generalizes all previous known results in this line of work \citep{voelker2019legendre,gu2020hippo,gu2021lssl,gu2022efficiently}.
As immediate consequences of this framework:
\begin{itemize}[leftmargin=*,itemsep=0pt]
  \item We prove a theoretical interpretation of S4's state matrix $\bm{A}$, explaining S4's ability to capture long-range dependencies via decomposing the input with respect to an infinitely long, exponentially-decaying measure (\cref{fig:1} (\emph{Left})).

  \item We derive new HiPPO matrices and corresponding S4 variants that generalize other nice basis functions.
    For example, our new method S4-FouT produces \emph{truncated Fourier basis functions}. This method thus automatically
    captures sliding Fourier transforms (e.g. the STFT and spectrograms) which are ubiquitous as a hand-crafted signal processing tool,
    and can also represent any \emph{local convolution}, thus generalizing conventional CNNs (\cref{fig:1} (\emph{Middle})).

  \item We provide an intuitive explanation of the timescale $\dt$, which has a precise interpretation as controlling the length of dependencies that the model captures. Our framework makes it transparent how to initialize $\dt$ for a given task, as well as how to initialize the other parameters (in particular, the last SSM parameter $\bm{C}$) to make a deep SSM variance-preserving and stable.
\end{itemize}

\begin{figure}[!t]
  \begin{subfigure}{0.33\linewidth}
    \centering
    \includegraphics[width=\linewidth]{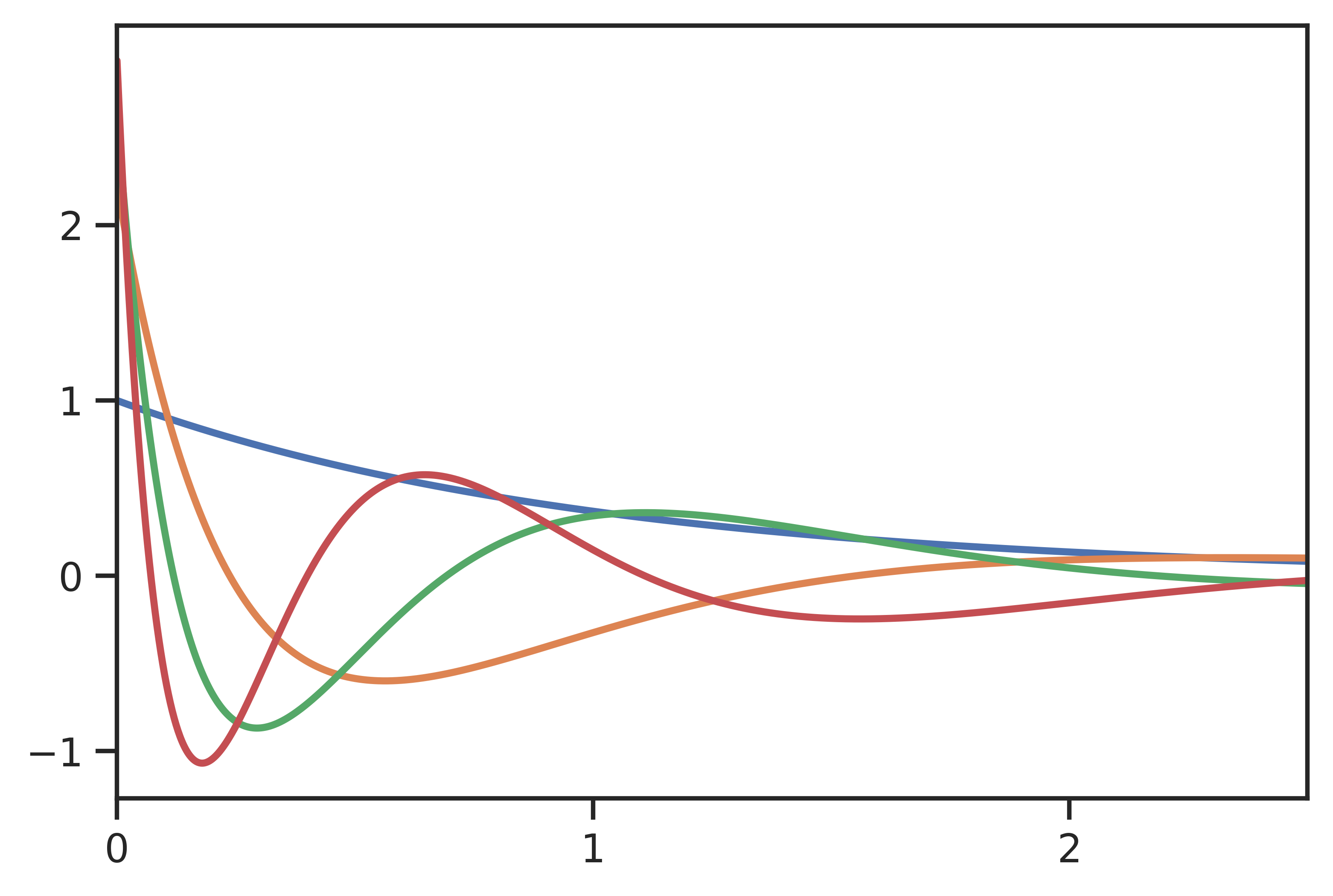}
  \end{subfigure}
  \begin{subfigure}{0.33\linewidth}
    \centering
    \includegraphics[width=\linewidth]{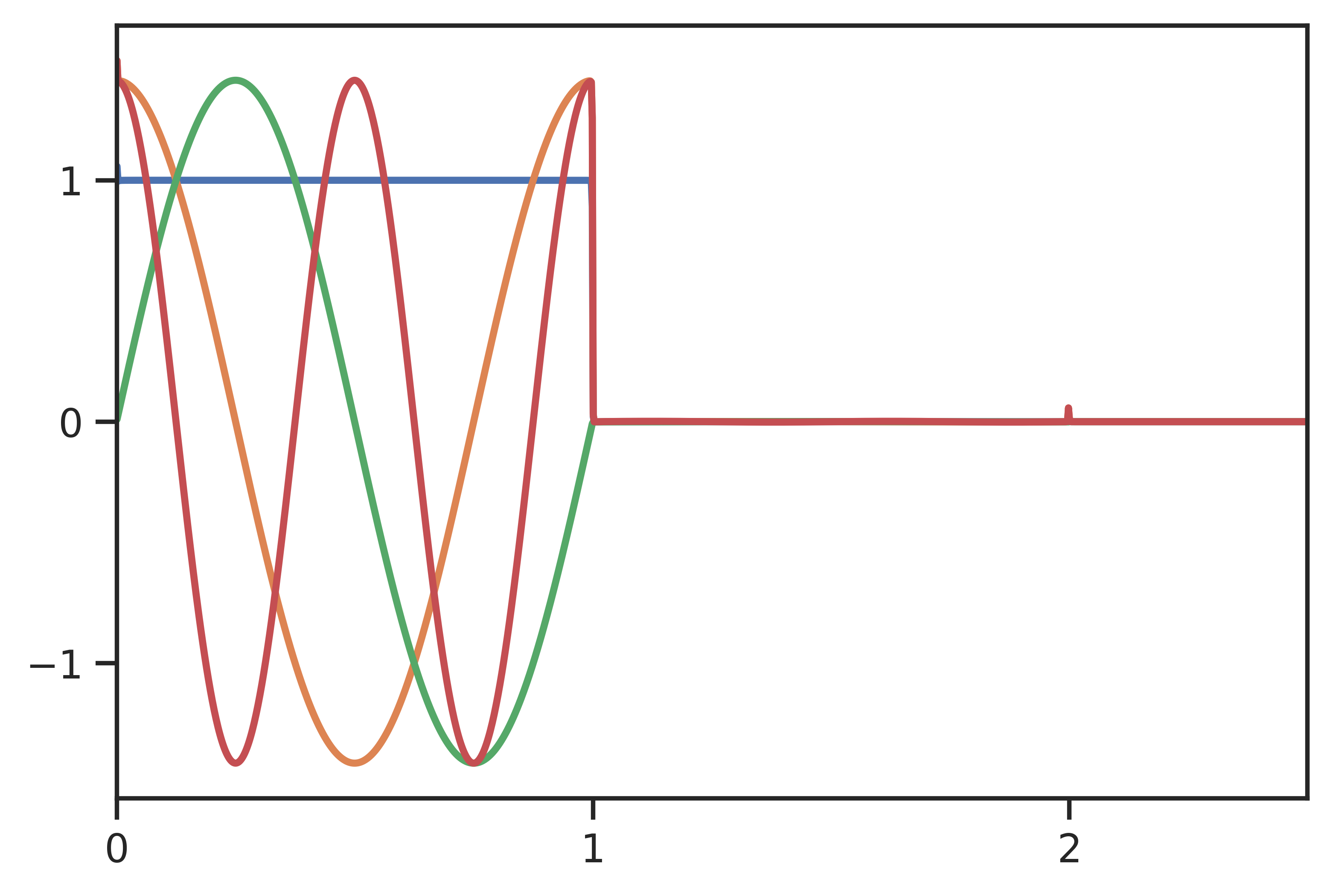}
  \end{subfigure}
  \begin{subfigure}{0.33\linewidth}
    \centering
    \includegraphics[width=\linewidth]{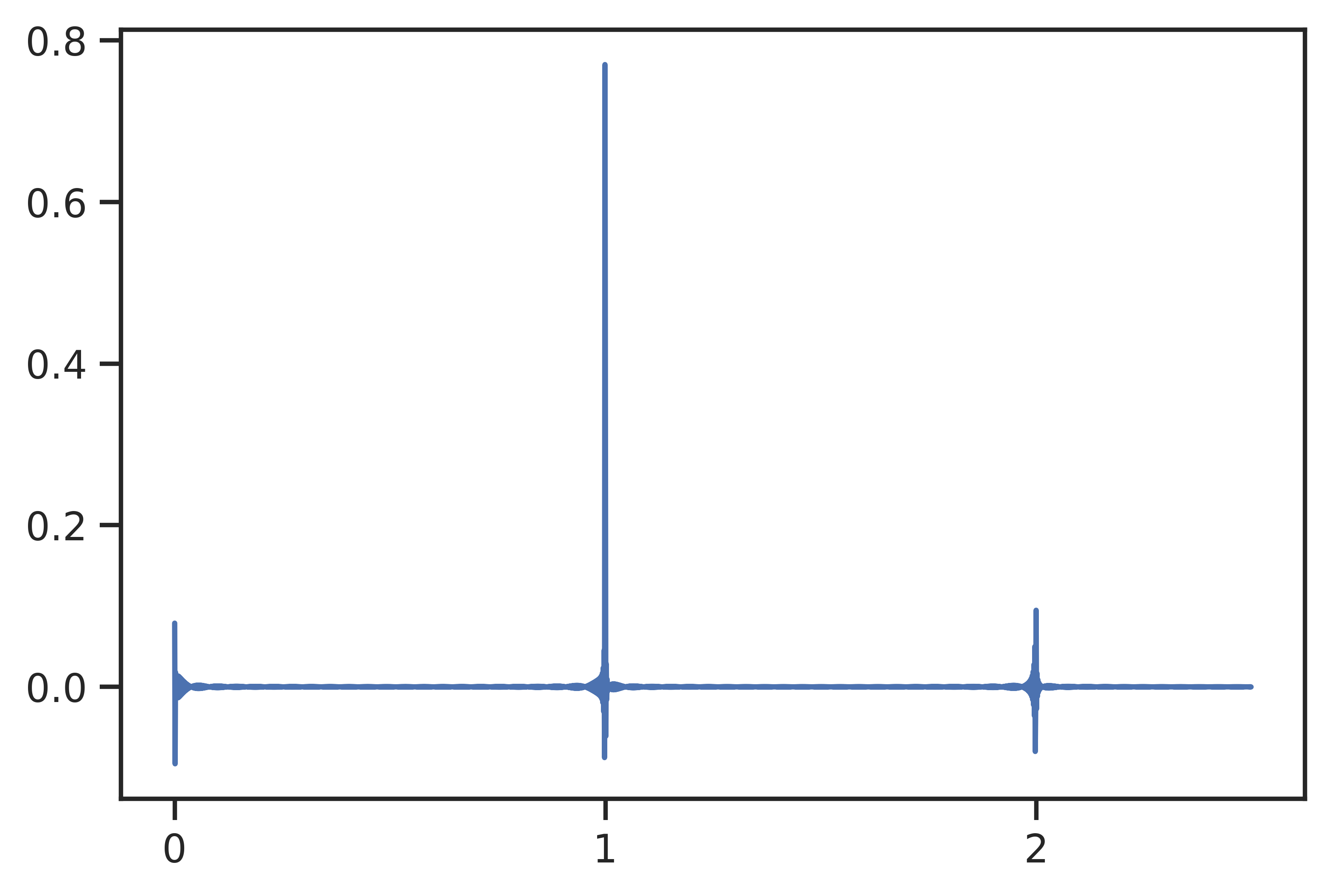}
  \end{subfigure}
  \caption{
    In this work, we focus on a more intuitive interpretation of state space models as a convolutional model where the convolution kernel is a linear combination of basis functions.
    We introduce a generalized framework that allows deriving state spaces \( x' = \bm{A}x + \bm{B}u \) that produce particular basis functions, leading to several generalizations and new methods.
    (\emph{Left}: \textbf{LegS}) We prove that the particular \( \bm{A} \) matrix chosen in S4 produces Legendre polynomials under an exponential re-scaling, resulting in smooth basis functions with a closed form formula.
    This results in a simple mathematical interpretation of the method as orthogonalizing against an exponentially-decaying measure, granting the system better ability to model long-range dependencies.
    (\emph{Middle, Right}: \textbf{FouT})
    We derive a new SSM that produces approximations to the \textbf{truncated Fourier} basis, perhaps the most intuitive and ubiquitous set of basis functions. This method generalizes sliding Fourier Transforms and local convolutions (i.e. CNNs), and can also encode spike functions to solve classic memorization tasks.
  }
  \label{fig:1}
\end{figure}

Empirically, we validate our theory on synthetic function reconstruction and memorization tasks, showing that empirical performance of state space models in several settings is predicted by the theory.
For example, our new S4-FouT method, which can provably encode a spike function as its convolution kernel, performs best on a continuous memorization task compared to other SSMs and other models, when $\dt$ is initialized correctly.
Finally,
we show that the original S4 method is still best on very long range dependencies, achieving a new state of the art of \textbf{86\%} average on Long Range Arena, with \textbf{96\%} on the most difficult Path-X task that even the other S4 variants struggle with.

\section{Background}
\label{sec:background}

\subsection{State Space Models: A Continuous-time Latent State Model}
\label{sec:ss-continuous}

The state space model (\textbf{SSM}) is defined by the simple differential equation \eqref{eq:ssm-1} and \eqref{eq:ssm-2}.
It maps a 1-D input signal $u(t)$ %
to an \( N \)-D latent state \( x(t) \)
before projecting to a 1-D output signal \( y(t) \).

\begin{minipage}{.5\linewidth}
  \noindent
  \vspace*{-8pt}
  \begin{align}
    x'(t) &= \bm{A}(t)x(t) + \bm{B}(t) u(t) \label{eq:ssm-1} \\
    y(t)  &= \bm{C}(t)x(t) + \bm{D}(t) u(t) \label{eq:ssm-2}
  \end{align}
\end{minipage}
\begin{minipage}{.5\linewidth}
  \vspace*{4pt}
  \begin{equation}%
    \label{eq:ssm-convolution}
    \begin{aligned}
      K(t) &= \bm{C} e^{t\bm{A}} \bm{B} \\
      y(t) &= (K \ast u)(t) %
    \end{aligned}
  \end{equation}
\end{minipage}

For the remainder of this paper, we will assume \( \bm{D}=0 \) and omit it for simplicity, unless explicitly mentioned. %

SSMs can in general have dynamics that change over time, i.e. the matrices \( \bm{A}, \bm{B}, \bm{C}, \bm{D} \) are a function of \( t \) in \eqref{eq:ssm-1} and \eqref{eq:ssm-2}.
However, when they are constant
the system is \textbf{linear time invariant (LTI)}, and is equivalent to a convolutional system \eqref{eq:ssm-convolution}.
The function \( K(t) \) is called the \textbf{impulse response} which can also be defined as the output of the system when the input \( u(t) = \delta(t) \) is the impulse or Dirac delta function.
We will call these \textbf{time-invariant state space models} (\textbf{TSSM}).
These are particularly important because the equivalence to a convolution makes TSSMs parallelizable and very fast to compute, which is critical for S4's efficiency.
\looseness=-1

\looseness=-1
Our treatment of SSMs will consider the \( (\bm{A}, \bm{B}) \) parameters separately from \( \bm{C} \).
We will refer to an SSM as either the tuple \( (\bm{A}, \bm{B}, \bm{C}) \) (referring to \eqref{eq:ssm-convolution}) or \( (\bm{A}, \bm{B}) \) (referring to \cref{def:ssm-basis}) when the context is unambiguous.
We also drop the T in TSSM when the context is clearly time-invariant.

\begin{definition}%
  \label{def:ssm-basis}
  Given a TSSM \( (\bm{A}, \bm{B}) \),
  \( e^{t\bm{A}}\bm{B} \) is a vector of \( N \) functions
  which we call the \textbf{SSM basis}. %
  The individual basis functions are denoted \( K_n(t) = \bm{e}_n^\top e^{t\bm{A}}\bm{B} \),
  which satisfy \( x_n(t) = (u \ast K_n)(t) = \int_{-\infty}^t K_n(t-s) u(s) \dd s \). Here \( \bm{e}_n \) is the one-hot basis vector.
\end{definition}
This definition is motivated by noting that the SSM convolutional kernel is a linear combination
of the SSM basis controlled by the vector of coefficients \( \bm{C} \),
\( K(t) = \sum_{n=0}^{N-1} \bm{C}_n K_n(t) \).

\paragraph{Discrete SSM with Timescales.}
To be applied on a discrete input sequence \( (u_0, u_1, \dots) \) instead of continuous function \( u(t) \),
\eqref{eq:ssm-1} must be discretized by a \textbf{step size} \( \dt \) that represents the resolution of the input.
Conceptually, the inputs \( u_k \) can be viewed as sampling an implicit underlying continuous signal \( u(t) \), where \( u_k = u(k \dt) \).
Analogous to the fact that the SSM has equivalent forms either as an \emph{dynamical system} \eqref{eq:ssm-1} or a \emph{continuous convolution} \eqref{eq:ssm-convolution},
the discrete-time SSM can be computed either as a \emph{recurrence} or a \emph{discrete convolution}.
The mechanics to compute the discrete-time SSM has been discussed in previous works \citep{gu2021lssl,gu2022efficiently}.
For our purposes, we only require the following fact:
for standard discretization methods used in prior work, discretizing the state space $(\bm{A}, \bm{B})$ at a step size $\dt$ is exactly equivalent to discretizing the state space $(\dt\bm{A}, \dt\bm{B})$ at a step size $1$.
This allows thinking of \( \dt \) simply as modulating the SSM parameters \( (\bm{A}, \bm{B}) \) instead of representing a step size.

A poorly understood question from prior work is how to interpret and choose this \( \dt \) parameter, especially when the input \( u_k \)
does not actually arise from uniformly sampling an underlying continuous signal. %
S4 specifies to log-uniformly initialize \( \dt \) in the range \( (\dt_{min}, \dt_{max}) = (0.001, 0.1) \), but do not provide a concrete justification.
In \cref{sec:hippo:timescale} we show a simpler interpretation of \( \dt \) directly in terms of the length of dependencies in a discrete input sequence.

\subsection{HiPPO: High-order Polynomial Projection Operators}
\label{sec:background:hippo}

\linepenalty=1000
\looseness=-1
S4 is defined as a TSSM where \( (\bm{A}, \bm{B}) \) is initialized with a particular formula \eqref{eq:legs}.
This was called the HiPPO matrix in \citep{gu2022efficiently}, but is actually just one of several such special matrices derived in \citep{gu2020hippo}.
To disambiguate other variants of S4, we refer to the full S4 method using this HiPPO SSM as \textbf{S4-LegS}.
Other cases considered in this work include LegT from prior work \eqref{eq:legt} and FouT that we introduce \eqref{eq:fout}.

\begin{minipage}{0.5\linewidth}
  \small
\begin{equation}
  \label{eq:legs}
  \qquad
  \begin{aligned}%
    & (\text{\textbf{HiPPO-LegS}}) \\
    \bm{A}_{nk}
    &=
    -(2n+1)^{\frac{1}{2}}(2k+1)^{\frac{1}{2}}
    \\&\quad \cdot
    \begin{cases}
      1 & n > k \\
      \frac{n+1}{2n+1}                      & n = k \\
      0                        & n < k
    \end{cases}
    \\
    \bm{B}_n &= (2n+1)^{\frac{1}{2}}
  \end{aligned}
\end{equation}
\begin{equation}
  \label{eq:legt}
  \qquad
  \begin{aligned}%
    & (\text{\textbf{HiPPO-LegT}}) \\
    \bm{A}_{nk}
    &=
  -(2n+1)^{\frac{1}{2}}(2k+1)^{\frac{1}{2}}
  \\&\quad \cdot
  \begin{cases}
    1 & k \le n \\
    (-1)^{n-k} & k \ge n
  \end{cases}
  \\
  \bm{B}_n &= (2n+1)^{\frac{1}{2}}
  \end{aligned}
\end{equation}
\end{minipage}
\hfill
\begin{minipage}{0.5\linewidth}
  \small
\begin{equation}
  \label{eq:fout}
  \qquad
  \begin{aligned}%
    & (\text{\textbf{HiPPO-FouT}}) \\
    \bm{A}_{nk}
    &=
    \begin{cases}
    -2 & n = k = 0 \\
    -2\sqrt{2} & n = 0, k \text{ odd} \\
    -2\sqrt{2} & k = 0, n \text{ odd} \\
    -4 & n, k \text{ odd} \\
    2 \pi k & n-k=1, k \text{ odd} \\
    -2 \pi n & k-n=1, n \text{ odd} \\
    0 & \text{otherwise}
    \end{cases}
    \\
    \bm{B}_n &=
    \begin{cases}%
      2 & n = 0 \\
      2\sqrt{2} & n \text{ odd} \\
      0 & \text{otherwise}
    \end{cases}
  \end{aligned}
\end{equation}
\end{minipage}

These matrices were originally motivated by the question of `online memorization' of an input signal.
The key idea is that for a suitably chosen SSM basis \( \bm{A}, \bm{B} \), then at any time \( t \), the current state \( x(t) \) can be used to approximately reconstruct the entire input \( u \) up to time \( t \) (\cref{fig:reconstruction-background}).

The main theoretical idea is as follows.
Suppose that the basis functions satisfy \cref{def:hippo}.
\begin{definition}%
  \label{def:hippo}
  We call an SSM \( (\bm{A}(t), \bm{B}(t)) \) an \textbf{orthogonal SSM (OSSM)} for the basis \( p_n(t, s) \)
  and measure \( \omega(t, s) \ge 0 \) if the functions
  \( K_n(t, s) = p_n(t, s) \omega(t, s) \)
  satisfy, at all times \( t \),
  \begin{align}%
    \label{eq:basis}
    x_n(t) = \int_{-\infty}^t K_n(t, s) u(s) \dd s \qquad \qquad \qquad
    \int_{-\infty}^{t} p_n(t, s) p_m(t, s) \omega(t, s) \dd s = \delta_{n,m}.
  \end{align}
  In the case of a \textbf{time-invariant OSSM (TOSSM)}, \( K_n(t, s) =: K_n(t-s) \) (depends only on \( t-s \)) giving us \cref{def:ssm-basis} with measure $\omega(t-s):= \omega(t,s)$ and basis $p_n(t-s):=p_n(t,s)$.
\end{definition}

To be more specific about terminology, \( p_n(t) \) and \( \omega_n(t) \) are called the \emph{basis and measure} for \emph{orthogonal} SSMs (\cref{def:hippo}), while \( K_n(t) \) are called the \emph{SSM basis kernels} which applies more generally to all SSMs (\cref{def:ssm-basis}).
The distinction will be made clear from context, notation, and the word ``kernel'' referring to \( K_n(t) \).

For OSSMs, \( (p, \omega) \) and \( K \) are uniquely determined by each other, so we can refer to an OSSM by either. One direction is obvious: \( (p, \omega) \) determine \( K \) via \( K_n(t, s) = p_n(t, s) \omega(t, s) \).
\begin{proposition}%
  \label{prop:basis-uniqueness}
  If a set of kernel functions satisfies \(  K_n(t, s) = p_n(t, s) \omega(t, s) \) where the functions \( p_n \) are complete and orthogonal w.r.t. \( \omega \) (equation \eqref{eq:basis} right), \( p \) and \( \omega \) are unique.
\end{proposition}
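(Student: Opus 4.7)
The plan is to show that any two valid representations $K_n = p_n\omega = \tilde p_n \tilde\omega$, with both $\omega, \tilde\omega \ge 0$ and each family complete and orthonormal in its weighted $L^2$-space, must in fact coincide. I would fix the outer argument $t$ (suppress it notationally), and on the common support define the pointwise ratio $r(s) := \omega(s)/\tilde\omega(s)$. Dividing the two expressions for $K_n$ immediately gives $\tilde p_n = r\,p_n$ wherever this makes sense, so the whole problem reduces to pinning down this single scalar function $r$.

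The key step is to substitute $\tilde p_n = r p_n$ into the orthonormality of $\{\tilde p_n\}$ against $\tilde\omega$:
\[
\delta_{nm} = \int \tilde p_n(s)\,\tilde p_m(s)\,\tilde\omega(s)\,\dd s = \int r(s)\,p_n(s)\,p_m(s)\,\omega(s)\,\dd s = \langle r p_n,\, p_m\rangle_\omega.
\]
Thus the expansion coefficients of $r p_n$ in the $\omega$-orthonormal basis $\{p_m\}$ are exactly $\delta_{nm}$, which are also the coefficients of $p_n$ itself. By completeness of $\{p_m\}$ in $L^2(\omega)$, this forces $r\,p_n = p_n$ in $L^2(\omega)$, so $r(s) = 1$ for $\omega$-a.e.\ $s$ where $p_n(s) \ne 0$. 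Taking the union over all $n$ and invoking completeness a second time (so that the joint zero-set of the $p_n$ carries no $\omega$-mass), I conclude $r \equiv 1$ almost everywhere, which gives $\omega = \tilde\omega$ and hence $p_n = \tilde p_n$.

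The main obstacle is measure-theoretic bookkeeping: one must justify that $\mathrm{supp}(\omega)$ and $\mathrm{supp}(\tilde\omega)$ coincide so that the ratio $r$ is meaningful. I would handle this by noting that if $\tilde\omega$ had mass on a set $E$ disjoint from $\mathrm{supp}(\omega)$, then each $K_n = p_n\omega$ would vanish on $E$, hence each $\tilde p_n = K_n/\tilde\omega$ would vanish on $E$ as well, contradicting completeness of the $\tilde p_n$ in $L^2(\tilde\omega)$; a symmetric argument closes the other direction. The residual sign/rescaling ambiguity $(p_n,\omega) \mapsto (-p_n,-\omega)$ is ruled out a priori by the standing hypothesis $\omega \ge 0$, so the pair $(p, \omega)$ recovered from $K$ is truly unique.
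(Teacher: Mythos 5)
Your proof is correct and takes essentially the same route as the paper's: both rest on the swap $\tilde p_m\,\tilde\omega = K_m = p_m\,\omega$ inside the orthonormality integral, which shows the expansion coefficients of one family in the other's orthonormal basis are $\delta_{nm}$, and then invoke completeness to identify the bases. Your ratio function $r=\omega/\tilde\omega$ and the support bookkeeping are just a more explicit packaging of the same argument (with the added benefit that you spell out why $\omega=\tilde\omega$ follows, a step the paper leaves implicit).
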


\looseness=-1
Equation \eqref{eq:basis} is equivalent to saying that for every fixed \( t \), \( \langle p_n, p_m \rangle_\omega = \delta_{n,m} \),
or that \( p_n \) are an orthonormal basis with respect to measure \( \omega \).
More formally, defining \( p_n^{(t)}(s) = p_n(t, s) \) and \( \omega^{(t)} \) similarly, then \( p_n^{(t)} \) are orthonormal in the Hilbert space with inner product \( \langle p, q \rangle = \int p(s) q(s) \omega^{(t)}(s) \dd s \)).
By equation \eqref{eq:basis}, \( x_n(t) = \int_{-\infty}^t u(s) K_n(t, s) \dd s = \langle u, p_n^{(t)} \rangle_{\omega^{(t)}} \) where \(  p_n^{(t)}(s) = p_n(t, s) \).
Thus at all times \( t \), the state vector \( x(t) \) is simply \emph{the projections of \( u \mid_{\le t} \) onto a orthonormal basis},
so that the history of \( u \) can
be reconstructed from \( x(t) \).
HiPPO called this the \textbf{online function approximation} problem~\citep{gu2020hippo}.

\begin{proposition}%
  \label{prop:hippo-reconstruction}
  \looseness=-1
  Consider an OSSM that satisfies \eqref{eq:basis} and fix a time \( t \).
  Furthermore suppose that in the limit \( N \to \infty \),
  the \( p_n^{(t)} \) are a complete basis on the support of \( \omega \).
Then \( u(s) = \sum_{n=0}^\infty x_n(t) p_n(t, s) \) for all \( s \leq t \).
\end{proposition}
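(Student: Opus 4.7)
The plan is to recognize this as a direct application of generalized Fourier expansion in the Hilbert space $L^2(\omega^{(t)})$, combined with the defining identity of an OSSM. There is essentially no new content beyond matching up definitions.

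First, I would fix the time $t$ and set up the appropriate function space. Define the Hilbert space $H_t$ of (equivalence classes of) functions on the support of $\omega^{(t)}$, equipped with the inner product $\langle f, g \rangle_{\omega^{(t)}} = \int f(s) g(s)\, \omega^{(t)}(s)\, \dd s$. By the right-hand equality of \eqref{eq:basis}, the family $\{p_n^{(t)}\}_{n \ge 0}$ is orthonormal in $H_t$, and by the hypothesis of the proposition (completeness as $N \to \infty$), it is in fact a complete orthonormal basis of $H_t$. Standard Hilbert-space theory then gives, for any $f \in H_t$,
\begin{equation*}
  f(s) = \sum_{n=0}^\infty \langle f, p_n^{(t)} \rangle_{\omega^{(t)}}\, p_n^{(t)}(s),
\end{equation*}
with convergence in the $H_t$-norm.

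Second, I would apply this expansion to $u|_{\le t}$ (implicitly assuming it lies in $H_t$; this is the regularity hypothesis one needs for the statement to make sense). The Fourier coefficients are
\begin{equation*}
  \langle u, p_n^{(t)} \rangle_{\omega^{(t)}} = \int_{-\infty}^{t} u(s)\, p_n(t,s)\, \omega(t,s)\, \dd s = \int_{-\infty}^{t} u(s)\, K_n(t,s)\, \dd s,
\end{equation*}
where the last step uses $K_n(t,s) = p_n(t,s)\omega(t,s)$ from \cref{def:hippo}. But by the left-hand equality of \eqref{eq:basis}, this integral is exactly $x_n(t)$. Substituting back yields $u(s) = \sum_{n=0}^\infty x_n(t)\, p_n(t,s)$, which is the claim.

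The only real subtlety, and therefore the main (minor) obstacle, is the mode of convergence: the generalized Fourier series above converges in $L^2(\omega^{(t)})$, not pointwise in $s$ as literally written. So the statement should either be read as an $L^2$ equality, or upgraded to pointwise convergence under additional smoothness assumptions on $u$ and on the basis $p_n^{(t)}$ (for instance, the uniform convergence results available for classical Legendre expansions of sufficiently regular functions). At the level of this proposition I would simply note this caveat and interpret the equality in the $L^2(\omega^{(t)})$ sense; no further calculation is required.
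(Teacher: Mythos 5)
Your argument is correct and is essentially the same as the paper's: the paper justifies this proposition only through the discussion preceding it, noting via \eqref{eq:basis} that \( x_n(t) = \langle u, p_n^{(t)} \rangle_{\omega^{(t)}} \) is the generalized Fourier coefficient of \( u\mid_{\le t} \) in the Hilbert space weighted by \( \omega^{(t)} \), so completeness of \( \{p_n^{(t)}\} \) yields the expansion. Your caveat that the equality should be read in the \( L^2(\omega^{(t)}) \) sense (or pointwise under additional regularity of \( u \)) is a fair sharpening of a point the paper leaves implicit.
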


\begin{figure}[!t]
\begin{subfigure}{.5\linewidth}%
  \centering
  \includegraphics[width=\linewidth]{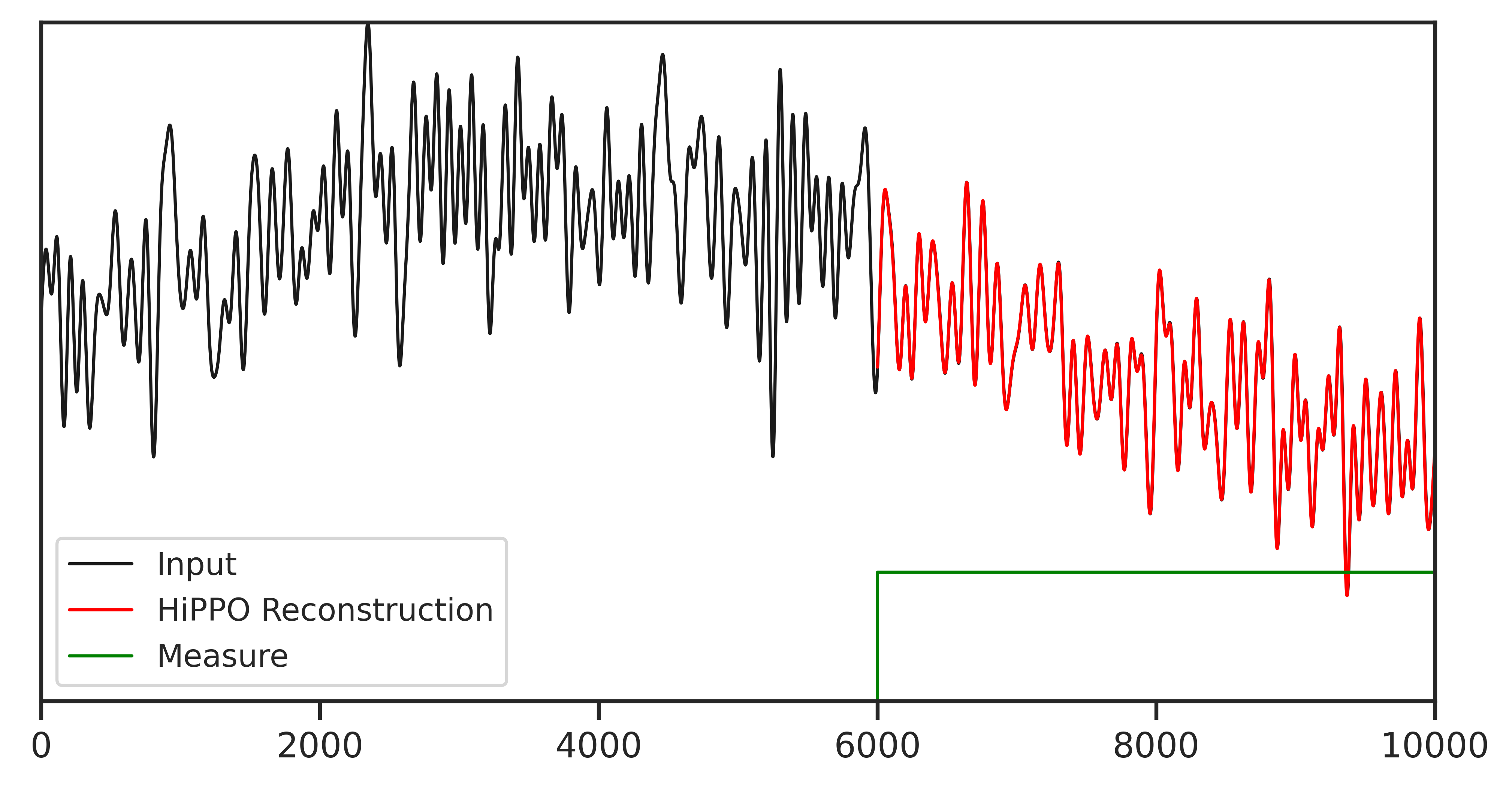}
\end{subfigure}
\begin{subfigure}{.5\linewidth}%
  \centering
  \includegraphics[width=\linewidth]{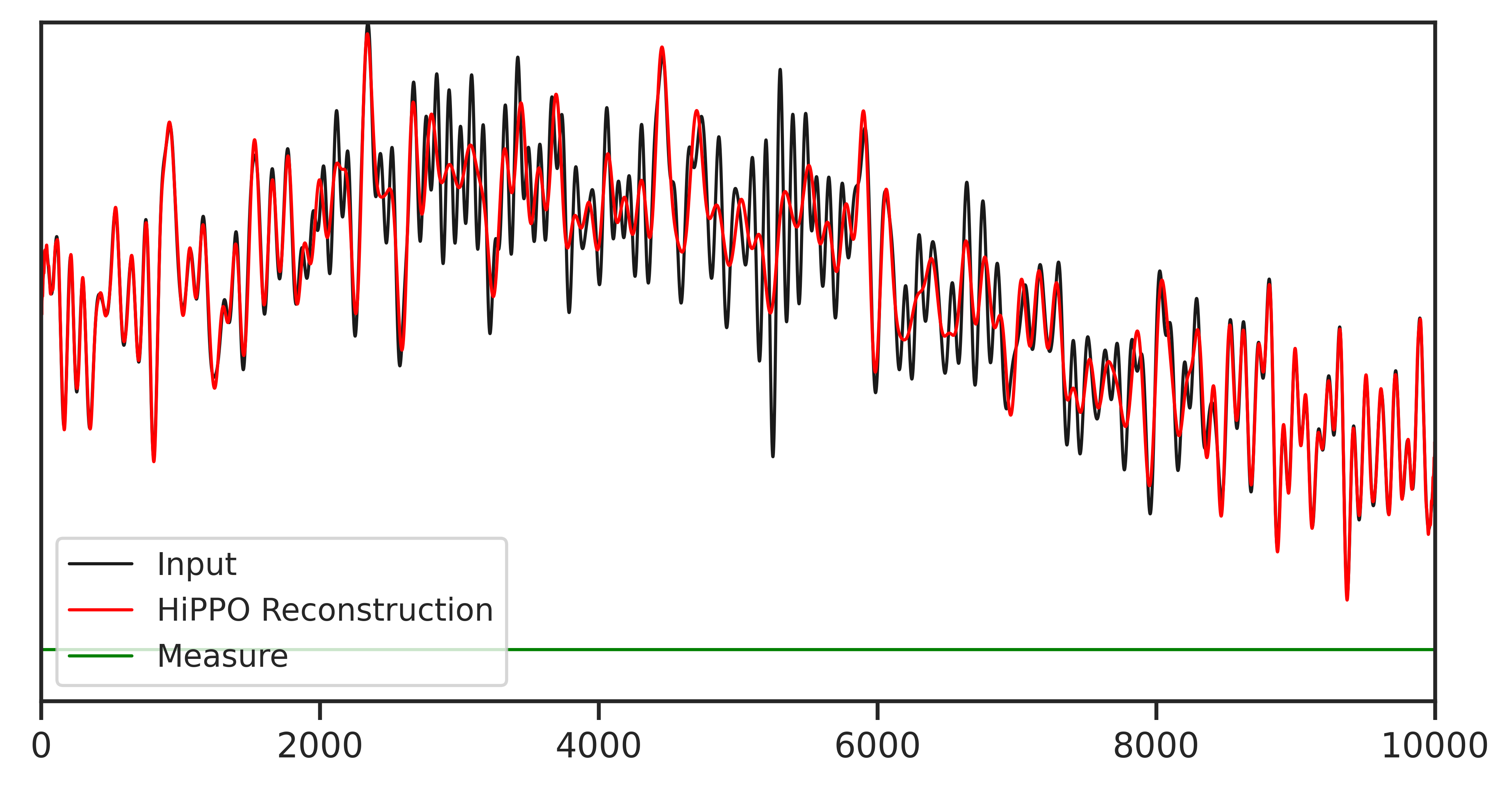}
\end{subfigure}
\caption{
  Given an input function \( u(t) \) (black), HiPPO compresses it online into a state vector \( x(t) \in \mathbbm{R}^N \) via equation \eqref{eq:ssm-1}. Specific cases of HiPPO matrices \( \bm{A}, \bm{B} \) are derived so that at every time \( t \), the history of \( u \) up to time \( t \) can be reconstructed linearly from \( x(t) \) (red), according to a measure (green).
  (\emph{Left}) The HiPPO-LegT method orthogonalizes onto the Legendre polynomials against a time-invariant uniform measure, i.e. sliding windows.
  (\emph{Right}) The original HiPPO-LegS method is \emph{not} time-invariant system. When used as a time-varying ODE \( x'  = \frac{1}{t}\bm{A}x + \frac{1}{t}\bm{B}u \), $x(t)$ represents the projection of the entire history of \( u \) onto the Legendre polynomials. It was previously unknown how to interpret the time-invariant version of this ODE using the same \( (\bm{A}, \bm{B}) \) matrices.
}
\label{fig:reconstruction-background}
\end{figure}

The main barrier to using \cref{prop:hippo-reconstruction} for function reconstruction is that
SSMs are in general not OSSMs.
For example, even though we will show that \eqref{eq:legs} is an TOSSM, its diagonalization is not.
\begin{proposition}%
  \label{prop:diag-not-ossm}
  There is no TOSSM with the diagonal state matrix \( \bm{A} = \diag\{-1, -2, \dots\} \).
\end{proposition}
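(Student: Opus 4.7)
My plan is to assume for contradiction that such a TOSSM exists and derive an impossibility directly from the orthonormality of its basis. The core intuition is that the SSM basis kernels associated with a negative diagonal $\bm{A}$ are all signed decaying exponentials on the same half-line, and no nonnegative weight can orthogonalize two such functions.

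First I would unpack the basis explicitly. Since $\bm{A} = \diag(-1,-2,\dots)$, the matrix exponential is $e^{\tau \bm{A}} = \diag(e^{-\tau}, e^{-2\tau}, \dots)$, so by \cref{def:ssm-basis} the SSM basis kernels are $K_n(\tau) = b_n e^{-(n+1)\tau}$ on $\tau \ge 0$, where $b_n := \bm{e}_n^\top \bm{B}$. If $(\bm{A},\bm{B})$ were a TOSSM with basis $p_n$ and nonnegative measure $\omega$, then \cref{prop:basis-uniqueness} yields the unique decomposition $K_n = p_n \omega$, and the orthonormality $\int p_n p_m \omega\, d\tau = \delta_{nm}$ rewrites (after multiplying and dividing by $\omega$) as
\begin{equation*}
b_n b_m \int_0^\infty \frac{e^{-(n+m+2)\tau}}{\omega(\tau)}\, d\tau \;=\; \delta_{nm}.
\end{equation*}

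The key step is a sign argument applied to the integral $I_{n,m} := \int_0^\infty e^{-(n+m+2)\tau}/\omega(\tau)\, d\tau$. On the support of $\omega$ the integrand is strictly positive almost everywhere, so $I_{n,m} \in (0,+\infty]$. The diagonal equations force $I_{n,n} < \infty$ for every $n$, and Cauchy--Schwarz then gives $I_{n,m} \le \sqrt{I_{n,n} I_{m,m}} < \infty$ uniformly. Thus for any $n \ne m$ the off-diagonal equation $b_n b_m I_{n,m} = 0$ combined with $I_{n,m} > 0$ forces $b_n b_m = 0$, so at most one coordinate of $\bm{B}$ is nonzero. But this contradicts the diagonal equations $b_n^2 I_{n,n} = 1$ as soon as the state dimension is at least two, yielding the desired contradiction.

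The only real obstacle is the measure-theoretic bookkeeping that makes the sign argument rigorous: one must check that the support of $\omega$ has positive Lebesgue measure (which is automatic, since otherwise the orthonormality integrals would all vanish), and that the pointwise ratio $p_n = K_n/\omega$ is meaningfully defined there. Once these mild technicalities are in place, the entire proof reduces to the single observation that products of distinct decaying exponentials on $[0,\infty)$ are strictly of one sign and so cannot integrate to zero against any nonnegative density.
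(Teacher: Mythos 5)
Your proof is correct and follows the same basic strategy as the paper's: write the kernels explicitly as \( K_n(t) = \bm{B}_n e^{-(n+1)t} \), recast orthonormality as \( \int K_n K_m \,\omega^{-1} = \delta_{nm} \), and derive a contradiction from the sign of the exponentials. The difference is only in how the contradiction is closed. The paper picks the two index pairs \( (n,m)=(1,1) \) and \( (0,2) \), which involve the \emph{same} integral \( \int e^{-4t}\omega(t)^{-1} \): the first equation forces that integral to be nonzero (and finite), and the second then forces \( \bm{B}_0\bm{B}_2 = 0 \), contradicting the standing non-degeneracy assumption \( \bm{B}_n \neq 0 \). You instead argue generically: strict positivity of the integrand on the support of \( \omega \) gives \( I_{n,m} > 0 \), Cauchy--Schwarz together with the diagonal equations gives finiteness, and then every off-diagonal equation forces \( b_n b_m = 0 \), which clashes with the diagonal equations forcing every \( b_n \neq 0 \) once the state dimension is at least two. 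Your version is slightly more self-contained, since you do not need to assume \( \bm{B} \) non-degenerate (the diagonal conditions already rule out \( b_n = 0 \)), at the cost of the extra positivity/finiteness bookkeeping; the paper's index-matching trick (exploiting \( 1+1 = 0+2 \)) buys a two-line contradiction with no such lemmas. Both arguments are sound.
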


HiPPO can be viewed as a framework for deriving specific SSMs that do satisfy \eqref{eq:basis}.
The original HiPPO methods and its generalizations \citep{gu2020hippo,gu2021lssl} primarily focused on the case when the \( p_n \) are orthogonal polynomials,
and specifically looked for solutions to \eqref{eq:basis}, which turn out to be SSMs.
We have rephrased the HiPPO definition in \cref{def:hippo} to start directly from SSMs.

We discuss the two most important cases previously introduced.

\looseness=-1
\para{HiPPO-LegT.}
\eqref{eq:legt} is a TOSSM that approximates the truncated Legendre polynomials (\cref{fig:legt}).
\looseness=-1

\begin{definition}
  Let \( \mathbbm{I}(t) \) be the indicator function for the unit interval \( [0, 1] \).
  We denote the Legendre polynomials rescaled to be orthonormal on \( [0, 1] \) as \( L_n(t) \), satisfying \( \int L_n(t) L_m(t) \mathbbm{I}(t) \dd t = \delta_{n, m} \).
\end{definition}

\begin{proposition}%
  \label{prop:legt}
  As \( N \to \infty \), the SSM with $(\bm{A},\bm{B})$ in \eqref{eq:legt} is a TOSSM with
  \begin{align*}
    \omega(t) = \mathbbm{I}(t) \qquad \qquad \qquad K_n(t) = L_n(t) \mathbbm{I}(t).
  \end{align*}
\end{proposition}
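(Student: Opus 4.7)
The plan is to verify the SSM by the following ansatz-and-uniqueness strategy: define $\tilde{x}_n(t) = \int_{t-1}^{t} L_n(t-s)\, u(s) \dd s$, show that this $\tilde{x}$ satisfies the ODE $\tilde{x}' = \bm{A}\tilde{x} + \bm{B} u$ for exactly the matrices in \eqref{eq:legt}, and then invoke uniqueness of the solution to an LTI state equation to conclude $K_n(t) = L_n(t)\mathbbm{I}(t)$. Since the $L_n$ are orthonormal on $[0,1]$ with respect to $\omega(t) = \mathbbm{I}(t)$, this matches \cref{def:hippo} and establishes the TOSSM claim.

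The core calculation would apply the Leibniz integral rule to the moving-limit integral $\tilde{x}_n$ to obtain
\[
\tilde{x}_n'(t) = L_n(0)\, u(t) \;-\; L_n(1)\, u(t-1) \;+\; \int_{t-1}^{t} L_n'(t-s)\, u(s) \dd s.
\]
I would fix the orthonormal convention on $[0,1]$ so that $L_n(0) = \sqrt{2n+1}$ and $L_n(1) = (-1)^n\sqrt{2n+1}$; the $u(t)$ term then immediately identifies $\bm{B}_n = \sqrt{2n+1}$. For the integral, I would expand the polynomial derivative via the classical identity $L_n'(\tau) = -2\sum_{k<n,\, n-k \text{ odd}} \sqrt{(2n+1)(2k+1)}\, L_k(\tau)$, which collapses the integral into $\sum_k c_{nk} \tilde{x}_k(t)$ with explicit coefficients.

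The main obstacle is the boundary term $-L_n(1)\, u(t-1)$, which is not a state variable. To close the ODE I would restrict to polynomial inputs $u$ of degree less than $N$, for which the reconstruction in \cref{prop:hippo-reconstruction} is exact at $s = t-1$, giving $u(t-1) = \sum_k (-1)^k \sqrt{2k+1}\, \tilde{x}_k(t)$. Substituting then contributes $-(-1)^{n-k}\sqrt{(2n+1)(2k+1)}$ to each $A_{nk}$, and summing with the $c_{nk}$ from the derivative expansion reproduces \eqref{eq:legt} in every case: for $k \ge n$ only the boundary term survives (yielding the $(-1)^{n-k}$ pattern), while for $k<n$ with $n-k$ odd the boundary term $+\sqrt{(2n+1)(2k+1)}$ and the derivative term $-2\sqrt{(2n+1)(2k+1)}$ combine to the desired $-\sqrt{(2n+1)(2k+1)}$.

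Finally, I would justify the $N\to\infty$ qualifier by a density argument: the derivation above is exact whenever $u \in \operatorname{span}\{L_0,\dots,L_{N-1}\}$, so for each $N$ the LegT SSM agrees with $\tilde{x}_n$ on this subspace. As $N \to \infty$ these subspaces are dense in $L^2$ on any compact window, and convolution with the bounded kernel $L_n\mathbbm{I}$ is continuous, so the kernel identification $K_n = L_n \mathbbm{I}$ extends to arbitrary inputs, completing the proof.
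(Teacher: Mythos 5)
Your proposal is correct and is essentially the paper's own argument (the instantiation of \cref{thm:gen-hippo-t0} via \cref{cor:hippo-gen-legt} with $\theta=1$): Leibniz differentiation of the windowed Legendre projection, the derivative recurrence supplying the strictly lower-triangular part of $\bm{A}$, the endpoint value supplying $\bm{B}$, and closure of the $u(t-1)$ boundary term through completeness/exact reconstruction as $N\to\infty$, which yields the rank-one correction producing the $(-1)^{n-k}$ pattern. The only cosmetic difference is your endpoint convention $L_n(0)=\sqrt{2n+1}$, i.e.\ the reflected polynomials $L_n(1-t)$ relative to the paper's normalization—exactly the discrepancy the paper itself notes and dismisses after \cref{cor:hippo-gen-legt}, since it leaves the measure and orthonormality unchanged.
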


\begin{figure}[!t]
\begin{subfigure}{.5\linewidth}%
  \centering
  \includegraphics[width=\linewidth]{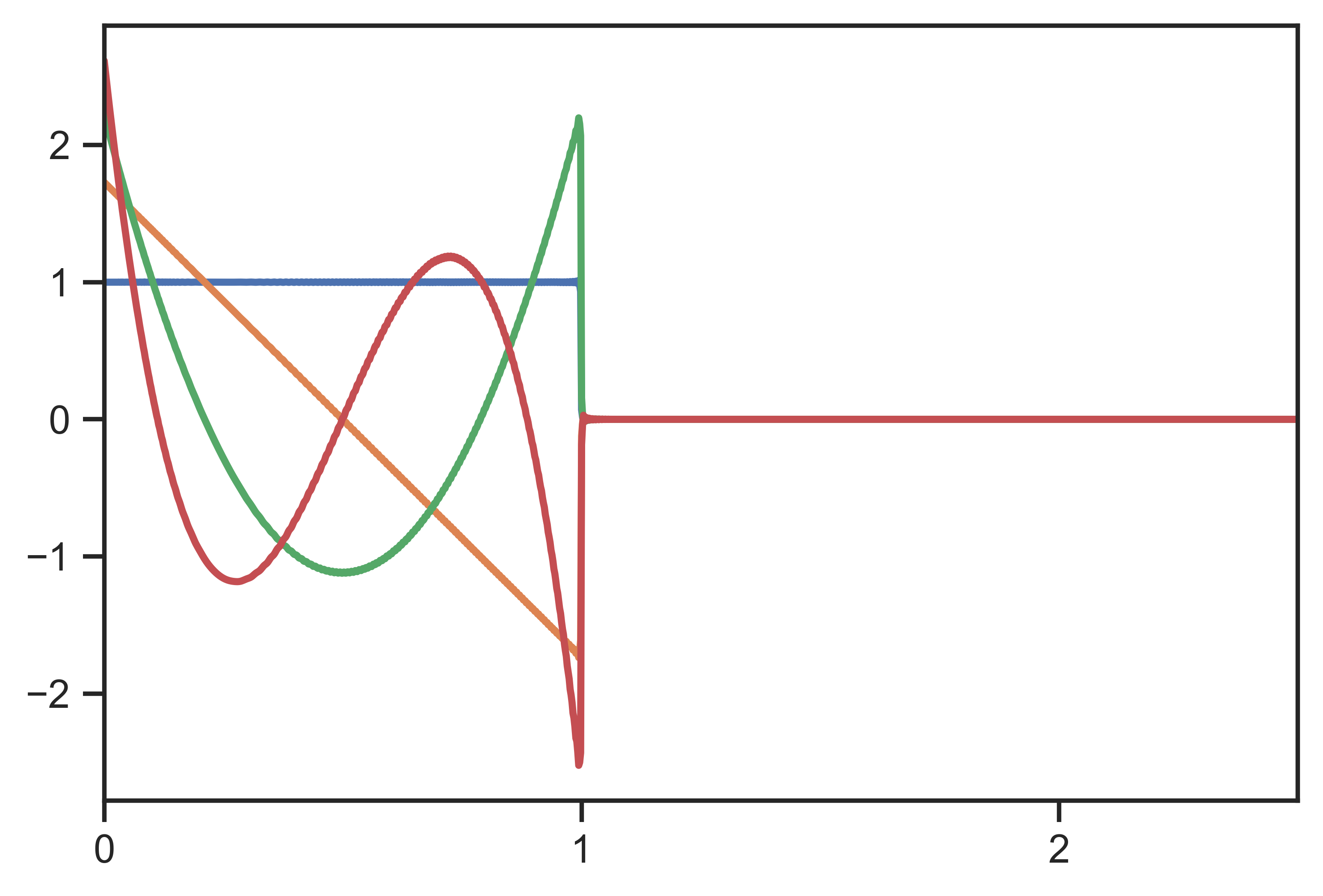}
\end{subfigure}
\begin{subfigure}{.5\linewidth}%
  \centering
  \includegraphics[width=\linewidth]{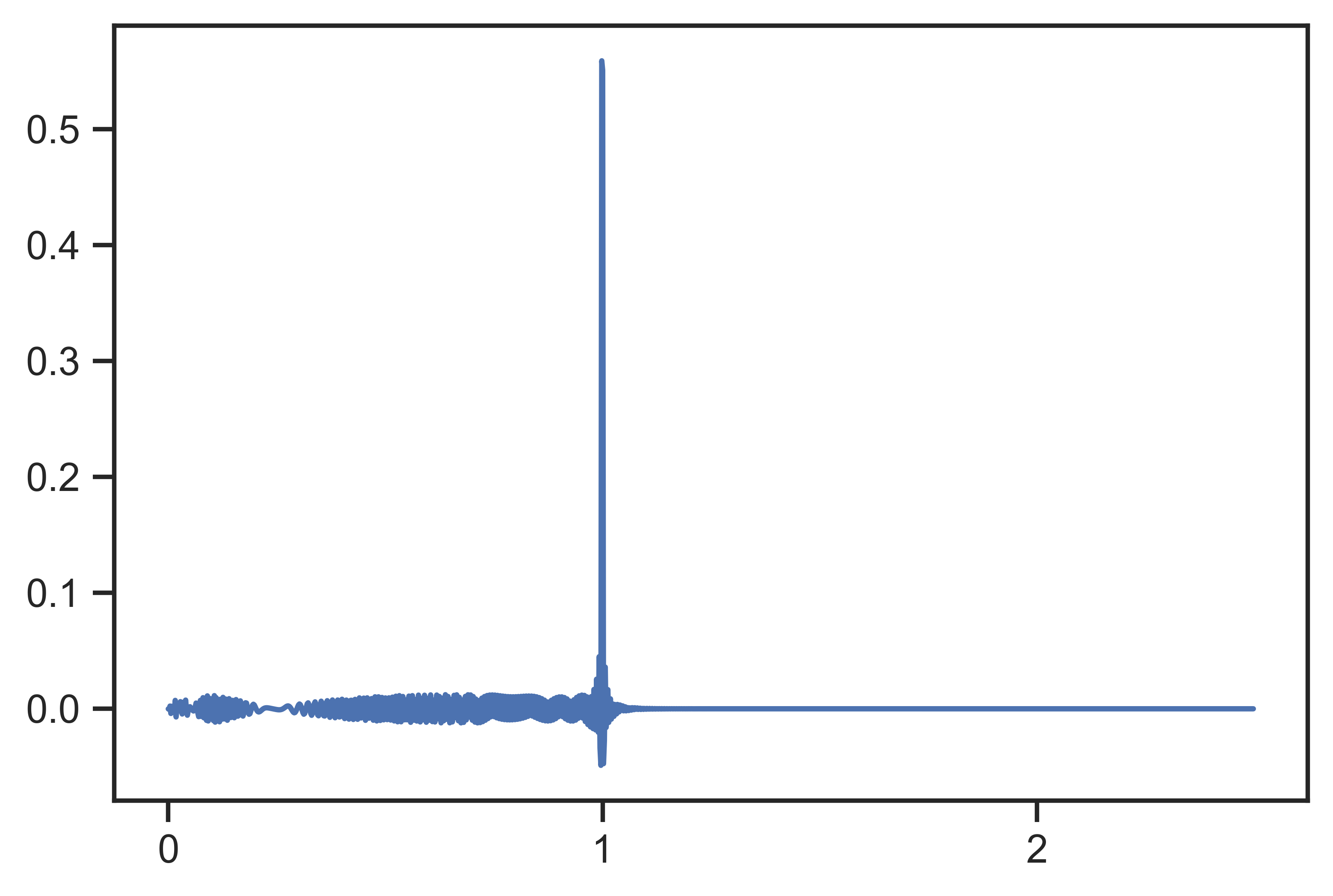}
\end{subfigure}
\caption{
  (\textbf{HiPPO-LegT.})
  (\emph{Left}) First 4 basis functions \( K_n(t) \) for state size \( N=1024 \) (\cref{prop:legt}).
  (\emph{Right}) Choosing a particular \( \bm{C} \) produces a spike kernel or ``delay network'' (\cref{thm:delay-legt}).
}
\label{fig:legt}
\end{figure}

This particular system was the precursor to HiPPO and has also been variously called the Legendre Delay Network (LDN) or Legendre Memory Unit (LMU) \citep{voelker2019dynamical,voelker2019legendre}.
The original motivation of this system was not through the online function approximation formulation of HiPPO, but through finding an optimal SSM approximation to the \textbf{delay network} that has impulse response \( K(t) = \delta(t-1) \) representing a time-lagged output by 1 time unit (\cref{fig:legt}).
We state and provide an alternate proof of this result in \cref{sec:hippo:finite}, \cref{thm:delay-legt}.

\para{HiPPO-LegS.}
Unlike the HiPPO-LegT case which is an LTI system \eqref{eq:ssm-1} (i.e. TOSSM),
the HiPPO-LegS matrix \eqref{eq:legs} was \emph{meant to be used in a time-varying system} \( x'(t) = \frac{1}{t} \bm{A} x(t) + \frac{1}{t} \bm{B} u(t) \) \citep{gu2020hippo}.
In contrast to HiPPO-LegT, which reconstructs onto the truncated Legendre polynomials in sliding windows \( [t-1, t] \), HiPPO-LegS reconstructs onto Legendre polynomials on ``scaled'' windows \( [0, t] \); since the window changes across time, the system is not time-invariant (\cref{fig:reconstruction-background}).

\begin{theorem}%
  \label{thm:hippo-legs}
  The SSM \((\frac{1}{t}\bm{A},\frac{1}{t}\bm{B})\) %
  for $(\bm{A},\bm{B})$ in \eqref{eq:legs} is an OSSM with
  \begin{align*}
    \omega(t, s) = \frac{1}{t}\cdot \mathbbm{I}(s/t) \qquad \qquad \qquad p_n(t, s) =  L_n(s / t).
  \end{align*}
\end{theorem}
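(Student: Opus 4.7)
The plan is to verify the two conditions required by \cref{def:hippo}: that the functions $p_n^{(t)}$ are orthonormal with respect to $\omega^{(t)}$, and that the state trajectory produced by the ODE $(\frac{1}{t}\bm{A},\frac{1}{t}\bm{B})$ agrees with $x_n(t)=\int_{-\infty}^t K_n(t,s) u(s)\dd s$ for $K_n(t,s)=p_n(t,s)\omega(t,s)=\frac{1}{t}L_n(s/t)\mathbbm{I}(s/t)$.

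Orthonormality is immediate. The measure $\omega^{(t)}$ is supported on $[0,t]$, and the substitution $v=s/t$ gives
\[\int_0^t L_n(s/t)L_m(s/t)\tfrac{1}{t}\dd s = \int_0^1 L_n(v)L_m(v)\dd v = \delta_{nm}\]
by the very definition of the shifted Legendre polynomials $L_n$.

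For the dynamical condition, I would define $x_n(t):=\int_0^t \frac{1}{t}L_n(s/t)u(s)\dd s$ and differentiate via Leibniz's rule. This produces a boundary term $\frac{1}{t}L_n(1)u(t)$ plus the integral of $\partial_t[\frac{1}{t}L_n(s/t)] = -\frac{1}{t^2}L_n(s/t) - \frac{s}{t^3}L_n'(s/t)$ against $u(s)$. Since the orthonormal shifted Legendre polynomials satisfy $L_n(1)=\sqrt{2n+1}$, the boundary term equals $\frac{1}{t}\bm{B}_n u(t)$ exactly. The first interior piece collapses immediately to $-\frac{1}{t}x_n(t)$, and the substitution $v=s/t$ turns the remaining piece into $-\frac{1}{t}\int_0^1 v L_n'(v)u(tv)\dd v$. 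The same substitution gives $\int_0^1 L_k(v)u(tv)\dd v = x_k(t)$, so everything reduces to expanding $vL_n'(v)$ in the basis $\{L_k\}_{k\leq n}$.

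This final expansion is where the specific structure of the LegS matrix appears, and I expect it to be the main obstacle. Writing $L_n(v)=\sqrt{2n+1}\,P_n(2v-1)$ and applying the classical Legendre identities $xP_n'(x)=nP_n(x)+P_{n-1}'(x)$ and $P_n'(x)=\sum_{k<n,\,n-k\text{ odd}}(2k+1)P_k(x)$, I would note that the parities of the derivative sums for $P_n'$ and $P_{n-1}'$ complement each other, so that $P_n'(x)+P_{n-1}'(x)=\sum_{k=0}^{n-1}(2k+1)P_k(x)$. Translating back to the shifted normalization yields
\[vL_n'(v) = n L_n(v) + \sum_{k=0}^{n-1}\sqrt{(2n+1)(2k+1)}\,L_k(v).\]
Substituting this into the computation above gives $x_n'(t)=-\frac{n+1}{t}x_n(t)-\frac{1}{t}\sum_{k<n}\sqrt{(2n+1)(2k+1)}\,x_k(t)+\frac{\sqrt{2n+1}}{t}u(t)$, which matches entry-by-entry the ODE $x'=\frac{1}{t}\bm{A}x+\frac{1}{t}\bm{B}u$ for the LegS matrices in \eqref{eq:legs}. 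Uniqueness of the linear ODE's solution (with $x(0)=0$) then identifies this trajectory with the integral representation, completing the proof.
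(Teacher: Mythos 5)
Your proposal is correct: the orthonormality check, the Leibniz-rule differentiation, the boundary term $\tfrac{1}{t}L_n(1)u(t)$, and the expansion $vL_n'(v)=nL_n(v)+\sqrt{2n+1}\sum_{k<n}\sqrt{2k+1}\,L_k(v)$ all check out, and the resulting coefficients do match \eqref{eq:legs} entry-by-entry (diagonal $-(n+1)$, subdiagonal $-\sqrt{(2n+1)(2k+1)}$, $\bm{B}_n=\sqrt{2n+1}$). The route differs from the paper's only in packaging: the paper does not prove \cref{thm:hippo-legs} directly but obtains it by instantiating its general time-warped framework (\cref{thm:gen-hippo-t0} with warping $\sigma(t,s)=\exp(a(s)-a(t))$, trivial tilting, and $a(t)=\ln t$ so $\sigma=s/t$, via \cref{cor:exp-twf} and \cref{cor:scale_inv_hippo_legs}), whereas you unroll exactly that computation for the special case — the same Leibniz differentiation through the projection integral and the same two Legendre facts, \eqref{eq:leg-comb} and \eqref{eq:leg-val-at+-1}, which you rederive from the classical identities $xP_n'=nP_n+P_{n-1}'$ and $P_n'=\sum_{n-k\ \mathrm{odd}}(2k+1)P_k$ just as the paper's lemma does. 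Your direct argument is more elementary and self-contained (essentially the original HiPPO-style derivation); the paper's detour through the general theorem buys the other instantiations (\cref{cor:legs}, \cref{cor:legs-time}, and the finite-window cases) at no extra cost. One small remark: strictly, both arguments show the integral formula satisfies the ODE, and the identification with the SSM state needs an appeal to uniqueness of solutions; your invocation of $x(0)=0$ is at least as careful as the paper on this point, though the $\tfrac{1}{t}$ singularity at $t=0$ makes that initial condition slightly delicate and is glossed over in both treatments.
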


However, the S4 model applies the exact same formula \eqref{eq:legs} inside the \emph{time-invariant} SSM \eqref{eq:ssm-1}, i.e. dropped the \( \frac{1}{t} \) term, which had no mathematical interpretation.
In other words, while \( (\frac{1}{t}\bm{A},\frac{1}{t}\bm{B}) \) is an OSSM, it was not known whether the TSSM \( (\bm{A}, \bm{B}) \) is a TOSSM.
Given that the performance of SSM models is very sensitive to these matries \( \bm{A} \) \citep{gu2022efficiently,gupta2022diagonal}, it remained a mystery why this works.
In \cref{sec:hippo} we will prove that \eqref{eq:legs} actually does correspond to a TOSSM.

\para{LSSL.}
While HiPPO originally showed just the above two cases involving Legendre polynomials (and another case called LagT for Laguerre polynomials, which will not be a focus of this work),
follow-up work showed that there exist OSSMs corresponding to all families of orthogonal polynomials \( \{p_n(t)\} \).
Our more general framework will also subsume these results.

\looseness=-1
\para{Naming convention.}
We use HiPPO-[SSM] to refer to a fixed OSSM \( (\bm{A}, \bm{B}) \) suitable for online function approximation, where [SSM] is a suffix (e.g. LegS, LegT) that abbreviates the corresponding basis functions (e.g. scaled Legendre, truncated Legendre).
S4-[SSM] refers to the corresponding trainable layer \( (\bm{A}, \bm{B}, \bm{C}) \) with randomly initialized \( \bm{C} \), trained with S4's representation and computational algorithm \citep{gu2022efficiently}.
\looseness=-1

\section{Generalized HiPPO: General Orthogonal Basis Projections}
\label{sec:hippo}

\looseness=-1
In \cref{sec:hippo:legs}, we prove that the LTI HiPPO-LegS is actually a TOSSM and show closed formulas for its basis functions.
In \cref{sec:hippo:finite}, we include more specific results on finite-window SSMs, including introducing a new method HiPPO-FouT based on truncated Fourier functions, and proving previously established conjectures.
\cref{sec:hippo:timescale} shows more general properties of TOSSMs, which establish guidelines for interpreting and initializing SSM parameters such as the timescale \( \dt \).

Our main, fully general, result is \cref{thm:gen-hippo-t0} in \cref{sec:proofs:general}, which describes a very general way to derive OSSMs for various SSM basis functions \( K_n(t, s) \).
This result can be instantiated in many ways to generalize all previous results in this line of work.

\subsection{Explanation of S4-LegS}
\label{sec:hippo:legs}

We show the matrices \( (\bm{A}, \bm{B}) \) in \eqref{eq:legs}
are deeply related to the Legendre polynomials \( L_n \) defined in \cref{thm:hippo-legs}.

\begin{corollary}%
  \label{cor:legs}
  Define \( \sigma(t,s) = \exp(a(s) - a(t)) \) for any differentiable function \( a \). %
  The SSM \( (a'(t) \bm{A}, a'(t) \bm{B} )\)
  is an OSSM with
  \begin{align*}
    \omega(t,s) = \mathbbm{I}(\sigma(t,s)) a'(s)\sigma(t,s) \qquad \qquad     p_n(t,s) = L_n(\sigma(t,s)).
  \end{align*}
\end{corollary}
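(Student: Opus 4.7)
The key observation is that the original HiPPO-LegS in Theorem~\ref{thm:hippo-legs} corresponds to the specialization $a(t) = \log t$, since then $\sigma(t,s) = e^{\log s - \log t} = s/t$ and $a'(t) = 1/t$, recovering the SSM $(\tfrac{1}{t}\bm{A}, \tfrac{1}{t}\bm{B})$ with measure $\tfrac{1}{t}\mathbbm{I}(s/t)$ and basis $L_n(s/t)$. The plan is therefore to derive Corollary~\ref{cor:legs} from Theorem~\ref{thm:hippo-legs} by an appropriate time reparameterization $\tau = e^{a(t)}$, essentially a ``change of clock'' that warps the scaled-window LegS construction into one with an exponentially-reweighted window.

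The first step is to lift the ODE. Let $\tilde{x}(\tau)$ solve the LegS dynamics $\tilde{x}'(\tau) = \tfrac{1}{\tau}\bm{A}\tilde{x}(\tau) + \tfrac{1}{\tau}\bm{B}\tilde{u}(\tau)$ and define $x(t) := \tilde{x}(e^{a(t)})$ and $u(t) := \tilde{u}(e^{a(t)})$. By the chain rule,
\[
  x'(t) = a'(t)\, e^{a(t)}\, \tilde{x}'(e^{a(t)}) = a'(t)\,\bm{A}\,x(t) + a'(t)\,\bm{B}\,u(t),
\]
which is exactly the SSM appearing in the corollary. Thus it suffices to transport the kernel and orthogonality assertions of Theorem~\ref{thm:hippo-legs} through this substitution.

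The second step is to read off the basis kernels. Theorem~\ref{thm:hippo-legs} gives $\tilde{x}_n(\tau) = \int_0^{\tau} L_n(\sigma/\tau)\,\tilde{u}(\sigma)\,\tfrac{1}{\tau}\,d\sigma$. Substituting $\sigma = e^{a(s)}$, so $d\sigma = a'(s)\,e^{a(s)}\,ds$ and $\sigma/\tau = e^{a(s)-a(t)} = \sigma(t,s)$, yields
\[
  x_n(t) = \int L_n(\sigma(t,s))\,a'(s)\,\sigma(t,s)\,\mathbbm{I}(\sigma(t,s))\,u(s)\,ds,
\]
where the indicator records that the original integration range $\sigma \in [0,e^{a(t)}]$ corresponds to $\sigma(t,s) \in [0,1]$. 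Matching this against $K_n(t,s) = p_n(t,s)\,\omega(t,s)$ identifies $p_n$ and $\omega$ as claimed.

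The third step is to verify orthogonality by the same change of variables $\xi = \sigma(t,s)$, under which $d\xi = a'(s)\,\sigma(t,s)\,ds$:
\[
  \int p_n(t,s)\,p_m(t,s)\,\omega(t,s)\,ds = \int_0^1 L_n(\xi)\,L_m(\xi)\,d\xi = \delta_{n,m},
\]
using the defining orthonormality of $L_n$ on $[0,1]$. The main subtlety to be careful about is the indicator and the direction of integration: the cleanest hypothesis is that $a$ is strictly increasing, so that $s \mapsto \sigma(t,s)$ is a bijection from $(-\infty,t]$ (or the domain of $a$) onto $(0,1]$ and the substitutions above are unambiguous; when $a$ is merely differentiable, the indicator $\mathbbm{I}(\sigma(t,s))$ still carves out the correct support, but one must either invoke monotonicity on each piece or treat the reparameterization piecewise. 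This bookkeeping around $\mathbbm{I}$ is the only genuine obstacle; everything else is a direct pullback of Theorem~\ref{thm:hippo-legs}.
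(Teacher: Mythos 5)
Your proposal is correct in substance, but it takes a genuinely different route from the paper. The paper proves \cref{cor:legs} by instantiating its general time-warped HiPPO framework: it specializes \cref{thm:gen-hippo-t0} to \cref{cor:exp-twf} (with $\omega=1$, $\eta=0$, $\discont=-\infty$), verifies the differentiation conditions \eqref{eq:twf-partial-sigs}, \eqref{eq:tilting-phi}, \eqref{eq:cpw-linear} via the Leibniz rule, and then uses the Legendre identities \eqref{eq:leg-comb} and \eqref{eq:leg-val-at+-1} to recognize the resulting matrices as exactly \eqref{eq:legs}; the scale-invariant statement \cref{thm:hippo-legs} is then recovered \emph{from} \cref{cor:legs} by setting $a(t)=\log t$. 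You invert this logical order: you take \cref{thm:hippo-legs} as given and pull it back through the reparameterization $\tau=e^{a(t)}$, using the chain rule on the ODE and a change of variables in both the projection integral and the orthogonality integral. Your computations are right, and the argument is shorter and more elementary --- it never touches the Leibniz-rule machinery or the Legendre recurrences, since those are inherited from the special case. What it buys less of: within this paper \cref{thm:hippo-legs} is itself deduced from \cref{cor:legs} (via \cref{cor:exp-twf}), so to avoid circularity you must lean on the independent proof of \cref{thm:hippo-legs} in \citep{gu2020hippo}; and your route does not produce the general framework the paper needs for the non-Legendre and finite-window cases (FouT, LegT). Two points deserve explicit care rather than a passing remark: (i) to handle an \emph{arbitrary} input $u$ you must define $\tilde u$ by inverting $s\mapsto e^{a(s)}$, so you genuinely need $a$ strictly increasing with $a(s)\to-\infty$ as $s\to-\infty$ (you flag this; the paper's ``any differentiable $a$'' is loose, and its own change-of-variables in \cref{lem:new-ortho} implicitly needs the same monotonicity, so this is a shared caveat rather than a gap); (ii) the identification $x(t)=\tilde x(e^{a(t)})$ as \emph{the} state of the new SSM tacitly uses uniqueness of the ODE solution with the matching boundary condition ($\tilde x(0^+)=0$ corresponding to $x(t)\to 0$), which you should state in one line, though the paper is no more careful about initial conditions in \cref{def:hippo}.
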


As more specific corollaries of \cref{cor:legs}, we recover both the original time-varying interpretation of the matrix in  \eqref{eq:legs},
as well as the instantiation of LegS as a time-invariant system.
If we set \( a'(t) = \frac{1}{t} \), then we recover the scale-invariant HiPPO-LegS OSSM in \cref{thm:hippo-legs},
\begin{corollary}[Scale-Invariant HiPPO-LegS, \cref{thm:hippo-legs}]%
  The SSM
  \( ( \frac{1}{t}\bm{A} , \frac{1}{t}\bm{B}) \) is a TOSSM for basis functions  \( K_n(t) = \frac{s}{t}L_n(\frac{s}{t})  \) and measure \( \omega =\frac{1}{t} \mathbb{I}[0,1] \) where 
   \(\bm{A}  \) and \(\bm{B}\) are defined as in \eqref{eq:legs}.
\end{corollary}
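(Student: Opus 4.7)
The plan is to obtain this as an immediate specialization of \cref{cor:legs} by choosing the right ``time-warping'' function $a(t)$. Specifically, I will take $a(t) = \ln t$ (defined on $t > 0$), so that $a'(t) = 1/t$, and substitute into the formulas provided by \cref{cor:legs}.

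First I would verify the state-matrix identification: by \cref{cor:legs} the OSSM is $(a'(t)\bm{A}, a'(t)\bm{B})$, which with $a(t) = \ln t$ becomes exactly $(\tfrac{1}{t}\bm{A}, \tfrac{1}{t}\bm{B})$, matching the statement. Next I would compute the warping factor
\[
\sigma(t,s) = \exp(a(s) - a(t)) = \exp(\ln s - \ln t) = \tfrac{s}{t},
\]
from which the claimed basis follows instantly:
\[
p_n(t,s) = L_n(\sigma(t,s)) = L_n\!\left(\tfrac{s}{t}\right).
\]
Finally I would plug into the measure formula from \cref{cor:legs}:
\[
\omega(t,s) = \mathbbm{I}(\sigma(t,s))\, a'(s)\, \sigma(t,s) = \mathbbm{I}\!\left(\tfrac{s}{t}\right) \cdot \tfrac{1}{s} \cdot \tfrac{s}{t} = \tfrac{1}{t}\,\mathbbm{I}\!\left(\tfrac{s}{t}\right),
\]
which (for $t > 0$) coincides with $\tfrac{1}{t}\mathbbm{I}[0,1]$ viewed as a measure in $s$, i.e.\ the measure stated in the corollary. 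The corresponding kernels are then $K_n(t,s) = p_n(t,s)\omega(t,s) = \tfrac{1}{t} L_n(s/t)\mathbbm{I}(s/t)$, recovering the expression quoted in the statement (up to the rewriting of the indicator as the window $s \in [0,t]$).

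There is essentially no obstacle here beyond being careful with the domain $t > 0$ (so that $\ln t$ and $1/t$ are well-defined) and with the convention that $\mathbbm{I}(s/t)$ restricts $s$ to $[0,t]$. All analytic content, namely the orthogonality of the Legendre polynomials under the warp $s \mapsto s/t$ and the verification that the associated kernels satisfy the OSSM definition \eqref{eq:basis}, is already handled inside the proof of \cref{cor:legs}; the present corollary is purely a change-of-variables instantiation. For completeness I would remark that an alternative, equally short route is to specialize the fully general \cref{thm:gen-hippo-t0} directly with the same warping $a(t) = \ln t$, which reproduces the original time-varying HiPPO-LegS derivation of \cref{thm:hippo-legs} and makes the logical dependency on the generalized framework transparent.
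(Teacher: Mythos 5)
Your proposal is correct and takes essentially the same route as the paper: the paper obtains this corollary precisely by instantiating \cref{cor:legs} (itself a specialization of the general warped-basis framework, worked out in the appendix via \cref{cor:exp-twf} with $\omega=1$, $\eta=0$) at $a'(t)=\tfrac{1}{t}$, i.e.\ $a(t)=\ln t$, giving $\sigma(t,s)=s/t$, basis $L_n(s/t)$, and measure $\tfrac{1}{t}\mathbbm{I}(s/t)$, exactly as you compute. The only point worth flagging is cosmetic: your kernels $K_n(t,s)=\tfrac{1}{t}L_n(s/t)\mathbbm{I}(s/t)$ agree with \cref{thm:hippo-legs}, and the corollary's printed expression $\tfrac{s}{t}L_n(\tfrac{s}{t})$ (as well as the label ``TOSSM'' for this time-varying system) is a misprint in the statement rather than a gap in your argument.
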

And if \( a'(t) = 1 \), this shows a new result for the time-invariant HiPPO-LegS TOSSM:
\begin{corollary}[Time-Invariant HiPPO-LegS]%
  \label{cor:legs-time}
 The SSM \( ( \bm{A},  \bm{B} )\) is a TOSSM with
 \[\omega(t)= e^{-t} \qquad\qquad\qquad p_n(t)=  L_n(e^{-t}).  \] 
\end{corollary}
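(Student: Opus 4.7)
The plan is to obtain this as an immediate specialization of \cref{cor:legs} by choosing $a(t) = t$. With this choice $a'(t) = 1$, so the SSM $(a'(t)\bm{A}, a'(t)\bm{B})$ in the statement of \cref{cor:legs} collapses to exactly $(\bm{A}, \bm{B})$, matching the LTI system claimed in the corollary. Then I would compute the corresponding warping function, measure, and basis by straightforward substitution and verify that these expressions depend only on the time difference $t-s$, which is what promotes the OSSM of \cref{cor:legs} to a \emph{time-invariant} OSSM in the sense of \cref{def:hippo}.

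Concretely, I would first substitute $a(s)-a(t) = s-t$ to get $\sigma(t,s) = e^{s-t}$. Writing $\tau = t-s \ge 0$, the prescription of \cref{cor:legs} gives
\begin{align*}
  \omega(t,s) &= \mathbbm{I}(e^{s-t})\, a'(s)\, e^{s-t} = \mathbbm{I}(e^{-\tau})\, e^{-\tau}, \\
  p_n(t,s) &= L_n(e^{s-t}) = L_n(e^{-\tau}).
\end{align*}
Since $e^{-\tau} \in [0,1]$ exactly when $\tau \ge 0$, the indicator simply restricts to the causal region $s \le t$ that is already built into \cref{def:hippo} (the integral is taken up to $t$). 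On that region both $\omega$ and $p_n$ are functions of $\tau = t-s$ alone, so I may unambiguously write $\omega(t) = e^{-t}$ and $p_n(t) = L_n(e^{-t})$ as in the statement, and the kernels $K_n(t-s) = p_n(t-s)\omega(t-s)$ depend only on $t-s$.

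Finally, I would note that the time-invariance is exactly what is required by the TOSSM clause of \cref{def:hippo}, and that orthonormality of the $p_n^{(t)}$ with respect to $\omega^{(t)}$ is inherited directly from \cref{cor:legs}, which already establishes that $\{L_n(\sigma(t,\cdot))\}$ are orthonormal in the weighted $L^2$ space. Thus no additional calculation is needed: it suffices to (i) instantiate $a(t)=t$, (ii) unpack the formulas, and (iii) observe translation invariance in $t-s$.

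There is essentially no technical obstacle here; the entire content is packed into \cref{cor:legs}. The only subtlety worth being careful about is the direction of the warping: one must check that with $a(t)=t$ (rather than, say, $a(t)=-t$) the image of $s \le t$ under $\sigma$ lands in the unit interval $[0,1]$, so that the indicator $\mathbbm{I}(\sigma(t,s))$ does not trivially vanish and the Legendre orthogonality on $[0,1]$ is actually being invoked on the correct domain. This is the one sign/range check I would perform explicitly before declaring the proof complete.
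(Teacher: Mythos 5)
Your proposal is correct and follows essentially the same route as the paper: the main text presents \cref{cor:legs-time} precisely as the $a'(t)=1$ specialization of \cref{cor:legs} (equivalently, of the exponential-warping machinery in \cref{sec:proofs:legs}), yielding $\sigma(t,s)=e^{s-t}$, measure $\mathbbm{I}(e^{s-t})e^{s-t}$, and basis $L_n(e^{s-t})$, after which time-invariance follows because everything depends only on $t-s$. Your explicit check that $s\le t$ maps into $[0,1]$ under this warping is the right sanity check and matches the paper's argument.
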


This explains why removing the \( \frac{1}{t} \) factor from HiPPO-LegS still works: it is orthogonalizing onto the Legendre polynomials with an exponential ``warping'' or change of basis on the time axis (\cref{fig:1}).

\subsection{Finite Window Time-Invariant Orthogonal SSMs}
\label{sec:hippo:finite}

\looseness=-1
For the remainder of this section, we restrict to the time-invariant SSM setting \eqref{eq:ssm-convolution}.
 A second important instantiation of \cref{thm:gen-hippo-t0} covers cases with a discontinuity in the SSM basis functions \( K_n(t) \),
which requires infinite-dimensional SSMs to represent.
The most important type of discontinuity occurs when \( K_n(t) \) is supported on a finite window, so that these TSSMs represent sliding window transforms.

We first derive a new sliding window transform based on the widely used Fourier basis (\cref{sec:fout}).
We also prove results relating finite window methods to delay networks (\cref{sec:delay})

\subsubsection{S4-FouT}
\label{sec:fout}

\looseness=-1
Using the more general framework (\cref{thm:gen-hippo-t0}) that does not necessarily require polynomials as basis functions, we derive a TOSSM that projects onto \textbf{truncated Fourier functions}.

\begin{theorem}%
  \label{thm:hippo-fourier}
  As \( N \to \infty \), the SSM for \eqref{eq:fout} is a TOSSM with \(\omega(t) = \mathbbm{I}(t),\)
  and \( \{p_n\}_{n\ge 1} \) are the truncated Fourier basis functions  orthonormal on \( [0, 1] \), ordered in the form \(\{p_n\}_{n \ge 0} = (1, c_0(t), s_0(t), \ldots)\), where 
 \(s_m(t) = \sqrt{2}\sin{(2\pi mt)}\) and \( c_m(t) = \sqrt{2}\cos{(2\pi mt)} \text{ for } m = 0, \ldots, N/2 .\)
\end{theorem}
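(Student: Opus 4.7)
The plan is to verify directly that the integral state $x_n(t) = \int_{-\infty}^t K_n(t-s) u(s)\, ds$ for $K_n(t) = p_n(t)\mathbbm{I}(t)$, where $p_n$ ranges over the orthonormal Fourier basis on $[0,1]$, satisfies $x'(t) = \bm{A} x(t) + \bm{B} u(t)$ with $\bm{A}, \bm{B}$ given by \eqref{eq:fout}. First, I would record two easy preliminaries: the $p_n$ are orthonormal with respect to $\omega = \mathbbm{I}$ by standard Fourier analysis on $[0,1]$, so the orthogonality condition in \cref{def:hippo} holds for free; and since $\mathrm{supp}(K_n) \subseteq [0,1]$, the state simplifies to a sliding-window integral $x_n(t) = \int_{t-1}^{t} p_n(t-s) u(s)\, ds$.

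The next step is to differentiate under the integral using Leibniz's rule,
\[
x_n'(t) = p_n(0) u(t) - p_n(1) u(t-1) + \int_{t-1}^{t} p_n'(t-s) u(s)\, ds,
\]
and use 1-periodicity ($p_n(0) = p_n(1)$, equal to $1$, $\sqrt{2}$, or $0$ for constant, cosine, sine respectively) together with the derivative identities $c_m' = -2\pi m\, s_m$ and $s_m' = 2\pi m\, c_m$. This produces three cases: $x_0' = u(t) - u(t-1)$; $x_{c_m}' = \sqrt{2}(u(t) - u(t-1)) - 2\pi m\, x_{s_m}$; and $x_{s_m}' = 2\pi m\, x_{c_m}$. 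The sine equations already match \eqref{eq:fout} with $B_n = 0$, so only the constant and cosine equations need further work: they contain a nonlocal term $u(t-1)$ that must be eliminated in favor of the current state and $u(t)$.

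The crux is to eliminate $u(t-1)$ using completeness of the Fourier basis in the $N \to \infty$ limit. For sufficiently regular $u$, the Fourier expansion of $u(t-\cdot)$ on the window $[0,1]$ is $u(t-s) = \sum_n x_n(t) p_n(s)$ for $s \in (0,1)$, and evaluating at the boundary $s=0$ (equivalently $s=1$) where the periodic extension has a jump from $u(t-1)$ to $u(t)$, Dirichlet's pointwise convergence theorem gives
\[
\sum_n x_n(t)\, p_n(0) = x_0(t) + \sqrt{2}\sum_m x_{c_m}(t) = \tfrac{1}{2}\bigl(u(t) + u(t-1)\bigr).
\]
Hence $u(t-1) = -u(t) + 2 x_0(t) + 2\sqrt{2}\sum_m x_{c_m}(t)$. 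Substituting into the two remaining equations gives $x_0' = 2u - 2 x_0 - 2\sqrt{2} \sum_m x_{c_m}$ and $x_{c_m}' = 2\sqrt{2}\, u - 2\sqrt{2}\, x_0 - 4\sum_j x_{c_j} - 2\pi m\, x_{s_m}$, which I would then read off against \eqref{eq:fout} entry by entry (with the constant basis at index $0$, cosines at odd indices, and sines at even indices).

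The main obstacle is the rigorous handling of the $N \to \infty$ limit: the appeal to Dirichlet's theorem at the jump means the series converges to the average of the one-sided limits rather than to either of $u(t), u(t-1)$ individually, so one must impose enough regularity on $u$ (e.g.\ bounded variation or piecewise smoothness) to make the substitution valid. An arguably cleaner alternative would be to invoke the general \cref{thm:gen-hippo-t0} from \cref{sec:proofs:general} with the truncated Fourier basis as input, which is designed to absorb precisely this kind of boundary/discontinuity analysis for finite-window TOSSMs into a single framework.
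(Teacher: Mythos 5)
Your proposal is correct and is, in substance, the paper's own argument: the paper routes the Leibniz-rule computation through its general framework (\cref{thm:gen-hippo-t0} via \cref{cor:hippo-gen-fin}, with warping $\sigma(t,s)=1-(t-s)$ and trivial tilting), but the content is exactly your three cases, and the crucial identity you derive, $u(t-1) = 2x_0(t) + 2\sqrt{2}\sum_m x_{c_m}(t) - u(t)$, is precisely the paper's equation \eqref{eq:fou-utm1} used to instantiate \eqref{eq:ut-approx} with $\overline{c}=1$, $\overline{d}=-1$. The only genuine difference is how the jump-average identity $\sum_n x_n(t)p_n(0) = \tfrac{1}{2}(u(t)+u(t-1))$ is justified: you cite Dirichlet's pointwise convergence theorem at the discontinuity of the periodized window (hence your regularity caveat on $u$), whereas the paper proves it self-containedly by a symmetrization trick, introducing $v(s)=u(2t-s-1)$ and $w=\tfrac{1}{2}(u+v)$, observing that $u$ and $v$ share cosine coefficients and have negated sine coefficients, that the sine components vanish at the endpoints so $\hat{u}(t,t)=\hat{v}(t,t)=\hat{w}(t,t)$, and that $w$ is continuous and periodic so its reconstruction converges. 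Both are standard Fourier-analysis facts with comparable (and in the paper likewise only implicitly stated) regularity hypotheses, so your appeal to Dirichlet is an acceptable substitute; your closing remark that one could instead invoke \cref{thm:gen-hippo-t0} is exactly the packaging the paper chooses. The only loose end in your write-up is the bookkeeping of indices against \eqref{eq:fout} (constant at index $0$, cosines at odd, sines at even indices), which you defer to an entry-by-entry check; carried out, it matches, up to the same mild index-versus-frequency sloppiness already present in the paper's statement of \eqref{eq:fout}.
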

This SSM corresponds to Fourier series decompositions, a ubiquitous tool in signal processing, but represented as a state space model.
The basis is visualized in \cref{fig:1} (middle) for state size \( N=1024 \).

A benefit of using these well-behaved basis functions is that we can leverage classic results from Fourier analysis.
For example, it is clear that taking linear combinations of the truncated Fourier basis can represent any function on $[0, 1]$,
and thus S4-FouT can represent any local convolution (i.e. the layers of modern convolutional neural networks).

\begin{theorem}%
  \label{thm:fourier-kernel-approx}
  Let $K(t)$ be a differentiable kernel on $[0,1]$, and let $\hat{K}(t)$ be its representation by the FouT system (Theorem \ref{thm:hippo-fourier}) with state size $N.$ If $K$ is $L-$Lipschitz, then for $\epsilon > 0, N \geq \paren{\frac{L}{\pi\epsilon}}^{2}+2$, we have $\lVert{K(t)-\hat{K}(t)}\rVert \leq \epsilon.$ If $K$ has $k-$derivatives bounded by $L$, then we can take $N \geq \paren{\frac{L}{\pi^k\epsilon}}^{\frac{2}{2k-1}}+2.$ 
\end{theorem}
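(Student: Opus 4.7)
The plan is to reduce everything to a standard Fourier series approximation argument. By the preceding theorem, the basis functions $K_n(t)$ of the FouT system with state size $N$ are exactly the orthonormal truncated Fourier basis $\{1, \sqrt{2}\cos(2\pi m t), \sqrt{2}\sin(2\pi m t)\}$ on $[0,1]$ for $m = 1,\dots, N/2$, times $\mathbbm{I}(t)$. Since $\hat{K}$ is the representation of $K$ in this basis with the coefficients $\bm{C}_n$ chosen to match, $\hat{K}$ is precisely the orthogonal projection of $K$ (viewed as an $L^2[0,1]$ function) onto the first $N$ Fourier modes, i.e. the truncated Fourier partial sum up to frequency $\lfloor N/2\rfloor$.

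Given this identification, the task reduces to bounding the $L^2$ tail of the Fourier series of $K$. First I would bound the Fourier coefficients using integration by parts: if $K$ is $L$-Lipschitz (so $|K'|\le L$ a.e.), then for the complex exponential coefficients $c_m = \int_0^1 K(t) e^{-2\pi i m t}\,dt$ with $m\ne 0$, integration by parts gives $|c_m| = \bigl|\tfrac{1}{2\pi i m}\int_0^1 K'(t) e^{-2\pi i m t}\,dt\bigr| \le L/(2\pi |m|)$. By Parseval (and absorbing sine/cosine pairs into the same frequency), the squared $L^2$ error is
\begin{equation*}
  \lVert K - \hat K \rVert_2^2 \;=\; \sum_{|m|>N/2} |c_m|^2 \;\le\; 2 \sum_{m>N/2} \frac{L^2}{4\pi^2 m^2} \;\le\; \frac{L^2}{\pi^2}\int_{N/2}^{\infty}\!\frac{dx}{x^2} \;=\; \frac{2L^2}{\pi^2 N}.
\end{equation*}
Accounting for the two DC/lowest modes that are kept explicitly (which contributes the $+2$ offset, since only $N-2$ slots are available for the sine/cosine pairs at frequencies $m\ge 1$), solving $2L^2/[\pi^2(N-2)] \le \epsilon^2$ yields $N \ge (L/(\pi\epsilon))^2 + 2$, which matches the stated bound.

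For the second claim, I would iterate integration by parts $k$ times: the $k$-th derivative bound $|K^{(k)}|\le L$ together with $k$ integrations by parts gives $|c_m| \le L/(2\pi|m|)^k$ for $m\ne 0$. Summing the tail,
\begin{equation*}
  \lVert K - \hat K\rVert_2^2 \;\le\; 2\sum_{m>N/2} \frac{L^2}{(2\pi m)^{2k}} \;\le\; \frac{2L^2}{(2\pi)^{2k}}\cdot\frac{1}{(2k-1)(N/2)^{2k-1}},
\end{equation*}
and requiring this to be at most $\epsilon^2$ and solving for $N$ produces a bound of the form $N \ge C_k\,(L/(\pi^k\epsilon))^{2/(2k-1)} + 2$ after simplification. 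I would then verify that the constant $C_k$ can be absorbed into the stated form (the paper's bound effectively sets $C_k = 1$ up to mild constant factors coming from the $2^k$ and $(2k-1)$ terms).

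The only nontrivial step is the bookkeeping of constants and the $N/2$-vs-$N$ indexing, since the truncation keeps $N$ basis functions but only $\lfloor(N-2)/2\rfloor$ frequencies. The integration-by-parts and tail-integral estimates are standard, so no genuine obstacle should arise; the main care is in making the constants match the form stated and in justifying the tail integral bound (straightforward by monotonicity of $x^{-2k}$).
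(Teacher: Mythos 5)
Your proposal is correct and follows essentially the same route as the paper's proof: identify $\hat{K}$ with the truncated Fourier expansion of $K$, bound the Fourier coefficients by (iterated) integration by parts to get decay $L/(2\pi m)^k$, and sum the Parseval tail beyond frequency roughly $(N-2)/2$, exactly as the paper does with real sine/cosine coefficients $x_n^c, x_n^s$. The one blemish is a dropped factor of $\tfrac{1}{2}$ in your first display (the prefactor should be $\tfrac{L^2}{2\pi^2}$, not $\tfrac{L^2}{\pi^2}$), which, once restored, gives $\lVert K-\hat{K}\rVert_2^2 \le L^2/[\pi^2(N-2)]$ and hence the stated $N \ge (L/(\pi\epsilon))^2 + 2$ without the spurious factor of $2$ in your intermediate inequality; similarly, in the $k$-derivative case the $(2k-1)$ and $2^{2k-1}$ factors only strengthen the bound, so the constant $C_k$ is indeed $1$, matching the paper.
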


\subsubsection{Approximating Delay Networks}
\label{sec:delay}

An interesting property of these finite window TSSMs is that they can approximate \textbf{delay functions}.
This is defined as a system with impulse response \( K(t) = \delta(t-1) \): then \( y(t) = (K \ast u)(t) = u(t-1) \), which means the SSM outputs a time-lagged version of the input.
This capability is intuitively linked to HiPPO, since in order to do this, the system must be remembering the entire window \( u([t-1, t]) \) at all times \( t \), in other words perform an \emph{approximate function reconstruction}.
Any HiPPO method involving finite windows should have this capability, in particular, the finite window methods LegT and FouT.

\begin{theorem}%
  \label{thm:delay}

For the FouT system \( \bm{A} \) and \( \bm{B} \), let \( \bm{C} \) be (twice) the vector of evaluations of the basis functions \( \bm{C}_n = 2 \cdot p_n(1) \) and let \( \bm{D} = 1 \).
For the LegT system \( \bm{A} \) and \( \bm{B} \), let \( \bm{C} \) be the vector of evaluations of the basis functions \( \bm{C}_n = p_n(1) = (1+2n)^{\frac{1}{2}}(-1)^n \) and let \( \bm{D} = 0 \).

Then the SSM kernel \( K(t) = \bm{C}e^{t\bm{A}}\bm{B} + \bm{D}\delta(t) \) limits to \( K(t) \to \delta(t-1) \) as \( N \to \infty \).
\end{theorem}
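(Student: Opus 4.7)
The plan is to view $K(t) = \bm{C}e^{t\bm{A}}\bm{B} + \bm{D}\delta(t)$ as an orthogonal expansion of the target $\delta(\cdot-1)$ in the SSM's own basis, and reduce the statement to a classical convergence theorem for the corresponding expansion at the right endpoint of $[0,1]$. By \cref{prop:legt} and \cref{thm:hippo-fourier}, the FouT and LegT systems are TOSSMs with basis kernels $K_n(t) = p_n(t)\mathbbm{I}(t)$, where $\{p_n\}$ is an orthonormal basis of $L^2([0,1])$ (shifted Legendre for LegT, truncated Fourier for FouT). Writing $S_N(\tau) := \sum_{n<N}\bm{C}_n p_n(\tau)\mathbbm{I}(\tau)$, the whole claim amounts to showing the distributional limit $S_N + \bm{D}\delta \to \delta(\cdot-1)$ on $[0,\infty)$, which I would verify by testing against an arbitrary smooth $\phi$ and checking $\int (S_N+\bm{D}\delta)\phi\,d\tau \to \phi(1)$ as $N\to\infty$.

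The key observation is that the prescribed $\bm{C}_n$ are, up to an explicit rescaling, the coefficients of $\delta(\cdot-1)$ against the basis: for LegT, $\bm{C}_n = (1+2n)^{1/2}(-1)^n$ equals $L_n(1)$ under the paper's convention for the orthonormal shifted Legendre polynomials on $[0,1]$; for FouT, $\bm{C}_n = 2p_n(1)$. Testing against $\phi$ yields $\sum_{n<N}\bm{C}_n\langle p_n,\phi\rangle_{L^2[0,1]} + \bm{D}\phi(0)$, which, in the LegT case, is literally the $N$-term Fourier--Legendre partial sum of $\phi$ evaluated at $\tau=1$ and converges to $\phi(1)$ for smooth $\phi$ by the classical pointwise/uniform convergence theory (Christoffel--Darboux together with Jackson-type estimates) for shifted Legendre expansions at a boundary point. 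No correction is needed, consistent with $\bm{D}=0$.

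In the FouT case, $1$-periodicity forces $p_n(0)=p_n(1)$, so the partial sum at $\tau=1$ is really the Fourier partial sum at the identified endpoint, and the Dirichlet--Jordan theorem returns the endpoint average $(\phi(0)+\phi(1))/2$ as its limit for smooth $\phi$. The factor of $2$ in $\bm{C}_n=2p_n(1)$ together with the direct-feedthrough term $\bm{D}\delta(t)$ then compensates for the spurious $\phi(0)$ contribution coming from this periodic identification, collapsing the total to $\phi(1)$. The \textbf{main obstacle} is exactly this boundary behaviour: plain $L^2$ completeness of $\{p_n\}$ is insufficient, since we are evaluating the expansion at a boundary point of the orthogonality interval, so one must invoke a strong pointwise/uniform convergence statement for the partial sums there; for FouT the periodic identification of $0$ and $1$ is what forces the doubling and the $\bm{D}$ correction, and cleanly accounting for this boundary artefact is where the technical care concentrates.
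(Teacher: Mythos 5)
Your route is essentially sound but genuinely different from the paper's, at least for the LegT half. The paper does not argue in the time domain at all for LegT: it proves the stronger \cref{thm:delay-legt}, showing by a block-LDU/continued-fraction recursion that the transfer function \( \bm{C}(s\bm{I}-\bm{A})^{-1}\bm{B} \) is the \([N-1/N]\) Pad\'e approximant of \( e^{-s} \), and the delay limit is read off in the Laplace domain. Your argument instead tests \( K_N + \bm{D}\delta \) against a smooth \( \phi \), recognizes \( \sum_n \bm{C}_n \langle p_n,\phi\rangle \) as the partial sum of the orthogonal expansion of \( \phi \) evaluated at the endpoint, and invokes endpoint convergence (rapid Legendre coefficient decay for smooth \( \phi \) beats the \( \sqrt{2n+1} \) endpoint growth, so the series converges absolutely and uniformly to \( \phi(1) \)). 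This is more elementary and unified across the two bases, but it buys less: it gives no finite-\( N \) characterization, whereas the Pad\'e statement is exact for every \( N \). For FouT your Dirichlet--Jordan endpoint-average argument is in substance the same mechanism as the paper's proof (the reflection \( v(s)=u(2t-s-1) \), \( w=(u+v)/2 \) construction inside the proof of \cref{thm:hippo-fourier} is exactly how the endpoint average \( \tfrac{1}{2}(u(t)+u(t-1)) \) arises, forcing the doubling of \( \bm{C} \) and the feedthrough correction), just phrased via classical Fourier convergence instead of the explicit symmetrization; the paper then treats the FouT half of \cref{thm:delay} as a corollary of that proof.

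Two points need tightening. First, the sign of \( \bm{D} \) for FouT: your computation gives \( 2S_N\phi(1) + \bm{D}\phi(0) \to \phi(0)+\phi(1)+\bm{D}\phi(0) \), so the ``compensation'' you invoke requires \( \bm{D}=-1 \), not the \( \bm{D}=1 \) in the statement; with the literal \( \bm{D}=+1 \) you would get \( 2\phi(0)+\phi(1) \). The paper's own derivation (\eqref{eq:fou-utm1}, where \( \overline{d}=-1 \)) confirms the feedthrough is \( -u(t) \), so you are proving the correct fact, but you should say explicitly that the compensation forces this sign rather than asserting the total ``collapses'' to \( \phi(1) \). Second, your identification \( \bm{C}e^{t\bm{A}}\bm{B} = \sum_{n<N}\bm{C}_n p_n(t)\mathbbm{I}(t) \) is only a limiting statement: for finite \( N \) the LegT/FouT kernels \( \bm{e}_n^\top e^{t\bm{A}}\bm{B} \) are sums of exponentials and cannot vanish off \( [0,1] \), and \cref{prop:legt} and \cref{thm:hippo-fourier} hold only as \( N\to\infty \) (via the approximation \eqref{eq:ut-approx}). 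So your \( S_N \) involves kernels that themselves depend on \( N \), and a fully rigorous version needs some uniform control to interchange the two limits. The paper is informal on the same point, so this does not disqualify your argument, but it is the one place where your reduction hides a genuine double-limit issue rather than a purely classical convergence fact.
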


\cref{thm:delay} is visualized in \cref{fig:1,fig:legt} (right).
Further, the result for LegT can be characterized even more tightly for finite \( N \).
In fact, this was the original motivation for the LDN/LMU~\citep{voelker2019dynamical,voelker2019legendre},
which worked backward from the transfer function of the desired delay function impulse response $K(t) = \delta(t-1)$,
and noticed that the SSM for Pad\'e approximations to this were linked to Legendre polynomials.
This was not fully proven, and we state it here and provide a full proof in \cref{sec:proofs:finite}.

\begin{theorem}%
  \label{thm:delay-legt}
  For \( \bm{A}, \bm{B}, \bm{C}, \bm{D} \) in the LegT system described in \cref{thm:delay}, the transfer function \( \mathcal{L}\{K(t)\}(s) \) is the \( [N-1 / N] \) Pad\'e approximant to \( e^{-s} = \mathcal{L}\{\delta(t-1)\}(s) \).
\end{theorem}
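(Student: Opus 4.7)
The plan is to reduce the Padé claim to a Taylor-matching condition at $s = 0$ and then verify the matching by establishing a similarity transform between the LegT realization and a canonical state-space realization of the Padé approximant.

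\textbf{Reduction to Taylor matching.} By the uniqueness theorem for Padé approximants, since $T(s) = \bm{C}(s\bm{I}-\bm{A})^{-1}\bm{B}$ automatically has the shape of a $[N-1/N]$ rational function (numerator degree $\le N-1$ by Cramer's rule, denominator $\det(s\bm{I}-\bm{A})$ of degree $N$ since the LegT $\bm{A}$ is invertible), it suffices to show $T(s) - e^{-s} = O(s^{2N})$ at $s=0$. Expanding the resolvent as $(s\bm{I}-\bm{A})^{-1} = -\sum_{k \ge 0} s^k \bm{A}^{-k-1}$ and comparing with $e^{-s} = \sum_{k \ge 0} (-s)^k / k!$, this reduces to the $2N$ scalar identities
\[
\bm{C}\bm{A}^{-(k+1)}\bm{B} = \frac{(-1)^{k+1}}{k!}, \qquad k = 0, 1, \dots, 2N-1,
\]
or equivalently (via $\int_0^\infty t^k e^{t\bm{A}}\,\dd t = k! (-\bm{A})^{-(k+1)}$) to the moment conditions $M_k := \int_0^\infty t^k K(t)\,\dd t = 1$ for all $k < 2N$, where $K(t) = \bm{C}e^{t\bm{A}}\bm{B}$.

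\textbf{Verifying the matrix identities via a similarity transform.} My plan is to follow (in reverse) the original LDN/LMU derivation of \citet{voelker2019dynamical,voelker2019legendre}: exhibit an explicit similarity transform $\bm{T}$ between the LegT triple $(\bm{A}, \bm{B}, \bm{C})$ and a state-space realization $(\bm{A}_\star, \bm{B}_\star, \bm{C}_\star)$ of the Padé approximant $P_{N-1,N}(s)/Q_{N-1,N}(s)$ of $e^{-s}$, with $\bm{T}$ built from the monomial coefficients of the orthonormal shifted Legendre polynomials $\{L_n\}$ on $[0,1]$ together with the diagonal scaling $(2n+1)^{1/2}$ visible in \eqref{eq:legt}. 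Since similarity preserves the transfer function, verifying the three identities $\bm{T}^{-1}\bm{A}_\star\bm{T} = \bm{A}$, $\bm{T}^{-1}\bm{B}_\star = \bm{B}$, and $\bm{C}_\star\bm{T} = \bm{C}$ immediately gives $T(s) = P_{N-1,N}(s)/Q_{N-1,N}(s)$ and thus the theorem.

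\textbf{Main obstacle.} The crux of the argument is the $\bm{A}$ identity, which asserts that conjugating the Padé approximant's state matrix by the Legendre-based basis change reproduces the specific lower-triangular-plus-alternating-upper-triangular structure in \eqref{eq:legt}. This should reduce to classical identities linking the denominator $Q_{N-1,N}(s)$ to a scaled Bessel polynomial, combined with the explicit monomial expansions of the $L_n$'s and the Legendre three-term recurrence. The $\bm{B}$ and $\bm{C}$ identities should follow more routinely from the leading-order and endpoint values of $L_n$ (in particular, $L_n(1) = (-1)^n(2n+1)^{1/2}$ matches the given form of $\bm{C}$) together with the explicit formulas for the coefficients of $P_{N-1,N}$ and $Q_{N-1,N}$. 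The bookkeeping for all three identities, especially for $\bm{A}$, is the technical heart of the proof, and I expect this explicit matrix computation to be the main obstacle.
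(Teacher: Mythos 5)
Your proposal stops exactly where the theorem's content begins, so there is a genuine gap. The first paragraph (reduction of the Pad\'e claim to the $2N$ Taylor/moment identities $\bm{C}\bm{A}^{-(k+1)}\bm{B} = (-1)^{k+1}/k!$, using that $\bm{C}(s\bm{I}-\bm{A})^{-1}\bm{B}$ is automatically a strictly proper $[N-1/N]$ rational function and that the Pad\'e approximant to $e^{-s}$ is unique) is a correct but essentially content-free restatement: those $2N$ identities for the specific matrices in \eqref{eq:legt} \emph{are} the theorem, and nothing in the proposal establishes them. The second paragraph then abandons that route and proposes instead to exhibit a similarity transform $\bm{T}$ onto a canonical realization of the Pad\'e approximant, but neither the canonical realization $(\bm{A}_\star,\bm{B}_\star,\bm{C}_\star)$ nor $\bm{T}$ is constructed, and you yourself flag the $\bm{A}$-conjugation identity as the ``main obstacle.'' An appeal to ``classical identities linking the denominator to a scaled Bessel polynomial, combined with the explicit monomial expansions of the $L_n$'s'' is precisely the step that prior work (the LDN/LMU papers) also asserted without proof, which is why the paper states and proves this theorem in the first place. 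As written, the proposal is two alternative reductions with the verification deferred in both, plus small unproven side conditions (invertibility/Hurwitzness of the LegT $\bm{A}$, needed for the resolvent expansion at $s=0$ or for the moment integrals to converge).

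For comparison, the paper's proof takes neither route: it works directly with the transfer function and inducts on the state dimension. Writing $\bm{A}_n$ in block form with the $(n-1)$-dimensional LegT matrix as its leading block, it applies a block LDU factorization of $(s\bm{P}_n^{-1}-\bm{A}_n)^{-1}$ and tracks \emph{four} coupled scalar transfer functions $\bm{1}^\top(\cdot)\bm{1}$, $\bm{1}^\top(\cdot)\bm{Z}$, $\bm{Z}^\top(\cdot)\bm{1}$, $\bm{Z}^\top(\cdot)\bm{Z}$ simultaneously; after a change of variables the numerator/denominator recurrence is recognized as the fundamental (continuant) recurrence of the continued fraction expansion of $e^{s}$, and the determinantal formula for continued fractions identifies the resulting rational functions with the Pad\'e approximants of $e^{-s}$. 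This sidesteps both explicit formulas for the Pad\'e numerator and denominator coefficients and any explicit Legendre-coefficient change of basis, which is where your plan would have to do heavy and currently absent bookkeeping. If you want to salvage your moment-identity route, you would need an independent proof that $\int_0^\infty t^k \bm{C}e^{t\bm{A}}\bm{B}\,\dd t = 1$ for all $k<2N$ for the LegT triple; that is a legitimate alternative target, but it is of the same difficulty as the theorem and is not addressed by anything in the proposal.
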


\looseness=-1
We remark that although LegT (LMU) is designed to be an ``optimal'' approximation to the delay function via Pad\'e approximants,
it actually produces a weaker spike function than FouT (\cref{fig:legt} vs. \cref{fig:1}) and empirically performs slightly worse on synthetic tasks testing this ability (\cref{sec:experiments:delay}).
This may be because Pad\'e approximation in the Laplace domain does not necessarily translate to localization in the time domain.

\subsection{Properties of Time-invariant Orthogonal SSMs: Timescales and Normalization}
\label{sec:hippo:timescale}

\looseness=-1
We describe several general properties of TOSSMs,
which let us answer the following questions:
\begin{itemize}[leftmargin=*,itemsep=0pt]%
  \item How should all parameters \( (\bm{A}, \bm{B}, \bm{C}) \) be initialized for an SSM layer to be properly normalized?
  \item What does \( \dt \) intuitively represent, and how should it be set in an SSM model?
\end{itemize}

It turns out that for TOSSMs, these two questions are closely related and have intuitive interpretations.

\para{Closure properties.}
First, several basic transformations preserve the structure of TOSSMs.
  Consider a TOSSM \( (\bm{A}, \bm{B}) \) with basis functions \( p_n(t) \) and measure \( \omega(t) \).
  Then, for any scalar \( c \) and unitary matrix \( \bm{V} \), the following are also TOSSMs with the corresponding basis functions and measure (\cref{sec:proofs:timescale}, \cref{prop:closure}):
\begin{center}
  \small
  \begin{tabular}{@{}lllll@{}}
    \toprule
    Transformation & Matrices & Interpretation & Basis & Measure \\
    \midrule
    Scalar Scaling & \( (c\bm{A}, c\bm{B}) \) & Timescale change & \( p(ct) \) & \( \omega(ct)c \) \\
    Identity Shift & \( (\bm{A}+c\bm{I}, \bm{B}) \) & Exponential tilting & \( p(t)e^{-ct} \) & \( \omega(t)e^{2ct} \) \\
    Unitary Transformation & \( (\bm{V}\bm{A}\bm{V}^{*}, \bm{V}\bm{B}) \) & Identity & \( \bm{V} p(t) \) & \( \omega(t) \) \\
    \bottomrule
  \end{tabular}
   \label{tab:closure}
\end{center}

\paragraph{Normalization.}
A standard aspect of training deep learning models, in general, concerns the scale or variance of activations.
This has been the subject of much research on training deep learning models,
touching on deep learning theory for the dynamics of training such as
the exploding/vanishing gradient problem~\citep{hochreiter1991untersuchungen},
and a large number of normalization methods to ensure properly normalized methods,
from the simple Xavier/He initializations~\citep{glorot2010understanding,he2015delving} to BatchNorm and LayerNorm \citep{ioffe2015batch,ba2016layer} to many modern variants and analyses of these~\citep{davis2021catformer}.

The following proposition follows because for a TOSSM, \( x(t) \) can be interpreted as projecting onto orthonormal functions in a Hilbert space (\cref{prop:hippo-reconstruction}).
\begin{proposition}[Normalization of TOSSM]%
  \label{prop:ssm-normalization}
  Consider an (infinite-dimensional) TOSSM.
  For any input \( u(t) \),
  \( \| x(t) \|_2^2 = \| u \|_\omega^2 = \int_{-\infty}^t u(s)^2 \omega(t-s) \dd t \).
\end{proposition}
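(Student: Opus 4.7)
The plan is to recognize this as essentially Parseval's identity in the appropriate weighted Hilbert space, following exactly the hint given in the proposition statement.

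First, I unpack the definition. By \cref{def:hippo} specialized to the time-invariant case, the TOSSM has basis kernels $K_n(t-s) = p_n(t-s)\omega(t-s)$, and the state components satisfy
\begin{equation*}
  x_n(t) = \int_{-\infty}^t p_n(t-s)\,\omega(t-s)\, u(s) \dd s.
\end{equation*}
Writing $p_n^{(t)}(s) := p_n(t-s)$ and $\omega^{(t)}(s) := \omega(t-s)$ on the half-line $s \le t$, this says exactly that $x_n(t) = \langle u, p_n^{(t)} \rangle_{\omega^{(t)}}$ in the Hilbert space $L^2((-\infty,t], \omega^{(t)}\dd s)$. A change of variable in the orthonormality condition \eqref{eq:basis} turns it into $\int_{-\infty}^t p_n^{(t)}(s)\, p_m^{(t)}(s)\, \omega^{(t)}(s)\dd s = \delta_{n,m}$, so $\{p_n^{(t)}\}_{n\ge 0}$ is an orthonormal family in this Hilbert space.

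Second, because we are in the infinite-dimensional limit $N\to\infty$, \cref{prop:hippo-reconstruction} asserts that $\{p_n^{(t)}\}$ is in fact a complete orthonormal basis on the support of $\omega^{(t)}$. Parseval's identity then yields
\begin{equation*}
  \|x(t)\|_2^2 = \sum_{n=0}^{\infty} \bigl|\langle u, p_n^{(t)}\rangle_{\omega^{(t)}}\bigr|^2 = \|u\|_{\omega^{(t)}}^2 = \int_{-\infty}^t u(s)^2\, \omega(t-s)\dd s,
\end{equation*}
which is the stated identity.

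There is no real obstacle here beyond packaging; the only point requiring a sentence of care is the standing assumption that $u \in L^2((-\infty,t], \omega^{(t)}\dd s)$, i.e.\ $\|u\|_\omega < \infty$, so that both sides of Parseval are meaningful and finite. Under this integrability assumption the proof is a direct two-line application of completeness plus Parseval; if one wishes to state the result unconditionally, both sides may be simultaneously $+\infty$, which follows by monotone convergence applied to the partial sums $\sum_{n=0}^{N-1} x_n(t)^2$.
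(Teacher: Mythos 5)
Your proof is correct and follows essentially the same route as the paper, which justifies \cref{prop:ssm-normalization} precisely by the observation that \( x(t) \) consists of the projections of \( u \) onto the orthonormal family \( p_n^{(t)} \) in the weighted Hilbert space, so the identity is Parseval (given the completeness already assumed in \cref{prop:hippo-reconstruction}). Your added remarks on the integrability of \( u \) and the completeness hypothesis are fine refinements but do not change the argument.
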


\begin{corollary}%
  \label{cor:ssm-normalization}
  For a TOSSM with a probability measure (i.e. \( \int \omega(t) = 1 \))
  and  any constant input \( u(t) = c \),
  the state has norm \( \| x(t) \|^2 = c^2 \) and the output \( y(t) \) has mean 0, variance \( c^2 \) if the entries of \( \bm{C} \) are mean \( 0 \) and variance \( 1 \).
\end{corollary}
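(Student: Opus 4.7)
The plan is to derive the corollary as a direct consequence of \cref{prop:ssm-normalization}, together with a short probabilistic computation over $\bm{C}$. The statement naturally splits into two independent pieces: a deterministic norm identity for $x(t)$, and the mean/variance of $y(t)$ viewed as a random variable in $\bm{C}$.

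For the state norm, I would start from \cref{prop:ssm-normalization}, which yields
\[
\|x(t)\|_2^2 \;=\; \int_{-\infty}^{t} u(s)^2\, \omega(t-s)\, \dd s.
\]
Substituting the constant input $u(s)=c$ and performing the change of variables $r=t-s$, the integral becomes $c^2 \int_0^{\infty} \omega(r)\, \dd r$. The support $[0,\infty)$ is the natural one, since the causal formula $x_n(t)=\int_{-\infty}^t K_n(t-s)u(s)\,\dd s$ only probes $\omega(r)$ for $r\ge 0$. The probability-measure hypothesis $\int \omega =1$ then collapses the integral, giving $\|x(t)\|_2^2=c^2$.

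For the output, I would write $y(t)=\bm{C} x(t) = \sum_{n=0}^{N-1} \bm{C}_n x_n(t)$ and treat $x(t)$ as a fixed vector, independent of the random $\bm{C}$. Linearity of expectation immediately gives $\mathbb{E}[y(t)] = \sum_n \mathbb{E}[\bm{C}_n]\, x_n(t) = 0$. For the variance, assuming the entries of $\bm{C}$ are pairwise uncorrelated with unit variance (the standard reading of ``mean $0$ and variance $1$'' in initialization schemes),
\[
\mathrm{Var}(y(t)) \;=\; \sum_{n,m} \mathrm{Cov}(\bm{C}_n,\bm{C}_m)\, x_n(t) x_m(t) \;=\; \sum_n x_n(t)^2 \;=\; \|x(t)\|_2^2 \;=\; c^2,
\]
where the last equality is precisely the state-norm computation above.

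There is no genuine obstacle here; the two subtleties worth flagging are (i) identifying the correct support of $\omega$ so that the probability-measure hypothesis indeed reduces the state-norm integral to $1$, and (ii) making explicit the implicit assumption of pairwise uncorrelated entries of $\bm{C}$, without which the off-diagonal terms in the variance computation need not cancel. Both points can be dispatched in a sentence each.
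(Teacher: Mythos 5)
Your proposal is correct and follows exactly the route the paper intends: the corollary is stated as an immediate consequence of \cref{prop:ssm-normalization} (the paper gives no separate proof), obtained by plugging in the constant input and normalizing by $\int\omega=1$, then computing the mean and variance of $y(t)=\bm{C}x(t)$ over the random $\bm{C}$ so that $\mathrm{Var}(y(t))=\|x(t)\|_2^2=c^2$. Your two flagged subtleties (support of $\omega$ on $[0,\infty)$ and uncorrelated entries of $\bm{C}$) are exactly the implicit assumptions in the paper's statement.
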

  Note that the probability measure requirement can be satisfied by simply rescaling \( \bm{B} \).
\cref{cor:ssm-normalization} says the TOSSM preserves the variance of inputs, the critical condition for a properly normalized deep learning layer.
Note that the initialization of \( \bm{C} \) is different than a standard Linear layer in deep neural networks, which usually rescale by factor depending on its dimensionality such as \( N^{-\frac{1}{2}} \) \citep{glorot2010understanding}.

\para{Timescales.}
As discussed in \cref{sec:background}, converting from continuous to discrete time involves a parameter \( \dt \) that represents the step size of the discretization.
This is an unintuitive quantity when working directly with discrete data, especially if it is not sampled from an underlying continuous process.

We observe the following fact:
for all standard discretization methods (e.g. Euler, backward Euler, generalized bilinear transform, zero-order hold \citep{gu2021lssl}), the discretized system depends on \( (\bm{A}, \bm{B}) \), and \( \dt \) \emph{only through their products} \( (\dt \bm{A}, \dt \bm{B}) \).
This implies that the SSM \( (\bm{A}, \bm{B}) \) discretized at step size \( \dt \) is computationally equivalent to the SSM \( (\dt\bm{A}, \dt\bm{B}) \) discretized at step size \( 1 \).

Therefore, \( \dt \) can be viewed just as a scalar scaling of the base SSM instead of changing the rate of the input.
In the context of TOSSMs, this just scales the underlying basis and measure (Scalar Scaling).
More broadly, scaling a general SSM simply changes its \emph{timescale} or rate of evolution.
\begin{proposition}%
  The ODE \( y' = c\bm{A}y + c\bm{B}u \) evolves at a rate \( c \) times as fast as the SSM \( x' = \bm{A}x + \bm{B}u \),
  in the sense that the former maps \( u(c t) \mapsto x(c t) \) if the latter maps \( u(t) \mapsto x(t) \).
\end{proposition}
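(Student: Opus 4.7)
The plan is to prove this by a direct change of variables in time, using the chain rule on the given ODE and then appealing to uniqueness of solutions for linear ODEs. The statement is essentially a classical time rescaling fact for linear dynamical systems, so the proof should be short.

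First I would fix a solution \( x(t) \) to the base SSM \( x' = \bm{A}x + \bm{B}u \) for a given input \( u(t) \), and define the rescaled functions \( \tilde{u}(t) := u(ct) \) and \( \tilde{y}(t) := x(ct) \). Then I would apply the chain rule to compute
\begin{equation*}
  \tilde{y}'(t) = \frac{d}{dt} x(ct) = c \cdot x'(ct) = c \bigl( \bm{A} x(ct) + \bm{B} u(ct) \bigr) = c\bm{A}\tilde{y}(t) + c\bm{B}\tilde{u}(t),
\end{equation*}
so \( \tilde{y} \) satisfies the rescaled ODE \( y' = c\bm{A}y + c\bm{B}\tilde{u} \) with input \( \tilde{u} \).

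To conclude that the rescaled ODE \emph{maps} \( \tilde{u} \mapsto \tilde{y} \), I would then invoke uniqueness of solutions to linear ODEs. Assuming the convention used throughout the excerpt (where the SSM input-output map uses the initial condition \( x(-\infty) = 0 \), consistent with the convolutional form \( x(t) = \int_{-\infty}^t e^{(t-s)\bm{A}}\bm{B} u(s) \dd s \)), the initial condition carries over since \( \tilde{y}(-\infty) = x(-\infty) = 0 \), so \( \tilde{y} \) is \emph{the} solution produced by the rescaled SSM on input \( \tilde{u} \).

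There is essentially no hard part; the only subtlety is being explicit about the initial condition (or equivalently, verifying that the convolutional representation \eqref{eq:ssm-convolution} is compatible with the time rescaling, which follows from a change of variables \( s \mapsto cs \) in the integral defining \( x \)). The statement can also be read off immediately from the convolutional form: scaling \( (\bm{A}, \bm{B}) \) by \( c \) replaces the kernel \( e^{t\bm{A}}\bm{B} \) by \( c \cdot e^{ct\bm{A}}\bm{B} \), whose convolution with \( u(ct) \) equals \( x(ct) \) after the same substitution.
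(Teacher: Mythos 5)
Your proof is correct. The chain-rule computation is exactly right: with \( \tilde{y}(t) = x(ct) \) and \( \tilde{u}(t) = u(ct) \) you get \( \tilde{y}'(t) = c\bm{A}\tilde{y}(t) + c\bm{B}\tilde{u}(t) \), and the zero initial condition at \( -\infty \) (implicit in the convolutional definition of the SSM map) carries over, so uniqueness of solutions of the linear ODE gives the claim. The paper itself does not spell out a proof of this unnumbered proposition; the closest argument it gives is the proof of the Scalar Scaling closure property (\cref{prop:closure}, part 1), which works entirely in the convolutional/kernel picture: scaling \( (\bm{A},\bm{B}) \) to \( (c\bm{A},c\bm{B}) \) turns the kernel \( e^{t\bm{A}}\bm{B} \) into \( c\,e^{(ct)\bm{A}}\bm{B} \), and a change of variables in the convolution integral does the rest — this is precisely the alternative route you sketch in your last paragraph. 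The two arguments are essentially equivalent; your ODE-side version is slightly more elementary and makes the ``rate \( c \)'' interpretation transparent, but it does require you to be explicit about the initial condition and invoke uniqueness, whereas the kernel-side argument sidesteps uniqueness entirely (the input-output map is defined by the convolution) and is the form that the paper then reuses to track how the basis \( p(ct) \) and measure \( c\,\omega(ct) \) transform. Either version is acceptable; no gaps.
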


The most intuitive example of this is for a finite window TOSSM such as LegT or FouT. Discretizing this system with step size $\dt$ is equivalent to considering the system $(\dt \bm{A}, \dt \bm{B})$ with step size $1$, which produces basis functions supported exactly on $[0, \frac{1}{\dt}]$.
The interpretation of the timescale \( \dt \) lends to simple discrete-time corollaries of the previous continuous-time results.
For example, LegT and FouT \emph{represent sliding windows of \( 1/\dt \) elements} in discrete time.
\begin{corollary}%
  \label{cor:delay}
  By \cref{thm:delay}, as \( N \to \infty \), the discrete convolutional kernel \( \bm{\overline{K}} \to \bm{e}_{\lceil \dt^{-1} \rceil} \), i.e.\ the discrete delay network with lag \( \frac{1}{\dt} \).
\end{corollary}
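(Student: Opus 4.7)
The plan is to combine the step-size equivalence with Theorem~\ref{thm:delay} and the Scalar Scaling closure property, reducing the discrete-time statement to the continuous-time delta limit we already have in hand.

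First, I would invoke the equivalence noted just before the statement: discretizing $(\bm{A},\bm{B},\bm{C})$ at step size $\dt$ is the same as discretizing $(\dt\bm{A},\dt\bm{B},\bm{C})$ at step size $1$. So it suffices to analyze the continuous kernel of $(\dt\bm{A},\dt\bm{B},\bm{C})$, sampled at integer times. A direct computation from $K(t)=\bm{C}e^{t\bm{A}}\bm{B}$ gives
\begin{equation*}
K_{\dt}(t) \;=\; \bm{C}\, e^{t\dt\bm{A}}\,(\dt\bm{B}) \;=\; \dt\cdot K(\dt\, t),
\end{equation*}
which is exactly the Scalar Scaling closure property (timescale change by factor $\dt$) applied to the FouT/LegT system.

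Next, I would apply Theorem~\ref{thm:delay}, which gives $K(t)\to \delta(t-1)$ as $N\to\infty$. Using the well-known scaling identity $\alpha\,\delta(\alpha t - 1) = \delta(t - 1/\alpha)$ with $\alpha=\dt$, we get
\begin{equation*}
K_{\dt}(t) \;=\; \dt\cdot K(\dt\, t) \;\longrightarrow\; \dt\,\delta(\dt\, t - 1) \;=\; \delta\!\left(t - \tfrac{1}{\dt}\right).
\end{equation*}
So in the large-$N$ limit the continuous impulse response of the rescaled system is a delta located at time $1/\dt$.

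Finally, I would push this through the standard discretization rule at step size $1$. For any of the standard discretization schemes (ZOH, bilinear, etc.), the discrete kernel entry $\overline{\bm K}_k$ is obtained by integrating the continuous impulse response against a weight supported in an interval of length $1$ around $t=k$; a delta at $1/\dt$ is therefore captured by the unique integer index $k=\lceil\dt^{-1}\rceil$, yielding $\overline{\bm K}\to\bm{e}_{\lceil\dt^{-1}\rceil}$. The main obstacle is this final step: the convergence in Theorem~\ref{thm:delay} is not pointwise (the continuous limit is a distribution), so one must be careful to state convergence of $K_\dt$ in the distributional sense and then verify that the discretization operator used is continuous with respect to that topology on finitely-supported test-function inputs; this is routine for ZOH and the bilinear transform since they integrate or evaluate against bounded, compactly-supported weights, and it is exactly this integration that produces the ceiling in the index when $1/\dt$ is non-integer.
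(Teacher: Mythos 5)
Your argument is correct and follows essentially the same route the paper intends: the paper states \cref{cor:delay} without a separate proof, treating it as an immediate consequence of \cref{thm:delay} combined with the step-size equivalence (discretizing \( (\bm{A},\bm{B}) \) at step \( \dt \) equals discretizing \( (\dt\bm{A},\dt\bm{B}) \) at step \( 1 \)) and the Scalar Scaling closure property, which is exactly the chain you spell out, ending with \( \delta(t-\tfrac{1}{\dt}) \) being captured at index \( \lceil \dt^{-1} \rceil \) by the discretization. Your additional remark about distributional convergence and continuity of the discretization operator is a reasonable (and slightly more careful) elaboration of the step the paper leaves implicit.
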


\begin{corollary}%
  \label{cor:fourier-kernel}
  For HiPPO-FouT matrices \( (\bm{A}, \bm{B}) \),
  by \cref{thm:hippo-fourier}, as \( N \to \infty \), the discrete convolutional kernel \( \bm{\overline{K}} \) (over the choice of \( \bm{C} \)) can represent any local convolution of length \( \lfloor \dt^{-1} \rfloor \).
\end{corollary}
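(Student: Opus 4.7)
The plan is to combine the timescale interpretation of $\dt$ from \cref{sec:hippo:timescale} with the Fourier completeness of the HiPPO-FouT basis established in \cref{thm:hippo-fourier}. First I would invoke the equivalence between discretising $(\bm{A}, \bm{B})$ at step size $\dt$ and discretising $(\dt\bm{A}, \dt\bm{B})$ at step size $1$, and apply the Scalar Scaling row of the closure table with $c = \dt$. This shows $(\dt\bm{A}, \dt\bm{B})$ is a TOSSM whose basis functions are $p_n(\dt t)$, which by \cref{thm:hippo-fourier} are supported on $[0, \dt^{-1}]$. Hence the continuous kernel $K(t) = \sum_n \bm{C}_n p_n(\dt t)$ vanishes outside $[0, \dt^{-1}]$, and any standard discretisation (ZOH, bilinear, Euler) produces a discrete kernel $\bm{\overline K}$ supported on $\{0, 1, \dots, \lfloor \dt^{-1} \rfloor - 1\}$. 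This already gives the ``local convolution of length $\lfloor \dt^{-1} \rfloor$'' half of the claim.

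For the surjectivity half, set $L = \lfloor \dt^{-1} \rfloor$ and consider the linear map $\Phi_N : \mathbbm{R}^N \to \mathbbm{R}^L$ sending $\bm{C}$ to the first $L$ entries of $\bm{\overline K}$. Up to a $\bm{C}$-independent diagonal pre-multiplication arising from the discretisation rule, this map is given by the matrix $M_{k,n} = p_n(\dt k)$, whose rows evaluate the truncated Fourier basis $\{1, \sqrt{2}\cos(2\pi m t), \sqrt{2}\sin(2\pi m t)\}_m$ on the grid $\{0, \dt, \dots, (L-1)\dt\} \subset [0,1)$. When $\dt^{-1}$ is an integer this is precisely a real form of the $L$-point DFT matrix, which is invertible; for general $\dt$ the same $L$ distinct nodes inside a single period keep the matrix of full row rank once $N \geq L$. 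Surjectivity of $\Phi_N$ then realises any target length-$L$ sequence exactly, and a fortiori in the $N \to \infty$ limit.

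The main obstacle is the apparent gap between the approximation-theoretic flavour of \cref{thm:fourier-kernel-approx} and the exact representation claim of the corollary. My proposal sidesteps this by working only with the $L$ sampled values of $K$ rather than with $K$ on all of $[0,\dt^{-1}]$, which turns the problem into a finite-dimensional linear algebra question resolved by classical DFT invertibility. The only subtlety is that the discretisation map $\bm{C} \mapsto \bm{\overline K}$ must depend linearly on $\bm{C}$ with the same support as the continuous $K(t)$; this is immediate for ZOH and routine for the other standard schemes.
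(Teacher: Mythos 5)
Your proposal is correct in substance, and its first half (the $(\dt\bm{A},\dt\bm{B})$ rescaling plus the limiting support of the FouT kernels on $[0,\dt^{-1}]$) is exactly the reasoning the paper intends: the corollary is stated without a separate proof, as a discrete-time consequence of \cref{thm:hippo-fourier} together with the Scalar Scaling / timescale discussion of \cref{sec:hippo:timescale}, where the paper simply appeals to completeness of the truncated Fourier basis on $[0,1]$ ("linear combinations can represent any function on $[0,1]$") to conclude that any local convolution is representable. Where you genuinely diverge is the surjectivity half: instead of invoking completeness of the continuous basis and then discretizing, you reduce directly to a finite-dimensional question about the $L\times N$ evaluation matrix of the trigonometric system on the grid $\{0,\dt,\dots,(L-1)\dt\}$ and use DFT invertibility / the Chebyshev-system property of trigonometric polynomials. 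This buys exactness of the discrete representation with an explicit finite $N$, which the paper's appeal to completeness (and the approximation bound of \cref{thm:fourier-kernel-approx}) only gives implicitly. Two caveats worth tightening: (i) the identification of the discretization map with the sample matrix $M_{k,n}=p_n(\dt k)$ is what Euler gives; ZOH produces interval averages $\frac{1}{\dt}\int_{k\dt}^{(k+1)\dt}p_n(s)\dd s$ rather than samples, so your rank argument should be rerun with those functionals (it still works, since averages over disjoint subintervals are linearly independent on the span of a complete trigonometric family), and bilinear is again different; (ii) at finite $N$ the SSM kernels $e^{t\bm{A}}\bm{B}$ are only approximations to the Fourier functions (\cref{thm:hippo-fourier} is an $N\to\infty$ statement, and a finite-dimensional LTI impulse response cannot be compactly supported), so the full-rank claim should be stated for the limiting sample matrix and transferred to large finite $N$ by a perturbation argument (an invertible square submatrix stays invertible under small entrywise perturbation); relatedly, ``once $N\ge L$'' is slightly optimistic given the Nyquist sine degeneracy and parity counting, though this is immaterial in the $N\to\infty$ limit the corollary asserts.
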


This discussion motivates the following definition.
Properly normalized TOSSMs \( (\bm{A}, \bm{B}) \) will model dependencies of expected length \( 1 \),
and \( \dt \) modulates it to model dependencies of length \( \frac{1}{\dt} \),
allowing fine-grained control of the context size of a TOSSM.
\begin{definition}[Timescale of TOSSM]%
  \label{def:timescale}
  Define \( \mathbbm{E}[\omega] = \frac{\int_0^\infty t \omega(t) \dd t}{\int_0^\infty \omega(t) \dd t} \) to be the \emph{timescale} of a TOSSM having measure \( \omega(t) \).
  A TOSSM is \emph{timescale normalized} if it has timescale $1$.
\end{definition}
By this definition, HiPPO-LegS is timescale normalized. This motivates S4's initialization of \( \dt \) log-uniformly in \( (0.001, 0.1) \), covering a geometric range of sensible timescales (expected length \( 10 \) to \( 1000 \)).
In \cref{sec:experiments} we show that the timescale can be chosen more precisely when lengths of dependencies are known.

We finally remark that HiPPO-LegT and -FouT were derived with measures $\mathbbm{I}[0, 1]$. %
However, to properly normalize them by \cref{def:timescale}, we choose to halve the matrices to make them orthogonal w.r.t. $\omega = \frac{1}{2}\mathbbm{I}[0, 2]$.
The S4-FouT and S4-LegT methods in our experiments use these halved versions. %

\subsection{Discussion}

\cref{tab:kernels} summarizes the results for TOSSMs presented in this section, including both original HiPPO methods defined in \citet{gu2020hippo} as well as our new methods.
\begin{table}
  \caption{Summary of time-invariant orthogonal SSMs. Original HiPPO results (\emph{Bottom}) and new results (\emph{Top}).}
  \small
    \begin{tabular}{@{}llllll@{}}
        \toprule
        Method & SSM Matrices                     & SSM kernels \( e^{t\bm{A}}\bm{B} \)      & Orthogonal basis \( p_n(t) \)          & Measure \( \omega(t) \)          & Timescale    \\
        \midrule
        LegS   & equation \cref{eq:legs}                    & \( L_n(e^{-t}) e^{-t} \)               & \( L_n(e^{-t}) \)    & \( e^{-t} \)            & 1            \\
        FouT   & equation \cref{eq:fout}                    & \( (\cos,\sin)(2\pi n t) \mathbbm{I}[0, 1] \) & \( (\cos,\sin)(2\pi n t) \) & \( \mathbbm{I}[0, 1] \) & 1/2            \\
        \midrule
        LegT   & equation \cref{eq:legt}                    & \( L_n(t) \mathbbm{I}[0, 1]  \)        & \( L_n(t) \)         & \( \mathbbm{I}[0, 1] \) & 1/2            \\
        LagT   & see \citep{gu2020hippo}                    & \( Lag(t)e^{-t/2} \)       & \( Lag(t)e^{-t/2} \)                     & \( \mathbbm{I}[0, \infty) \)  & \( \infty \) \\
        \bottomrule
    \end{tabular}
    \label{tab:kernels}
\end{table}

\paragraph{HiPPO-LagT}
We note that the original HiPPO paper also included another method called LagT based on Laguerre polynomials.
Because Laguerre polynomials are orthogonal with respect to \( e^{-t} \), this system was supposed to represent an exponentially decaying measure.
However, this method was somewhat anomalous; it generally performed a little worse than the others, and it was also empirically found to need different hyperparameters.
For example, \citet{gu2020hippo} found that on the permuted MNIST dataset, setting \( \dt \) to around \( \frac{1}{784} \) for most HiPPO methods was indeed optimal, as the theory predicts.
However, HiPPO-LagT performed better when set much higher, up to \( \dt=1.0 \).
It turns out that this method changes the basis in a way such that it is \emph{not} orthogonal with respect to an exponentially decaying measure, but rather the constant measure \( \mathbbm{I}[0, \infty) \), and has a timescale of \( \infty \); this explains why the hyperparameters for \( \dt \) need to be much higher.

In summary, we do not recommend using the original HiPPO-LagT, which despite the original motivation does not represent orthogonalizing against an exponentially decaying measure.
Instead, HiPPO-LegS (as a time-invariant SSM) actually represents an exponentially decaying measure.

\paragraph{Timescales}
For a timescale normalized orthogonal SSM (i.e. \( \int_0^\infty \omega(t) = 1 \) and \( \int_0^\infty t\omega(t) = 1 \)):
\begin{itemize}%
  \item \( \frac{1}{\dt} \) exactly represents the range of dependencies it captures.
    For example, S4-FouT can represent any finite convolution kernel of length \( \frac{2}{\dt} \) (so the expected length of a random kernel is \( \frac{1}{\dt} \)).
  \item A random vector \( \bm{C} \) with independent mean \( 0 \), variance \( 1 \) entries is a variance-preserving SSM, i.e. produces outputs matching the variance of the input.
\end{itemize}

\paragraph{LSSL and General Polynomials}

The Linear State Space Layer \citep{gu2021lssl} succeeded HiPPO by incorporating it into a full deep SSM model, and also generalized the HiPPO theory to show that all orthogonal polynomials can be defined as the SSM kernels for some \( (\bm{A}, \bm{B}) \).
Our framework is even stronger and immediately produces the main result of LSSL as a corollary (Appendix), and can also work for non-polynomial methods (e.g. FouT).

These results show that all orthogonal polynomial bases, including truncated and scaled variants, have corresponding OSSMs with polynomial kernels.
If we define this special case as polynomial OSSMs (POSSMs), we have therefore deduced that all of the original HiPPOs are POSSMs.

\section{Experiments}
\label{sec:experiments}

We study the empirical tradeoffs of our proposed S4 variants. We compare several S4 variants based on the TOSSMs introduced in this work,
as well as to simpler diagonal SSMs called S4D that are not orthogonal SSMs~\citep{gu2022s4d}.
Corresponding to our main contributions, we hypothesize that
\begin{itemize}[leftmargin=*]%
  \item S4-LegS excels at sparse memorization tasks because it represents very smooth convolution kernels that memorize the input against an infinitely-long measure (\cref{cor:legs-time}, \cref{fig:1}). Conversely, it is less appropriate at short-range tasks with dense information because it smooths out the signal.
  \item S4-FouT excels at dense memorization tasks because it can represent spike functions that pick out past elements at particular ranges (\cref{sec:delay}). However, it is less appropriate at very long range tasks because it represents a finite (local) window.
  \item $\dt$ can be initialized precisely based on known time dependencies in a given task to improve performance.
\end{itemize}

\subsection{Long Range Arena}

The Long Range Arena (LRA) benchmark is a suite of sequence classification tasks designed to stress test sequence models on modeling long sequences.
We improve S4's previous state of the art by another 6 points (\cref{tab:lra}).
Validating our hypothesis, S4-LegS is extremely strong at the hardest long-range task (Path-X) involving sparse dependencies of length 16384, which FouT cannot solve because it is a finite window method.

The Path-X task also serves as a validation of the theory of timescales in \cref{sec:hippo:timescale}.
To set these results,
we lowered the initialization of \( \dt \) in accordance with known length of dependencies in the task.
\cref{fig:pathx} illustrates the importance of setting \( \dt \) correctly.

\begin{table}[t!]
  \small
  \caption{
    (\textbf{Long Range Arena}) Accuracy (std.) on full suite of LRA tasks. Hyperparameters in \cref{sec:experiment-details}.
  }
    \centering
    \begin{tabular}{@{}llllllll@{}}
        \toprule
        \textsc{Model} & \textsc{ListOps}         & \textsc{Text}            & \textsc{Retrieval}       & \textsc{Image}           & \textsc{Pathfinder}      & \textsc{Path-X}       & \textsc{Avg}      \\ %
        \midrule
        S4-LegS       & 59.60 (0.07) & 86.82 (0.13) & 90.90 (0.15) & \underline{88.65} (0.23) & \underline{94.20} (0.25) & \textbf{96.35} (-) & \textbf{86.09} \\
        S4-FouT       & 57.88 (1.90) & 86.34 (0.31) & 89.66 (0.88) & \textbf{89.07} (0.19)    & \textbf{94.46} (0.24)    & \xmark             & 77.90          \\
        S4-LegS/FouT  & 60.45 (0.75) & 86.78 (0.26) & 90.30 (0.28) & 89.00 (0.26)             & 94.44 (0.08)             & \xmark             & 78.50          \\
        \midrule
        S4 (original) & 58.35        & 76.02        & 87.09        & 87.26                    & 86.05                    & 88.10              & 80.48          \\ %
        Transformer   & 36.37        & 64.27        & 57.46        & 42.44                    & 71.40                    & \xmark             & 53.66          \\ %
        \bottomrule
    \end{tabular}
    \label{tab:lra}
\end{table}

\begin{figure}[!ht]
  \begin{subfigure}{.5\linewidth}%
    \centering
    \includegraphics[width=\linewidth]{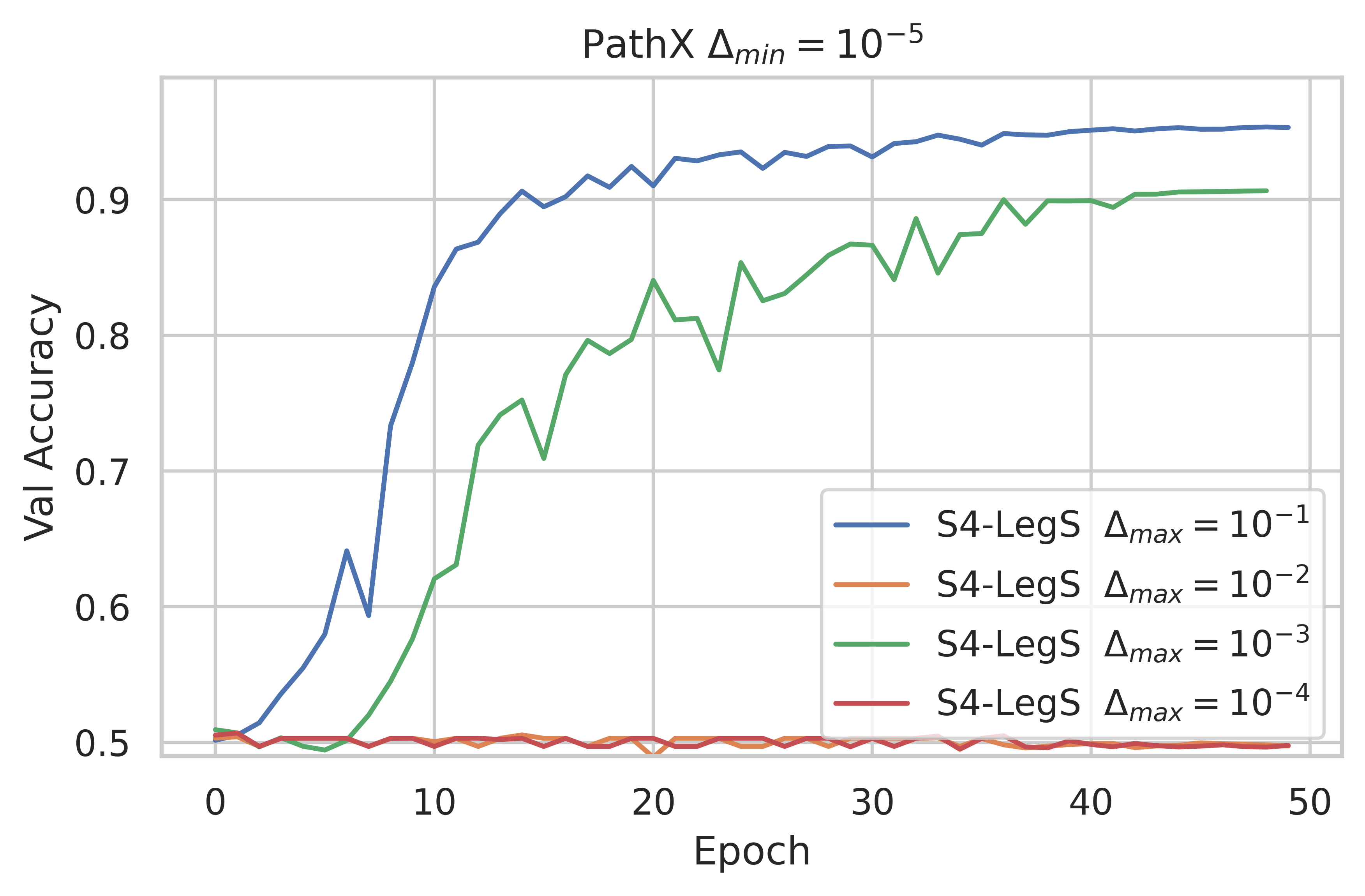}
  \end{subfigure}
  \begin{subfigure}{.5\linewidth}%
    \centering
    \includegraphics[width=\linewidth]{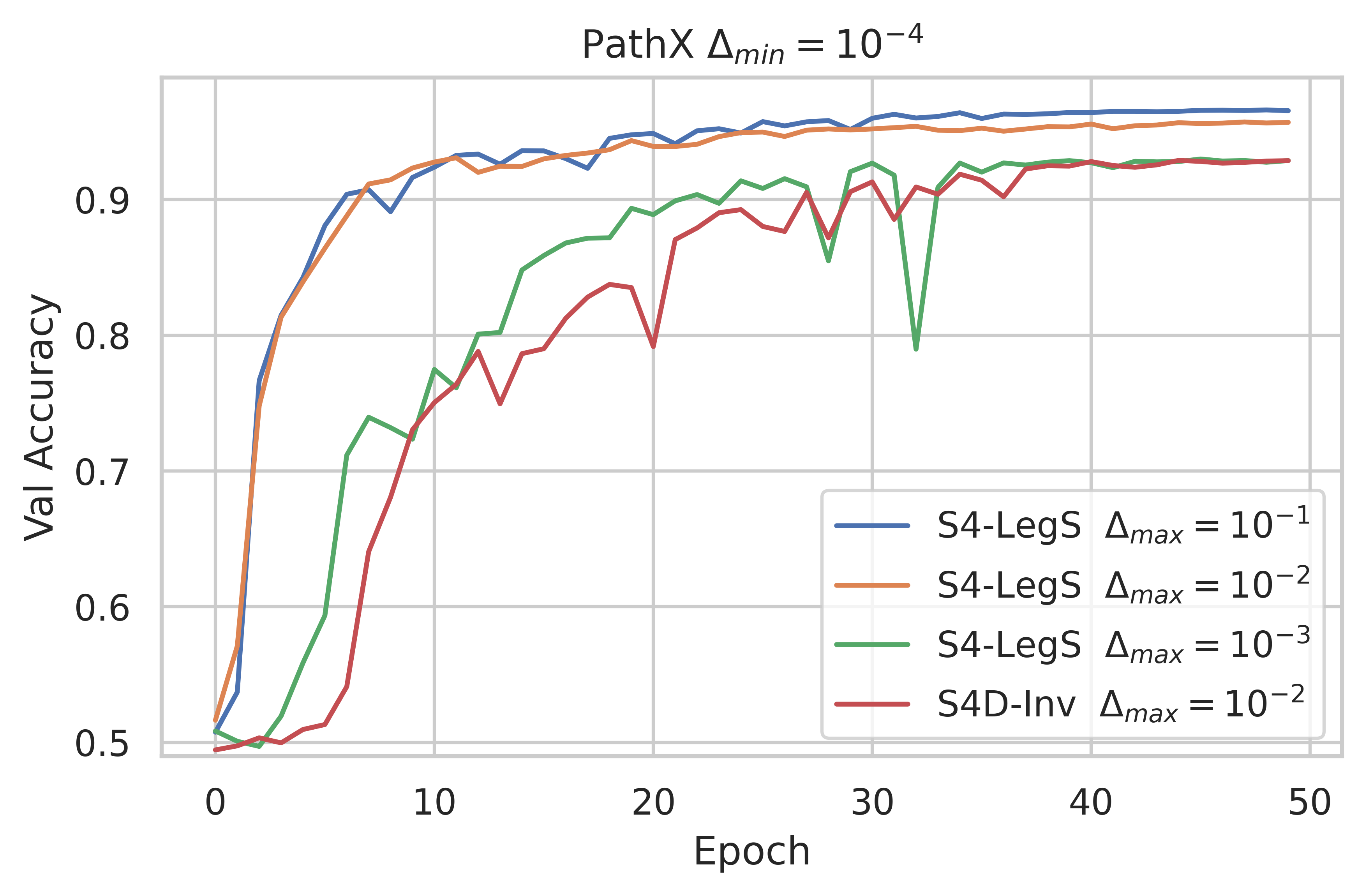}
  \end{subfigure}

  \caption{
    (\textbf{Validation curves on Path-X.})
    (\emph{Left}) Setting \( \dt_{min} \) too small can solve the task, but is inconsistent.
    (\emph{Right}) A good setting of \( \dt_{min} \) can consistently solve the task. Note that the timescale of each feature is up to \( \frac{1}{\dt_{min}} = 10^4 \), which is on the order of (but not exceeding) the length of the task \( L=16384 \).
    Empirically, performance is best when spreading out the range of \( \dt \) with a larger \( \dt_{max} \) that covers a wider range of timescales and can potentially learn features at different resolutions, which are combined by a multi-layer deep neural network.
    We also show a diagonal variant of S4-LegS called S4D-Inv introduced in \citep{gu2022s4d} which approximates S4-LegS, but is still worse.
  }
    \label{fig:pathx}
\end{figure}

\subsection{Theory: Function Reconstruction, Timescales, Normalization}
\label{sec:experiments:reconstruction}

\begin{figure}[!t]
  \begin{subfigure}{.5\linewidth}%
    \centering
    \includegraphics[width=\linewidth]{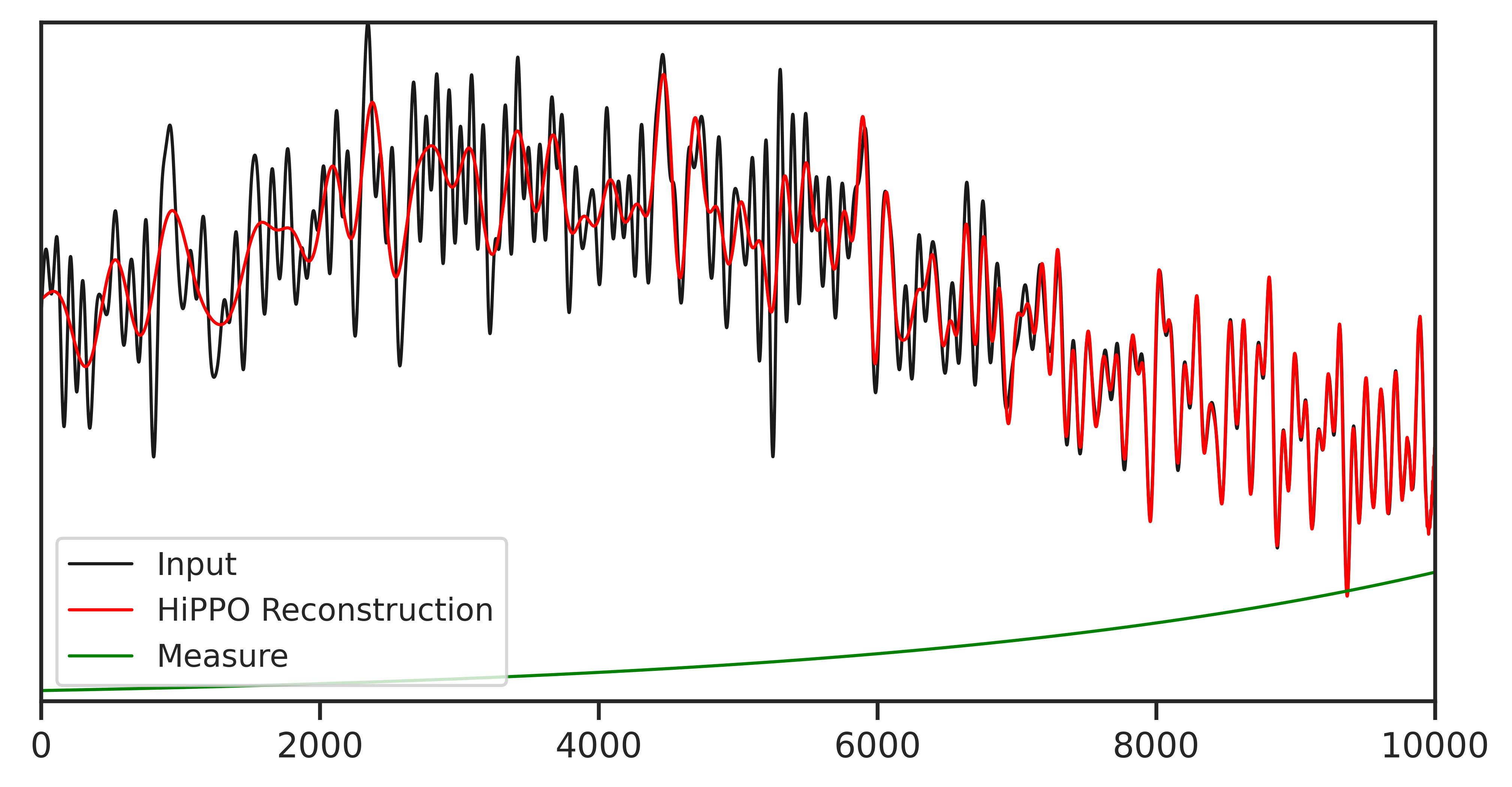}
  \end{subfigure}
  \begin{subfigure}{.5\linewidth}%
    \centering
    \includegraphics[width=\linewidth]{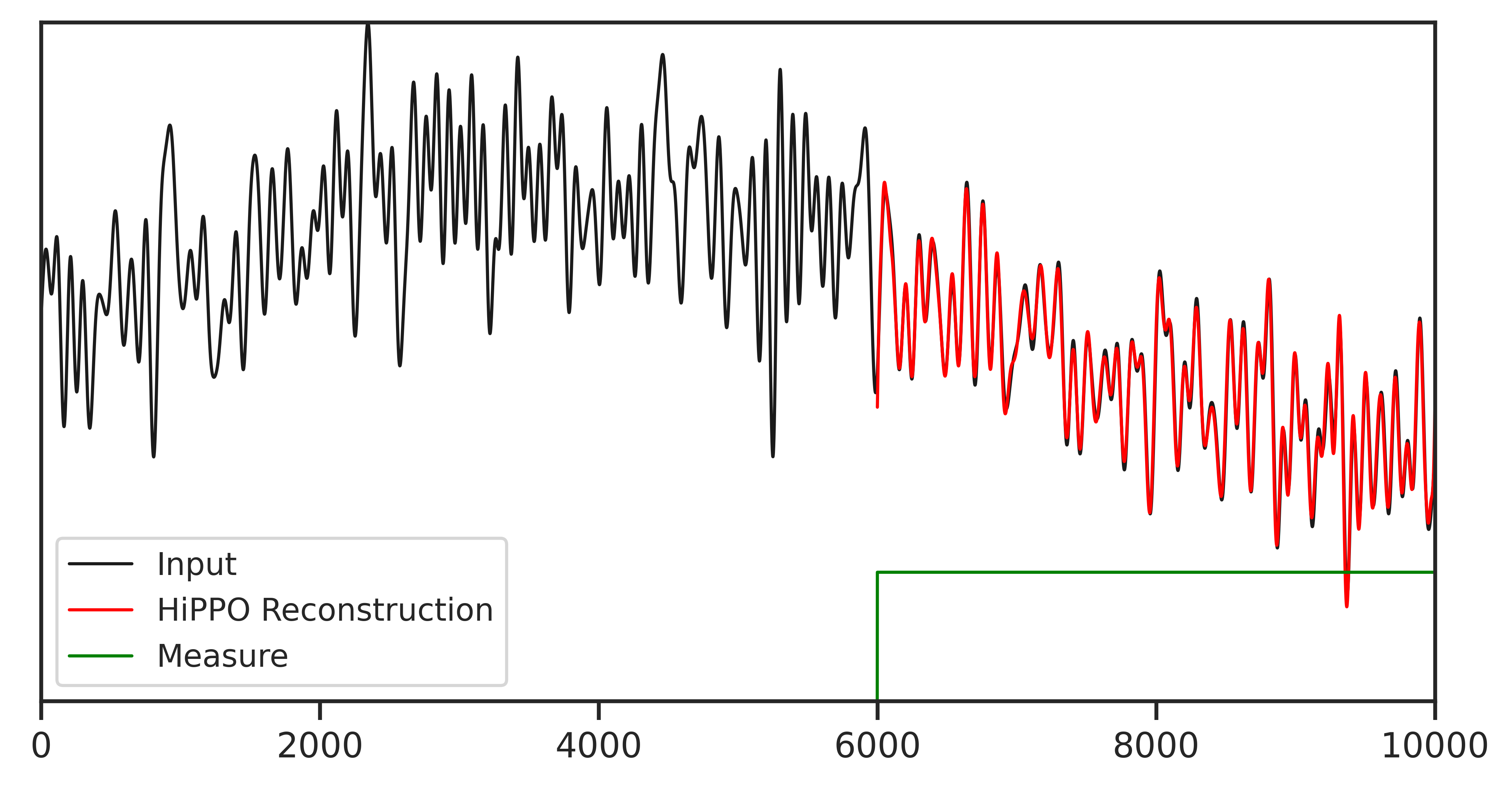}
  \end{subfigure}
  \caption{
    Function reconstruction predicted by our general theory.
    An input signal of length \( 10000 \) is processed sequentially,
    maintaining a state vector of size only \( x(t) \in \mathbbm{R}^{64} \),
    which is then used to approximately reconstruct the entire history of the input.
    (\emph{Left}) HiPPO-LegS (as an LTI system) orthogonalizes on the Legendre polynomials warped by an exponential change of basis, smoothening them out. This basis is orthogonal with respect to an exponentially decaying measure.
    Matching the intuition, the reconstruction is very accurate for the recent past and degrades further out, but still maintains information about the full history of the input, endowing it with long-range modeling capacity.
    This is the same as S4.
    (\emph{Right}) HiPPO-FouT orthogonalizes on the truncated Fourier basis, similar to the original HiPPO-LegT or LMU.
  }
  \label{fig:reconstruction-new}
\end{figure}

\looseness=-1
\cref{fig:reconstruction-new} confirms the HiPPO theory of online function reconstruction (\cref{prop:hippo-reconstruction}) for the proposed TOSSMs LegS and FouT.

We additionally construct a synthetic \textbf{Reconstruction Task} (for a uniform measure) to test if S4 variants can learn to reconstruct.
The input is a white noise sequence \( u \in \mathbbm{R}^{4000} \).
We use a single layer linear S4 model with state size \( N=256 \) and \( H=256 \) hidden units.
Models are required to use their output at the last time step, a vector \( y_{4000} \in \mathbbm{R}^{256} \),
to reconstruct the last 1000 elements of the input with a linear probe.
Concretely, the loss function is to minimize \( \| u_{3000:4000} - \bm{W} y_{4000} \|_2^2 \),
where \( \bm{W} \in \mathbbm{R}^{1000 \times 256} \) is a learned matrix.

\cref{fig:synthetic-reconstruct} shows that S4-LegT and S4-FouT, the methods that theoretically reconstruct against a uniform measure, are far better than other methods.
We include the new diagonal variants (S4D) proposed in \citep{gu2022s4d}, which are simpler SSM methods that generally perform well but do not learn the right function on this task.
We also include a method \textbf{S4-(LegS/FouT)} which combines both LegS and FouT measures by simply initializing half of the SSM kernels to each.
Despite having fewer S4-FouT kernels, this still performs as well as the pure S4-FouT initialization.

\begin{figure}[!t]
  \centering
  \includegraphics[width=0.6\linewidth]{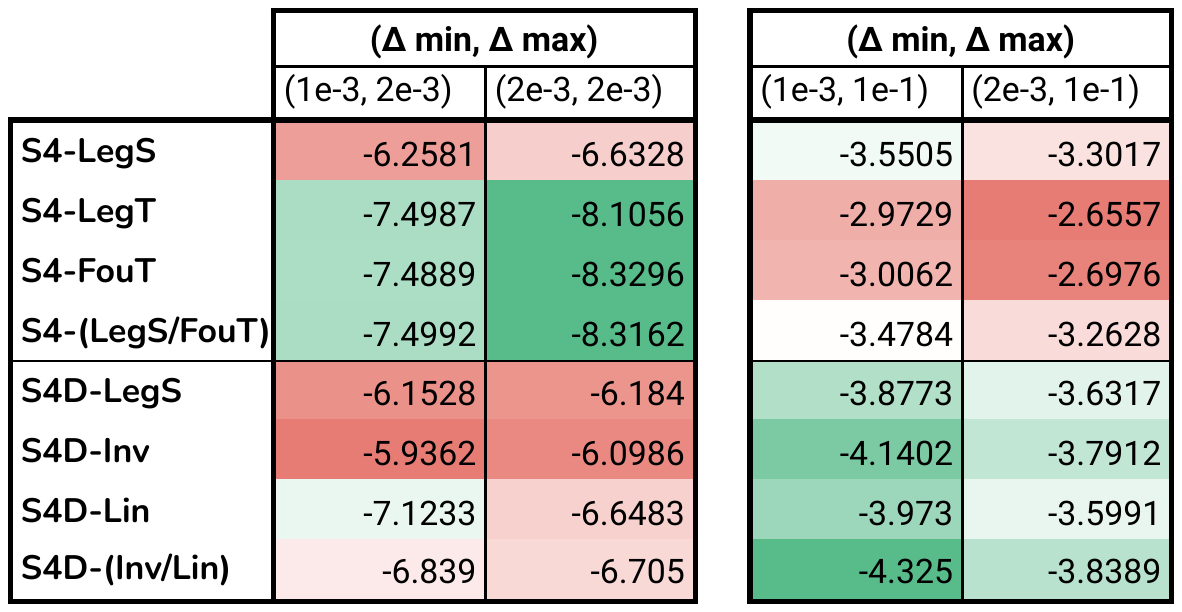}
  \caption{Log-MSE after training on the Reconstruction Task. (\emph{Left}) When the timescales \( \dt \) are set appropriately for this task, the methods that theoretically reconstruct against a uniform measure (LegT and FouT) are much better than alternatives, achieving MSE more orders of magnitude lower than other SSM initializations.
  (\emph{Right}) Interestingly, when the timescales \( \dt \) are not set correctly, these methods (LegT and FouT) actually perform worst and the diagonal methods introduced in \citep{gu2022s4d} perform best.}
  \label{fig:synthetic-reconstruct}
\end{figure}

Finally, we validate the theory of normalization in \cref{sec:hippo:timescale}, which predicts that for a properly normalized TOSSM, the projection parameter \( \bm{C} \) should be initialized with unit variance, in contrast to standard initializations for deep neural networks which normalize by a factor related to the size of \( N \) (in this case \( N=64 \)).
\cref{tab:init-std} shows classification results on datasets Sequential CIFAR (sCIFAR) and Speech Commands (SC), using models of size at most 150K parameters.
This replicates the setup of the ``ablation models'' of \citep[Section 5]{gu2022s4d}.
Results show that using standard deviation \( 1.0 \) for \( \bm{C} \) is slightly better than alternatives, although the difference is usually minor.

\begin{table}[!t]
  \caption{Ablation of the initialization standard deviation of \( \bm{C} \) for S4-LegS on classification datasets.}
  \centering
  \begin{tabular}{@{}llllll@{}}
    \toprule
    Init std. \( \sigma \) of \( \bm{C} \) & 0.01         & 0.1          & 1.0                   & 10.0         & 100.0        \\
    \midrule
    Sequential CIFAR                       & 85.91 (0.41) & 86.33 (0.01) & \textbf{86.49} (0.51) & 84.40 (0.16) & 82.05 (0.61) \\
    Speech Commands                        & 90.27 (0.31) & 90.00 (0.11) & \textbf{90.67} (0.19) & 90.30 (0.36) & 89.98 (0.51) \\
    \bottomrule
  \end{tabular}
  \label{tab:init-std}
\end{table}

\subsection{Memorization: the Delay (continuous copying) Task}
\label{sec:experiments:delay}

Next, we study how the synthetic reconstruction ability transfers to other tasks.
The \textbf{Delay Task} requires models to learn a sequence-to-sequence map whose output is the input lagged by a fixed time period (\cref{fig:delay-task}).
For recurrent models, this task can be interpreted as requiring models to maintain a memory buffer that continually remembers the latest elements it sees.
This capability was the original motivation for the Legendre Memory Unit, the predecessor to HiPPO-LegT, which was explicitly designed to solve this task because it can encode a spike kernel (\cref{fig:legt}).
In \cref{fig:delay-results}, we see that our new S4-FouT actually outperforms S4-LegT,
which both outperform all other methods when the timescale \( \dt \) is set correctly.
We note that this task with a lag of just 1000 time steps is too hard for baselines such as an LSTM and Transformer, which empirically did not learn better than random guessing (RMSE 0.43).

\begin{figure}[!ht]
\begin{subfigure}{\linewidth}%
    \centering
    \includegraphics[width=\linewidth]{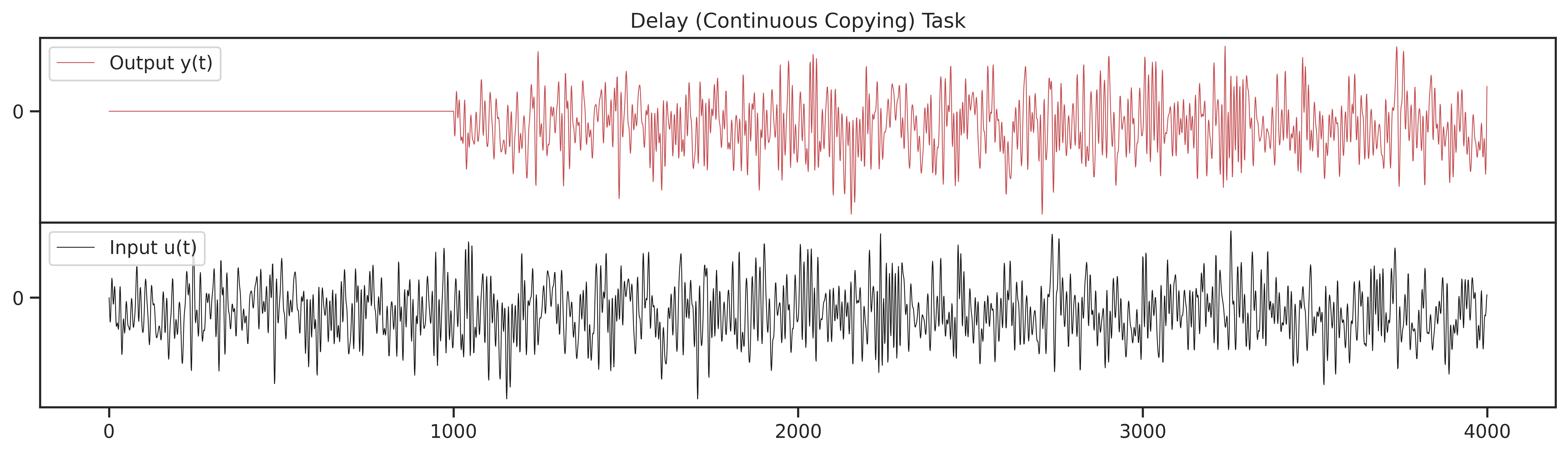}
    \caption{
      Models perform a mapping from \( \mathbbm{R}^{4000} \to \mathbbm{R}^{4000} \) where the target output is lagged by \( 1000 \) steps, with error measured by RMSE.
      The input is a white noise signal bandlimited to \( 1000Hz \).
      We use single layer SSMs with state size \( N=1024 \).
    }
    \label{fig:delay-task}
\end{subfigure}
\begin{subfigure}{\linewidth}%
    \centering
    \includegraphics[width=\linewidth]{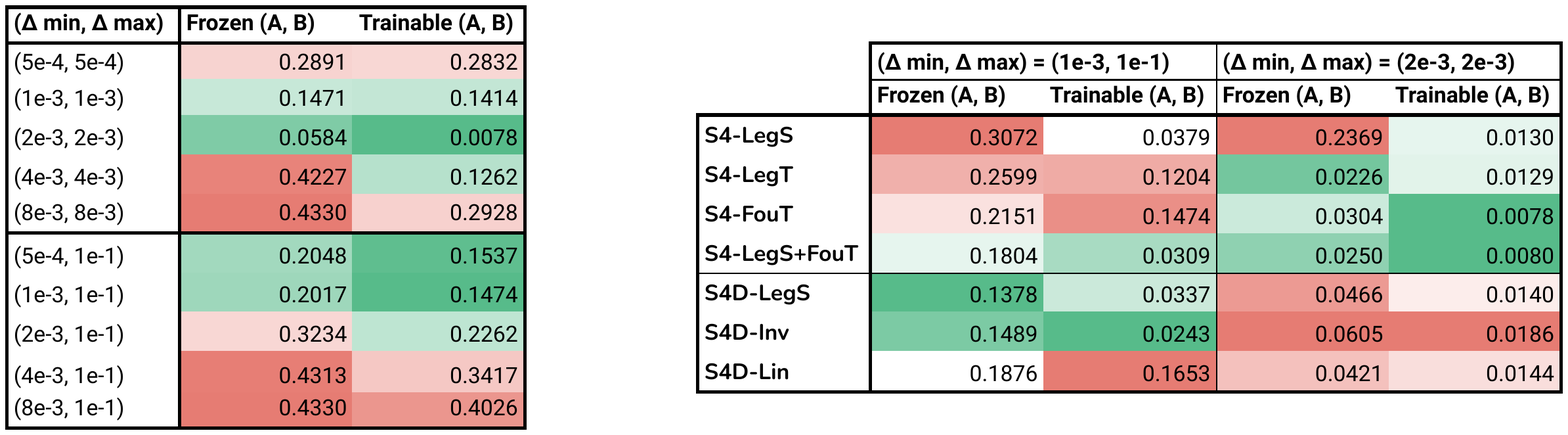}
    \caption{
      (\emph{Left}) Setting \( \dt \) appropriately makes a large difference.
      For FouT \( (\bm{A}, \bm{B}) \), which encode \emph{finite window} basis functions (\cref{fig:1}),
      the model can see a history of length up to \( \frac{2}{\dt} \).
      For example, setting \( \dt \) too large means the model cannot see \( 1000 \) steps in the past, and performs at chance.
      Performance is best at the theoretically optimal value of \( \dt = 2\cdot 10^{-3} \) which can encode a spike kernel at distance exactly \( 1000 \) steps (\cref{cor:delay}).
      (\emph{Right}) When \( \dt \) is set optimally, the proposed S4-FouT method is the best SSM as the theory predicts.
      When \( \dt \) is not set optimally, other methods perform better, including the simple diagonal methods proposed in~\citep{gu2022s4d}.
    }
    \label{fig:delay-results}
\end{subfigure}
\caption{(\textbf{Delay Task.)} A synthetic memorization task: definition (\cref{fig:delay-task}) and results (\cref{fig:delay-results}).}
\label{fig:delay}
\end{figure}

\section{Summary: How to Train Your HiPPO}

\begin{itemize}%
  \item SSMs represent convolution kernels that are linear combinations (parameterized by \( \bm{C} \)) of basis functions (parameterized by \( \bm{A} \) and \( \bm{B} \)).
  \item HiPPO is a general mathematical framework for producing matrices \( \bm{A} \) and \( \bm{B} \) corresponding to prescribed families of well-behaved basis functions. We derive HiPPO matrices corresponding to exponentially-scaled Legendre families (LegS) and the truncated Fourier functions (FouT).
    \begin{itemize}%
      \item HiPPO-LegS corresponds to the original S4 method and produces a very smooth, long-range family of kernels (\cref{fig:1}) that is still the best method for long-range dependencies among all S4 variants
      \item HiPPO-FouT is a finite window method that subsumes local convolutions (e.g. generalizing vanilla CNNs, \cref{cor:fourier-kernel}) and captures important transforms such as the sliding DFT or STFT
    \end{itemize}
  \item Independently of a notion of discretization, the timescale \( \dt \) has a simple interpretation as controlling the length of dependencies or ``width'' of the SSM kernels. Most intuitively, for a finite window method such as FouT, the kernels have length exactly \( \frac{1}{\dt} \), and generalize standard local convolutions used in deep learning.
\end{itemize}

A companion paper to this work builds on the theory introduced here to define a simpler version of S4 using \emph{diagonal} state matrices (S4D), which are approximations to the orthogonal SSMs we introduce and can inherit S4's strong modeling abilities \citep{gu2022s4d}.
It also includes experiments on more datasets comparing various state space models, including the S4 variants (S4-LegS and S4-FouT) introduced here.

\subsubsection*{Acknowledgments}
We gratefully acknowledge the support of DARPA under Nos. FA86501827865 (SDH) and FA86501827882 (ASED); NIH under No. U54EB020405 (Mobilize), NSF under Nos. CCF1763315 (Beyond Sparsity), CCF1563078 (Volume to Velocity), and 1937301 (RTML); ONR under No. N000141712266 (Unifying Weak Supervision); the Moore Foundation, NXP, Xilinx, LETI-CEA, Intel, IBM, Microsoft, NEC, Toshiba, TSMC, ARM, Hitachi, BASF, Accenture, Ericsson, Qualcomm, Analog Devices, the Okawa Foundation, American Family Insurance, Google Cloud, Swiss Re, Brown Institute for Media Innovation, Department of Defense (DoD) through the National Defense Science and Engineering Graduate Fellowship (NDSEG) Program,  Fannie and John Hertz Foundation, National Science Foundation Graduate Research Fellowship Program, Texas Instruments, and members of the Stanford DAWN project: Teradata, Facebook, Google, Ant Financial, NEC, VMWare, and Infosys.
Atri Rudra and Isys Johnson are supported by NSF grant CCF-1763481.
The U.S. Government is authorized to reproduce and distribute reprints for Governmental purposes notwithstanding any copyright notation thereon. Any opinions, findings, and conclusions or recommendations expressed in this material are those of the authors and do not necessarily reflect the views, policies, or endorsements, either expressed or implied, of DARPA, NIH, ONR, or the U.S. Government.

\newpage
\bibliography{biblio}

\begin{thebibliography}{14}
\providecommand{\natexlab}[1]{#1}
\providecommand{\url}[1]{\texttt{#1}}
\expandafter\ifx\csname urlstyle\endcsname\relax
  \providecommand{\doi}[1]{doi: #1}\else
  \providecommand{\doi}{doi: \begingroup \urlstyle{rm}\Url}\fi

\bibitem[Ba et~al.(2016)Ba, Kiros, and Hinton]{ba2016layer}
Jimmy~Lei Ba, Jamie~Ryan Kiros, and Geoffrey~E Hinton.
\newblock Layer normalization.
\newblock \emph{arXiv preprint arXiv:1607.06450}, 2016.

\bibitem[Chihara(2011)]{chihara}
T.~S. Chihara.
\newblock \emph{An introduction to orthogonal polynomials}.
\newblock Dover Books on Mathematics. Dover Publications, 2011.
\newblock ISBN 9780486479293.

\bibitem[Davis et~al.(2021)Davis, Gu, Dao, Choromanski, R\'e, Liang, and
  Finn]{davis2021catformer}
Jared~Quincy Davis, Albert Gu, Tri Dao, Krzysztof Choromanski, Christopher
  R\'e, Percy Liang, and Chelsea Finn.
\newblock Catformer: Designing stable transformers via sensitivity analysis.
\newblock In \emph{The International Conference on Machine Learning (ICML)},
  2021.

\bibitem[Glorot and Bengio(2010)]{glorot2010understanding}
Xavier Glorot and Yoshua Bengio.
\newblock Understanding the difficulty of training deep feedforward neural
  networks.
\newblock In \emph{Proceedings of the thirteenth international conference on
  artificial intelligence and statistics}, pages 249--256. JMLR Workshop and
  Conference Proceedings, 2010.

\bibitem[Gu et~al.(2020)Gu, Dao, Ermon, Rudra, and R{\'{e}}]{gu2020hippo}
Albert Gu, Tri Dao, Stefano Ermon, Atri Rudra, and Christopher R{\'{e}}.
\newblock Hippo: Recurrent memory with optimal polynomial projections.
\newblock In \emph{Advances in Neural Information Processing Systems
  (NeurIPS)}, 2020.

\bibitem[Gu et~al.(2021)Gu, Johnson, Goel, Saab, Dao, Rudra, and
  R\'{e}]{gu2021lssl}
Albert Gu, Isys Johnson, Karan Goel, Khaled Saab, Tri Dao, Atri Rudra, and
  Christopher R\'{e}.
\newblock Combining recurrent, convolutional, and continuous-time models with
  the structured learnable linear state space layer.
\newblock In \emph{Advances in Neural Information Processing Systems
  (NeurIPS)}, 2021.

\bibitem[Gu et~al.(2022{\natexlab{a}})Gu, Goel, and R{\'e}]{gu2022efficiently}
Albert Gu, Karan Goel, and Christopher R{\'e}.
\newblock Efficiently modeling long sequences with structured state spaces.
\newblock In \emph{The International Conference on Learning Representations
  ({ICLR})}, 2022{\natexlab{a}}.

\bibitem[Gu et~al.(2022{\natexlab{b}})Gu, Gupta, Goel, and R\'e]{gu2022s4d}
Albert Gu, Ankit Gupta, Karan Goel, and Christopher R\'e.
\newblock On the parameterization and initialization of diagonal state space
  models.
\newblock \emph{arXiv preprint arXiv:2206.11893}, 2022{\natexlab{b}}.

\bibitem[Gupta(2022)]{gupta2022diagonal}
Ankit Gupta.
\newblock Diagonal state spaces are as effective as structured state spaces.
\newblock \emph{arXiv preprint arXiv:2203.14343}, 2022.

\bibitem[He et~al.(2015)He, Zhang, Ren, and Sun]{he2015delving}
Kaiming He, Xiangyu Zhang, Shaoqing Ren, and Jian Sun.
\newblock Delving deep into rectifiers: Surpassing human-level performance on
  imagenet classification, 2015.

\bibitem[Hochreiter(1991)]{hochreiter1991untersuchungen}
Sepp Hochreiter.
\newblock Untersuchungen zu dynamischen neuronalen netzen.
\newblock \emph{Diploma, Technische Universit{\"a}t M{\"u}nchen}, 91\penalty0
  (1), 1991.

\bibitem[Ioffe and Szegedy(2015)]{ioffe2015batch}
Sergey Ioffe and Christian Szegedy.
\newblock Batch normalization: Accelerating deep network training by reducing
  internal covariate shift.
\newblock In \emph{International conference on machine learning}, pages
  448--456. PMLR, 2015.

\bibitem[Voelker et~al.(2019)Voelker, Kaji{\'c}, and
  Eliasmith]{voelker2019legendre}
Aaron Voelker, Ivana Kaji{\'c}, and Chris Eliasmith.
\newblock Legendre memory units: Continuous-time representation in recurrent
  neural networks.
\newblock In \emph{Advances in Neural Information Processing Systems}, pages
  15544--15553, 2019.

\bibitem[Voelker(2019)]{voelker2019dynamical}
Aaron~Russell Voelker.
\newblock \emph{Dynamical systems in spiking neuromorphic hardware}.
\newblock PhD thesis, University of Waterloo, 2019.

\end{thebibliography}

\newpage

\appendix

\section{Related Work}
\label{sec:related}

We discuss in more detail the differences between this work and the previous results in this line of work.

\para{Legendre Memory Unit (Legendre Delay Network).}
The HiPPO-LegT matrix \eqref{eq:legt} was first introduced as the LMU \citep{voelker2019dynamical,voelker2019legendre}.
The original motivation was to produce a state space model that approximates the \textbf{Delay Network},
which can be defined as the LTI system that transforms \( u(t) \) into \( u(t-1) \), i.e. lags the input by 1 time unit.
This can also be defined as the system with impulse response \( K(t) = \delta(t-1) \), i.e. convolves by the convolutional kernel with a \( \delta \) spike at time \( 1 \).

The connection between the Delay Network and Legendre polynomials was made in two steps.
First, the transfer function of the ideal system is \( \mathcal{L}[\delta(t-1)](s) = e^{-s} \) and must be approximated by a proper rational function to be represented as an SSM.
Taking Pad\'e approximants of this function yields ``optimal'' approximations by rational functions, which can then be distilled into a SSM \( (\bm{A}, \bm{B}, \bm{C}) \) whose transfer function \( \bm{C}(s\bm{I} - \bm{A})^{-1}\bm{B} \) matches it.
Second, the SSM basis \( e^{t\bm{A}}\bm{B} \) for this system can be computed and found to match Legendre polynomials.
However, despite making this connection and writing out formulas for this SSM, \citet{voelker2019dynamical} did not provide a complete proof of either of these two connections.

The preceding two steps that motivated the LDN can be informally written as the chain of transformations (i) transfer function \( e^{-s} \) \( \to \) (ii) SSM \( (\bm{A}, \bm{B}, \bm{C}) \) \( \to \) (iii) Legendre polynomials \( e^{t\bm{A}} \bm{B} \).
The HiPPO framework in a sense proceeded in the opposite direction.
\citet{gu2020hippo} started by defining the system that convolves with truncated Legendre polynomials, and with a particular differentiation technique showed that it could be written as a particular SSM which they called HiPPO-LegT.
This SSM turned out to be the same (up to a minor change in scaling) as the original \( (\bm{A}, \bm{B}) \) defined by the LMU, thus proving the second of the two steps relating this particular SSM to the Legendre polynomials.

In this work, we show the final piece in this reverse chain of equivalences.
In particular, we start from the LegT SSM \( (\bm{A}, \bm{B}, \bm{C}) \) and directly prove that its transfer function produces Pad\'e approximants of the exponential.
Our proof introduces new techniques in an inductive argument that can be applied to HiPPO SSMs beyond the LegT case,
and relates them to continued fraction expansions of the exponential.

We comment on a minor difference between the parameterization of HiPPO-LegT and the LMU.
The LMU is originally defined as
\begin{align*}
  x'(t) = \frac{1}{\theta} \bm{A} x(t) + \frac{1}{\theta} \bm{B} u(t)
\end{align*}
where \( \theta \) is a hyperparameter that controls the length of the window.
However, we point out that such constant scaling of the SSM is also controlled by the step size \( \dt \) as discussed in \cref{sec:hippo:timescale}.
Therefore \( \theta \) is redundant with \( \dt \), so the LegT matrices defined in \citep{gu2020hippo} and in this work do not have a concept of \( \theta \).
Additionally, in this work we redefine the LegT matrices \( (\bm{A}, \bm{B}) \) to be scaled by a factor of \( 2 \) to make them properly timescale normalized,
using the theory developed in \cref{sec:hippo:timescale}.

\paragraph{HiPPO and LSSL.}
As discussed in \cref{sec:background:hippo}, HiPPO can be thought of as a framework for deriving state space models corresponding to specific polynomial bases.
The original paper did not explicitly draw the connection to state space models, and also developed systems only for a few particular cases which were called LegS (a time-varying system involving Legendre polynomials), LegT (a time-invariant system with the truncated Legendre polynomials), and LagT (involving Laguerre polynomials).

A follow-up paper on Linear State Space Layers (LSSL) generalized these results to \emph{all} orthogonal polynomial families, and also generalized the flexibility of the time-varying component.
They produced SSMs \( x'(t) = \bm{A}(t) x(t) + \bm{B}(t) u(t) \) where at all times \( t \), \( x(t) \) can be viewed as the projection of the history of \( u(s) \mid_{s \le t} \) onto orthogonal polynomials \( p_n \) rescaled onto the interval \( [t-\theta(t), t] \), where \( \theta(t) \) is an arbitrary factor.
This generalized all 3 cases of the original HiPPO paper.

Compared to these works, our framework (\cref{def:hippo}) simplifies and generalizes the concepts directly in terms of (time-varying) state space models.
We define a more natural concept of \textbf{orthogonal SSM}, derive very general instantiations of it (\cref{sec:hippo:legs}), and flesh out its properties (\cref{sec:hippo:timescale}).
Our general result subsumes all prior cases including all cases of the LSSL as a direct corollary.
Some concrete advantages include:
\begin{itemize}%
  \item It allows more flexible transformations of polynomial bases, such as including a change-of-basis inside the polynomials. The previously expained case of LegS is an instance of this, which has basis functions \( L(e^{-t}) \) with an exponential change of basis, instead of vanilla polynomials.
  \item It can be applied to non-polynomial bases, such as the truncated Fourier basis FouT.
  \item It does not require considering multiple cases depending on where the basis functions are supported. Instead, we handle this by considering discontinuities in the basis functions.
\end{itemize}

\paragraph{S4.}
While the preceding discussion covers theoretical interpretations of SSMs,
S4 (and its predecessor LSSL) are the application of these SSMs to deep learning.
In comparison to prior works such as the LMU and HiPPO which require a pre-determined system \( (\bm{A}, \bm{B}) \) and incorporate them naively into an RNN,
LSSL and S4 use a full state space model \( (\bm{A}, \bm{B}, \bm{C}) \) as a completely trainable deep learning layer. %
Doing this required resolving computational problems with the SSM, which was the main focus of S4.
In this work, we make a distinction between HiPPO, which is the theoretical derivation and interpretation of particular SSMs \( (\bm{A}, \bm{B}) \),
and S4, which is the incorporation of those SSMs as a trainable deep learning layer with a particular algorithm.

\section{Experiment Details and Additional Experiments}
\label{sec:experiment-details}

\subsection{Delay (Continuous Copying) Task}
The Delay Task consists of input-output pairs where the input is a white noise signal of length 4000 bandlimited to 1000 Hz.
The output is the same signal shifted by 1000 steps (\cref{fig:delay-task}).
We use single layer linear SSMs with \( H=4 \) hidden units and state size \( N=1024 \).
Models are trained with the Adam optimizer with learning rate 0.001 for 20 epochs.

\subsection{Long Range Arena}

The settings for LRA use the same hyperparameters in~\citep{gu2022s4d}.
A more detailed protocol can be found in~\citep{gu2022s4d}.
To be self-contained, we recreate the same table of parameters in \cref{tab:lra-hyperparameters}.

\begin{table*}[!t]
  \caption{
    The values of the best hyperparameters found for LRA.
    LR is learning rate and WD is weight decay. BN and LN refer to Batch Normalization and Layer Normalization.
  }
  \small
  \centering
  \resizebox{\textwidth}{!}{%
    \begin{tabular}{@{}llllllllllll@{}}
      \toprule
                          & \textbf{Depth} & \textbf{Features \( H \)} & \textbf{Norm} & \textbf{Pre-norm} & {\bf Dropout} & {\bf LR} & {\bf Batch Size} & {\bf Epochs} & \textbf{WD} & \( (\dt_{min}, \dt_{max}) \) \\
      \midrule
      \textbf{ListOps}    & 8              & 128                       & BN            & False             & 0             & 0.01     & 50               & 40           & 0.05        & \( (0.001, 0.1) \)           \\
      \textbf{Text}       & 6              & 256                       & BN            & True              & 0             & 0.01     & 16               & 32           & 0.05        & \( (0.001, 0.1) \)           \\
      \textbf{Retrieval}  & 6              & 256                       & BN            & True              & 0             & 0.01     & 64               & 20           & 0.05        & \( (0.001, 0.1) \)           \\
      \textbf{Image}      & 6              & 512                       & LN            & False             & 0.1           & 0.01     & 50               & 200          & 0.05        & \( (0.001, 0.1) \)           \\
      \textbf{Pathfinder} & 6              & 256                       & BN            & True              & 0             & 0.004    & 64               & 200          & 0.03        & \( (0.001, 0.1) \)           \\
      \textbf{Path-X}     & 6              & 256                       & BN            & True              & 0             & 0.0005   & 32               & 50           & 0.05        & \( (0.0001, 0.01) \)         \\
      \bottomrule
    \end{tabular}%
  }
  \label{tab:lra-hyperparameters}
\end{table*}

\newcommand{\bsigma}{\overline{\sigma}}
\newcommand{\Aprime}{\bm{A'}}

\section{Proof Details}
\label{sec:theory-details}

We furnish the missing proofs from \cref{sec:background} in \cref{sec:proofs:background}.
We will describe our general framework and results in \cref{sec:proofs:general}, and prove the results in \cref{sec:hippo:legs,sec:hippo:finite,sec:hippo:timescale}
in \cref{sec:proofs:legs,sec:proofs:finite,sec:proofs:timescale} respectively.

\subsection{Proofs from Background}
\label{sec:proofs:background}

This corresponds to results from \cref{sec:background}.

\begin{proof}[Proof of \cref{prop:basis-uniqueness}]%
  Suppose for the sake of contradiction that there is a second basis and measure \( q_n, \mu \) such that \( q_n \) is complete and orthogonal w.r.t. \( \mu \), and \( K_n = q_n\mu \).
  By completeness, there are coefficients \( c_{\ell, k} \) such that
  \begin{align*}
    p_\ell = \sum_k c_{\ell, k} q_k
    .
  \end{align*}
  Then
  \begin{align*}
    \int p_\ell q_j \mu = \int \sum_k c_{\ell, k} q_k q_j \mu = \sum_k c_{\ell, k} \delta_{kj} = c_{\ell, j}.
  \end{align*}
  But \( q_j \mu = K_j = p_j \omega \), so
  \begin{align*}
    \int p_\ell q_j \mu = \int p_\ell p_j \omega = \delta_{\ell j}.
  \end{align*}
  So \( c_{\ell, j} = \delta_{\ell, j} \) which implies that \( p_\ell = q_\ell \) for all \( \ell \), as desired.
\end{proof}

\begin{proof}[Proof of \cref{prop:diag-not-ossm}]%
  The SSM kernels are \( K_n(t) = e^{-t(n+1)}\bm{B}_n \). Assume \( \bm{B}_n \neq 0 \) so that the kernels are not degenerate.

  Suppose for the sake of contradiction that this was a TOSSM with measure \( \omega(t) \).
  Then we must have
  \begin{align*}
    \int K_n(s) K_m(s) \omega(t)^{-1} \dd s = \delta_{n, m}
  \end{align*}
  Plugging in \( n=1, m=1 \) and \( n=0, m=2 \) gives
  \begin{align*}
    1 &= \int e^{-2t}\bm{B}_1 e^{-2t} \bm{B}_1 \omega(t)^{-1} \dd s = \bm{B}_1 \bm{B}_1 \int e^{-4t}\omega(t)^{-1} \dd s \\
    0 &= \int e^{-1t}\bm{B}_0 e^{-3t} \bm{B}_2 \omega(t)^{-1} \dd s = \bm{B}_0 \bm{B}_2 \int e^{-4t}\omega(t)^{-1} \dd s \\
  \end{align*}
  This is clearly a contradiction.
\end{proof}

\subsection{General theory}
\label{sec:proofs:general}

Consider a measure supported on \( [0,1] \) with density \( \omega(t)\mathbb{I}(t) \), where \( \mathbb{I}(t) \) is the indicator function for membership in the  interval $[0,1]$. Let the measure be equipped with a set of orthonormal basis functions \( p_0, \dots, p_{N-1} \), i.e.
\begin{equation}
\label{eq:OP-ortho}
\int p_j(s) p_k(s) \omega(s) \mathbb{I}(s) \dd s = \delta_{jk},
\end{equation}
where the integrals in this paper are over the range $[-\infty,\infty]$, unless stated otherwise. %

This is sufficient to derive an OSSM based on the HiPPO technique. The generalized HiPPO framework demonstrates how to build (T)OSSMs utilizing \emph{time warping} to shape the time interval and \emph{tilting} to construct new sets of orthogonal basis functions. %

Given an general interval $[\ell,r]$, we will use the notation $\mathbb{I}[\ell,r]$ to denote the indicator function for the interval $[\ell,r]$-- we will drop the interval if $\ell=0,r=1$.

We will need the notion of a ``\emph{time warping}'' function \( \bsigma \) as follows:
\begin{definition}
A {\em time warping function} is defined as
\[\bsigma(t,s):(-\infty,t]\to [0,1]\]
such that $\bsigma(t,t)=1$.

We will be using a special case of time-warping function, which we say {\em has a discontinuity at} $\discont$ for some $\discont\in (-\infty,t]$:
  \begin{align}
    \bsigma(t,s) = \mathbb{I}[t_0,t] \sigma(t,s), \label{eq:sigma-ind}
\end{align}

such that

\begin{equation}
  \label{eq:twf-partial-sigs}
  \frac{\partial}{\partial t} \left(\frac{\partial}{\partial s}\sigma_s(t, s)\right) = \twpc(t) \frac{\partial}{\partial s}\sigma(t, s).
\end{equation}

We allow for $\discont=-\infty$, in which case we think of the interval $[\discont,t]$ as $(-\infty,t]$.
\label{def:twf}
\end{definition}

Before proceeding, let us clarify our notation. We will use $\sigma_t$ and $\sigma_s$ to denote the partial derivatives $\frac{\partial}{\partial t}\sigma(t,s)$ and $\frac{\partial}{\partial s}\sigma(t,s)$ respectively. We will drop the parameters $(t,s)$ and use $f$ instead of $f(t,s)$ when it is clear from context to reduce notational clutter. Further, we will extend this notation to function composition, i.e. write $g\circ f(t,s))$ as $g(f)$ and function product, i.e. use $fgh$ instead of $f(t,s)g(t,s)g(t,s)$. Finally, we'll shorten $fgh\circ \phi(t,s)$ as $fgh(\phi)$.

We also define the tilting \( \chi \) and show that regardless of warping, we can construct a new orthogonal basis (note that the result holds for warping functions as in~\eqref{eq:sigma-ind} and not just those as in ~\eqref{eq:twf-partial-sigs}).

\begin{lemma}
\label{lem:new-ortho}
For the set of orthonormal functions \( \{p_n\}_{n=0}^{N-1} \) orthogonal over measure $\omega I$, the set of basis functions
\[
    q_k^t(\sigma(t,s)) = \chi(t, s) p_k(\sigma(t, s))  \]
are orthogonal over the measure
\[\mu(t,s) =  \omega(\sigma(t, s)) \mathbb{I}[t_0,t](s) \sigma_s(t, s) \chi(t, s)^{-2}\]
for time-warping function \(\sigma \) satisfying~\eqref{eq:sigma-ind} and any $\chi(t,s)$ that is non-zero in its support.

\end{lemma}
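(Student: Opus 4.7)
The plan is to reduce the claim to the given orthonormality of $\{p_n\}$ on $[0,1]$ with respect to $\omega\mathbb{I}$ via a direct change of variables $u = \sigma(t,s)$. The key observation is that the factor $\chi(t,s)^{-2}$ in $\mu$ is designed precisely to cancel the two factors of $\chi(t,s)$ picked up from $q_j^t q_k^t$, and the factor $\sigma_s(t,s)$ in $\mu$ is the Jacobian that appears in the substitution $u = \sigma(t,s)$.

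Concretely, I would fix $t$ and compute
\begin{align*}
  \int q_j^t(\sigma(t,s))\, q_k^t(\sigma(t,s))\, \mu(t,s)\, ds
  &= \int \chi(t,s)^2 p_j(\sigma(t,s)) p_k(\sigma(t,s))\,\omega(\sigma(t,s))\mathbb{I}[t_0,t](s)\,\sigma_s(t,s)\chi(t,s)^{-2}\, ds \\
  &= \int_{t_0}^{t} p_j(\sigma(t,s)) p_k(\sigma(t,s))\,\omega(\sigma(t,s))\,\sigma_s(t,s)\, ds.
\end{align*}
The $\chi$ factors cancel identically (this is the only use of the hypothesis that $\chi$ is nonzero on its support, so that $\chi^{-2}$ is well defined there). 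Now I substitute $u = \sigma(t,s)$, so $du = \sigma_s(t,s)\, ds$. By \cref{def:twf}, $\bsigma(t,\cdot):(-\infty,t]\to[0,1]$ with $\bsigma(t,t)=1$ and $\bsigma(t,s)=\mathbb{I}[t_0,t](s)\sigma(t,s)$, so the endpoints of integration $s=t_0$ and $s=t$ map to $u=0$ and $u=1$ respectively, giving
\begin{equation*}
  \int_{0}^{1} p_j(u)\,p_k(u)\,\omega(u)\, du = \int p_j(u)\,p_k(u)\,\omega(u)\,\mathbb{I}(u)\, du = \delta_{jk},
\end{equation*}
where the last equality is the assumed orthonormality \eqref{eq:OP-ortho}.

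The one subtlety — and the only place I expect to have to be careful — is justifying the change of variables. I need $\sigma(t,\cdot)$ to be monotonic (equivalently, $\sigma_s$ of constant sign) on $[t_0,t]$ so that it is a bijection $[t_0,t]\to[0,1]$. This is implicit in the time-warping setup of \cref{def:twf}: since $\bsigma(t,\cdot)$ is a well-defined map into $[0,1]$ with $\bsigma(t,t)=1$ and (for the interesting case) $\bsigma(t,t_0)=0$, and $\sigma$ is differentiable on $[t_0,t]$ via \eqref{eq:twf-partial-sigs}, one takes $\sigma_s\ge 0$ as a standing assumption on admissible warpings; then the substitution is valid and the Jacobian appears with the correct sign. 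If $\sigma_s\le 0$ instead, the limits of integration flip and the overall sign is absorbed, yielding the same identity. Either way, no other property of $\sigma$ (in particular, not \eqref{eq:twf-partial-sigs}) is needed for this lemma — that extra structure is only used later when deriving the SSM form from the basis.
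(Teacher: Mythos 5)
Your proof is correct and is essentially the paper's own argument: cancel the $\chi^2$ against $\chi^{-2}$ and reduce to the orthonormality \eqref{eq:OP-ortho} via the substitution $y=\sigma(t,s)$, $dy=\sigma_s\,ds$. Your added care about monotonicity of $\sigma(t,\cdot)$ (so that $[t_0,t]$ maps bijectively onto $[0,1]$, with $\sigma(t,t_0)=0$) is a point the paper leaves implicit, and your remark that \eqref{eq:twf-partial-sigs} is not needed matches the paper's own parenthetical note.
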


\begin{proof}

Consider the following sequence of equalities:
\begin{align*}
    \int p_j(\sigma) p_k(\sigma) \omega(\sigma) \mathbb{I}[t_0,t] \sigma_s ds & = \int_{t_0}^t p_j(\sigma) p_k(\sigma) \omega(\sigma)  \sigma_s ds  \\
    & = \int_{\sigma(t,t_0)}^{\sigma(t,t)} p_j(y) p_k(y) \omega(y)   dy  \\
    & = \int_{\sigma(t,t_0)}^{\sigma(t,t)} p_j(y) p_k(y) \omega(y)   dy  \\
    & = \int_{0}^{1} p_j(y) p_k(y) \omega(y)   dy  \\
    & = \int p_j(y) p_k(y) \omega(y) \mathbbm{I}(y)   dy  \\
    &= \delta_{jk}.
\end{align*}
In the above, the second equality follows from the substitution $y\gets \sigma(t,s)$ and hence $dy=\sigma_s ds$ and the final equality follows from~\eqref{eq:OP-ortho}.
Then since $\chi(t,s)$ is always non-zero, we have
\begin{align*}
  \int (\chi p_j(\sigma)) (\chi p_k(\sigma)) \omega(\sigma)\mathbb{I}[t_0,t] \sigma_s \chi^{-2} \dd s = \delta_{jk},
\end{align*}
as desired.
\end{proof}

Without loss of generality, we can split \( \chi \) into a product
\begin{equation}
\label{eq:chi-def}
\chi(t, s) = \frac 1{\psi(\sigma(t, s)) \phi(t,s)}
\end{equation}

of one part that depends on \( \sigma \) and another arbitrary component.

\paragraph{Time Warped HiPPO.}

Since we have an orthonormal basis and measure, we can try to derive the  (T)OSSM. For a given input signal \( u(t) \), the HiPPO coefficients are defined as the projections.

\begin{align*}
  x_n(t) &= \langle u, \chi p_n \rangle_{\mu} \\
   &= \int u(s) \cdot \chi \cdot (p_n \omega)(\sigma) \mathbb{I}[t_0,t] \sigma_s \chi^{-2} \dd s
\end{align*}

defined as inner product of \( u(t) \) with the tilted basis functions \( \chi p_n\) with respect to the measure $\mu$ as defined in  Lemma~\ref{lem:new-ortho}. For additional convenience, we  use the decomposition \( \chi = \psi^{-1}\phi^{-1}\) from ~\eqref{eq:chi-def} to get:

\begin{align}
  \label{eq:hippo-x}
  x_n(t)
  &= \int u(s) \cdot (p_n \omega \psi)(\sigma) \mathbb{I}[t_0,t] \sigma_s \phi \dd s
  .
\end{align}

The HiPPO technique is to differentiate through this integral in a way such that it can be related back to \( x_n(t) \) and other \( x_k(t) \).  We require for every \( n \), we require that there are a set of coefficients \( \{\gamma_{nk}\}_{k=0}^{N-1} \) such that

\begin{align}
    \sigma_t( p_n \omega \psi)'(\sigma) = \sum_{k=0}^{N-1} \gamma_{nk} (p_n \omega \psi)(\sigma) \label{eq:cpw-linear}
\end{align}

and for tilting component $\phi$

\begin{align}
  \frac{\dd}{\dd t}\phi(t, s) = d(t)\phi(t,s) %
  .\label{eq:tilting-phi}
\end{align}

\begin{theorem}
Consider a set of basis functions $p_n$ orthogonal over $\omega$, time warping \( \overline{\sigma}(t,s)\) as in~\eqref{eq:sigma-ind}, ~\eqref{eq:twf-partial-sigs}, and tilting \( \chi\) as in~\eqref{eq:chi-def} and \eqref{eq:tilting-phi} with the functions $\sigma,p_n,\omega,\psi$ obeying~\eqref{eq:cpw-linear}. If $\frac{d\discont}{dt}\ne 0$, further assume that for some vector $\Aprime$,  we have
as $N \to \infty$,
\begin{equation}
    u(\discont) = \overline{c}\sum_{k=0}^{N-1}\Aprime_k\cdot x_k(t) + \overline{d}u(t).
    \label{eq:ut-approx}
\end{equation}

Then \((\bm{A}^0 + (\twpc(t)+\phid(t))\bm{I} -\overline{c}\bm{D}\left(\Aprime\right)^\top ,\bm{B}-\overline{d}\bm{D})\) is an OSSM for basis functions \( \chi p_n(\sigma) \) with measure \( \omega\mathbb{I}[\discont,t]\sigma_s \chi^{-2} \) where
\[ \bm{A}^0_{nk} = \gamma_{nk} \]
with $\gamma_{nk}$ as in \eqref{eq:cpw-linear},
\[\bm{D}_n = (p_n \omega \psi)(\sigma(t,\discont)) (\sigma_s \phi)(t, \discont) \cdot \frac{d\discont}{dt},\] and
\[ \bm{B}_n = (p_n \omega \psi)(1)(\sigma_s \phi)(t,t)  .\]
\label{thm:gen-hippo-t0}
\end{theorem}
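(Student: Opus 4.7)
The plan is to verify the two requirements for the tuple $(\bm{A}, \bm{B})$ to be an OSSM in the sense of \cref{def:hippo}: orthogonality of the proposed basis against the proposed measure, and the ODE identity $x'(t) = \bm{A}x(t) + \bm{B}u(t)$. The orthogonality is already packaged: since $\chi$ is nonzero on its support and the $p_n$ are orthonormal w.r.t.\ $\omega\mathbb{I}$, \cref{lem:new-ortho} applied to the warping $\sigma$ produces precisely the orthonormal system $\chi p_n(\sigma)$ with measure $\omega(\sigma)\mathbb{I}[\discont,t]\sigma_s\chi^{-2}$ claimed in the theorem. All substantive work therefore lies in differentiating $x_n(t)$ and collecting terms.

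I rewrite \eqref{eq:hippo-x} by turning the indicator into genuine limits,
\[
  x_n(t) = \int_{\discont}^{t} u(s)\,(p_n \omega \psi)(\sigma(t,s))\, \sigma_s(t,s)\,\phi(t,s)\, ds,
\]
and apply Leibniz's rule. Three contributions appear: an upper boundary term at $s = t$, a lower boundary term at $s = \discont(t)$ carrying the chain-rule factor $\tfrac{d\discont}{dt}$, and an interior integral of the $t$-derivative of the integrand. Because $\sigma(t,t) = 1$ by the definition of a time-warping function, the upper boundary evaluates to $u(t)\,(p_n\omega\psi)(1)\,(\sigma_s\phi)(t,t) = u(t)\bm{B}_n$, matching the stated $\bm{B}_n$ exactly. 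The lower boundary contributes $-u(\discont)\bm{D}_n$, with $\bm{D}_n$ precisely as defined in the statement, since $\bm{D}_n$ already absorbs the factor $\tfrac{d\discont}{dt}$.

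For the interior term I differentiate the product $(p_n\omega\psi)(\sigma)\cdot\sigma_s\cdot\phi$ factor by factor. The chain rule together with \eqref{eq:cpw-linear} turns the $\sigma_t (p_n\omega\psi)'(\sigma)$ piece into $\sum_k \gamma_{nk}\,(p_k\omega\psi)(\sigma)$; the hypothesis \eqref{eq:twf-partial-sigs} turns $\tfrac{\partial}{\partial t}\sigma_s$ into $\twpc(t)\sigma_s$; and \eqref{eq:tilting-phi} turns $\phi_t$ into $\phid(t)\phi$. Reintegrating against $u(s)$, the first of these collapses to $(\bm{A}^0 x)_n(t)$ by the definition $\bm{A}^0_{nk}=\gamma_{nk}$, while the remaining two combine into $(\twpc(t)+\phid(t))\,x_n(t)$. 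Assembling everything gives
\[
  x_n'(t) = (\bm{A}^0 x)_n(t) + (\twpc(t)+\phid(t))\,x_n(t) + \bm{B}_n u(t) - \bm{D}_n u(\discont).
\]
When $\tfrac{d\discont}{dt} = 0$ the vector $\bm{D}$ vanishes identically and this is already the claimed ODE. Otherwise, I invoke the asymptotic identity \eqref{eq:ut-approx} to substitute $u(\discont) = \overline{c}\sum_k \Aprime_k x_k(t) + \overline{d}\,u(t)$ in the $N\to\infty$ limit; this produces the rank-one correction $-\overline{c}\,\bm{D}(\Aprime)^\top$ to the state matrix and the correction $-\overline{d}\bm{D}$ to $\bm{B}$, exactly matching the theorem's statement.

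The main obstacle is purely bookkeeping: five distinct sources of $t$-dependence (the upper limit $t$, the moving lower limit $\discont(t)$, the warp $\sigma(t,\cdot)$, its partial $\sigma_s$, and the tilt $\phi$) all contribute under Leibniz's rule, and I must verify that each resulting term gets absorbed into exactly one of $\bm{A}^0$, the scalar factor $\twpc(t)+\phid(t)$, the boundary vector $\bm{D}$, or $\bm{B}$, with no stray remainders. The three structural assumptions \eqref{eq:cpw-linear}, \eqref{eq:twf-partial-sigs}, \eqref{eq:tilting-phi} are exactly what is needed for these cancellations to line up, and the approximation \eqref{eq:ut-approx} is the only asymptotic ingredient, invoked solely when the discontinuity is moving.
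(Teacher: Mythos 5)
Your proposal is correct and follows essentially the same route as the paper's proof: the paper writes the indicator as a difference of Heaviside functions and differentiates (producing the same interior term handled by \eqref{eq:cpw-linear}, \eqref{eq:twf-partial-sigs}, \eqref{eq:tilting-phi}, plus boundary delta contributions at $s=t$ and $s=\discont$), which is exactly your Leibniz-rule decomposition, followed by the same substitution of \eqref{eq:ut-approx} for the moving-discontinuity case. The orthogonality part via \cref{lem:new-ortho} also matches the paper's treatment.
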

\begin{proof}

Applying the Leibniz rule to~\eqref{eq:hippo-x}, we get
\begin{align*}
    x'_n(t) = x^{(0)}_n(t) + x^{(1)}_n(t) + x^{(2)}_n(t) + x^{(3)}_n(t),
\end{align*}
where
\begin{align*}
    x^{(0)}_n(t) &= \int u(s) \cdot \sigma_t (p_n\omega\psi)'(\sigma) \mathbb{I}[t_0,t] \sigma_s \phi \dd s \\
    x^{(1)}_n(t)&=\int u(s) \cdot (p_n\omega\psi)(\sigma) \mathbb{I}[t_0,t] \left[\frac{\partial}{\partial t} (\sigma_s \phi) \right] \dd s
\end{align*}
and the $x^{(2)}_n(t) + x^{(3)}_n(t)$ terms capture the term we get when differentiating \( \mathbb{I}[t_0,t]\).

Let us consider each term separately. The first term
\begin{equation}
  \label{eq:hippo-term-0}
  x^{(0)}_n(t) = \int u(s) \cdot \sigma_t (p_n\omega\psi)'(\sigma) \mathbb{I}[t_0,t] \sigma_s \phi \dd s
\end{equation}
 corresponds to the differentiation of the basis functions and measure. In order to relate this to \( \{x_k(t)\} \), it suffices that \( \sigma_t (p_n \omega \psi)'(\sigma) \) satisfies (\ref{eq:cpw-linear}) %
which implies that when we vectorize this, we get $x^{(0)}(t)=  \bm{A}^0\cdot x(t)$.

For additional warping and tilting terms, we consider

\begin{align*}
  x^{(1)}_n(t) = \int u(s) \cdot (p_n\omega\psi)(\sigma) \mathbb{I}[t_0,t] \left[\frac{\partial}{\partial t} (\sigma_s \phi) \right] \dd s.
\end{align*}

To reduce this term to \( x_n(t) \), recall from ~\eqref{eq:twf-partial-sigs} that  %

\begin{align*}
  \partial_t (\sigma_s) &= \twpc(t)\sigma_s.
\end{align*}

Then the above and ~\eqref{eq:tilting-phi} imply
\begin{align*}
  \partial_t (\sigma_s \phi) &= \twpc(t)(\sigma_s\phi) + \phid(t)(\sigma_s\phi) %
\end{align*}

where \(\twpc(t), \phid(t) \) are defined as in ~\eqref{eq:twf-partial-sigs} and ~\eqref{eq:tilting-phi}.

We will end up with \( x^{(1)}_n(t) = (\twpc(t) + \phid(t)) x_n(t) \). This leads to the the vectorized form $x^{(1)}(t) = (\twpc(t) + \phid(t))\bm{I}x(t)$. %

We now need to handle
\begin{equation}
\label{eq:x2+x3}
 x^{(2)}_n(t)+x^{(3)}_n(t) = \int u(s) \cdot (p_n\omega\psi)(\sigma) \left[\frac{\partial}{\partial t} \mathbb{I}[t_0,t] \right]  (\sigma_s \phi)  \dd s.
\end{equation}

For the above note that
\[\mathbb{I}[t_0,t](s) = H(s-\discont)-H(s-t),\]
where $H(x)$ is the ``heaviside step function." It is know that $H'(x)=\delta(x)$, which implies
\[\frac{\partial}{\partial t} \mathbb{I}[t_0,t]=\delta(s-t) - \frac{dt_0}{dt} \delta(s-\discont).\]
Using the above in RHS of~\eqref{eq:x2+x3}, we separate out $ x^{(2)}_n(t)$ and $x^{(3)}_n(t)$ as follows.
First, define

\begin{align*}
  x^{(2)}_n(t)
  &= \int u(s) \cdot (p_n \omega \psi)(\sigma) \delta(s-t) \sigma_s \phi \dd s
  \\
  &= u(t) \cdot (p_n \omega \psi)(\sigma(t,t)) (\sigma_s \phi)(t, t) \\
  &=u(t) \cdot (p_n \omega \psi)(1) (\sigma_s \phi)(t, t).
\end{align*}
In the last equality, we have used the fact that  \( \sigma(t, t) = \bsigma(t,1)=1 \) by definition.
 It follows that in vectorized form we have \( x^{(2)}(t)= \bm{B}u(t)  \).

 Finally, define

\begin{align*}
   x^{(3)}_n(t)&= -\int u(s) \cdot (p_n \omega \psi)(\sigma) \delta(s-\discont) \frac{d \discont}{dt} \sigma_s \phi \dd s
  \\
    &= -u(\discont) \cdot (p_n \omega \psi)(\sigma(t,\discont)) (\sigma_s \phi)(t, \discont) \cdot \frac{d\discont}{dt} %
\end{align*}

If $\frac{d\discont}{dt}=0$, then we have $\bm{D}=\bm{0}$ and hence we have $x^{(3)}(t)=\bm{0}=-\overline{c}\bm{D}\left(\Aprime\right)^\top x(t) - \overline{d}\bm{D}u(t)$

If $\frac{d\discont}{dt}\ne 0$, then as $N\to\infty$, from \eqref{eq:ut-approx}, the above comes out to

 \begin{align*}
      x^{(3)}_n(t) &=  -\left(\overline{c}\sum_{k=0}^{N-1}\Aprime_k\cdot x_k(t) + \overline{d}u(t)\right) \cdot (p_n \omega \psi)(\sigma(t,\discont)) (\sigma_s \phi)(t, \discont) \cdot \frac{d\discont}{dt}
 \end{align*}

 It follows that in vectorized form we have \( x^{(3)}(t)=-\overline{c}\bm{D}\left(\Aprime\right)^\top x(t) - \overline{d}\bm{D}u(t)  \). The result follows after combining the terms.

\end{proof}

We see that the behavior of is the model is dictated by $\discont$. In particular, in this paper, we will consider two special cases.

\begin{corollary}[$\discont$ independent of $t$]
The SSM $((\bm{A}+c(t)+d(t)\bm{I}),\bm{B})$  satisfying conditions of Theorem \ref{thm:gen-hippo-t0} with $\discont$ independent of $t$,  is an OSSM for basis functions \( \chi p_n(\sigma) \) with measure \( \omega\mathbb{I}[\discont,t]\sigma_s \chi^{-2} \) where
   \(\bm{A} = \gamma_{nk} \) as in \eqref{eq:cpw-linear}
 and \( \bm{B}_n =   (p_n \omega \psi)(1)(\sigma_s \phi)(t,t) \).

\label{cor:hippo-t0-infty}
\end{corollary}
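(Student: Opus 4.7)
The plan is to derive the corollary as an immediate specialization of Theorem \ref{thm:gen-hippo-t0} in the case $\frac{d\discont}{dt}=0$. First I would observe that under the hypothesis, the auxiliary vector
\[ \bm{D}_n = (p_n \omega \psi)(\sigma(t,\discont))(\sigma_s \phi)(t, \discont) \cdot \frac{d\discont}{dt} \]
from the theorem is identically the zero vector, regardless of the other factors. This single observation is what drives the entire collapse.

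Next I would substitute $\bm{D}=\bm{0}$ into the general conclusion. The state matrix $\bm{A}^{0} + (\twpc(t)+\phid(t))\bm{I} - \overline{c}\bm{D}(\Aprime)^{\top}$ then simplifies to $\bm{A}^{0} + (\twpc(t)+\phid(t))\bm{I}$, matching the corollary's state matrix (with $\bm{A}$ identified as $\bm{A}^0$, whose entries are the coefficients $\gamma_{nk}$ of \eqref{eq:cpw-linear}). Similarly $\bm{B}-\overline{d}\bm{D}$ collapses to $\bm{B}$ with $\bm{B}_n = (p_n \omega \psi)(1)(\sigma_s \phi)(t,t)$. The basis $\chi p_n(\sigma)$ and the measure $\omega \mathbb{I}[\discont,t] \sigma_s \chi^{-2}$ carry over verbatim from the theorem, since neither involves $\bm{D}$.

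To confirm that nothing was swept under the rug, I would trace the $x^{(3)}_n(t)$ term from the proof of Theorem \ref{thm:gen-hippo-t0}, which is the boundary contribution from differentiating $\mathbb{I}[\discont,t]$ at its lower endpoint. Because $\frac{d\discont}{dt}=0$, this term vanishes before any approximation is required, so in particular the nonstandard hypothesis \eqref{eq:ut-approx} about linearly reconstructing $u(\discont)$ from the state variables is never invoked. The only surviving pieces $x^{(0)}, x^{(1)}, x^{(2)}$ yield precisely $\bm{A}^{0} x(t)$, $(\twpc(t)+\phid(t)) x(t)$, and $\bm{B} u(t)$, assembling to the SSM claimed in the statement.

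There is no real obstacle here; the proof is a routine specialization and bookkeeping check. The main conceptual takeaway I would want to highlight is that fixing the lower endpoint $\discont$ is exactly the regime in which the delicate hypothesis \eqref{eq:ut-approx} drops out of the general theorem, which is what makes the resulting OSSM construction clean enough to apply in the subsequent examples (e.g.\ the time-invariant LegS and FouT derivations).
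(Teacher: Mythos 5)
Your proposal is correct and follows exactly the paper's route: specializing Theorem~\ref{thm:gen-hippo-t0} by noting that $\discont$ independent of $t$ gives $\frac{\dd \discont}{\dd t}=0$, hence $\bm{D}=\bm{0}$, so the general state matrix and input vector collapse to the claimed form. Your extra tracing of the $x^{(3)}_n(t)$ boundary term and the observation that \eqref{eq:ut-approx} is never needed is a sound elaboration of the same one-line argument the paper gives.
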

\begin{proof}
Follows from Theorem \ref{thm:gen-hippo-t0}. Since
 $\discont$ is independent of $t$, then  \( \frac{\dd t_0}{\dd t} = 0\), and \( \bm{D} = \bm{0} \).
\end{proof}

\begin{corollary}[$\discont=t-\theta$]
The SSM \((\bm{A}^0 + (\twpc(t)+\phid(t))\bm{I} - \overline{c}\bm{D}\bm{A}',\bm{B}-\overline{d}\bm{D})\)  satisfying conditions of Theorem \ref{thm:gen-hippo-t0} with  $\discont=t-\theta$ for a fixed $\theta$, is an OSSM with basis functions \(\chi p_n(\sigma) \) with measure \(  \omega\mathbb{I}[\discont,t]\sigma_s \chi^{-2} \) where \( \bm{A}^0_{nk} = \gamma_{nk} \) as in (\ref{eq:cpw-linear}),  $\bm{D}_n = (p_n \omega \psi)(\sigma(t,t - \theta)) (\sigma_s \phi)(t, t-\theta)$, and \( \bm{B}_n = (p_n \omega \psi)(1)(\sigma_s\phi(t,t)  \).
\label{cor:hippo-gen-fin}
\end{corollary}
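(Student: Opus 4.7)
The plan is to apply \cref{thm:gen-hippo-t0} directly with the specialization $\discont(t) = t - \theta$ for a fixed $\theta > 0$, and then to verify that the discontinuity condition \eqref{eq:ut-approx} is in fact satisfied in the $N \to \infty$ limit, so that all the ingredients of the theorem are in place.

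First I would compute $\frac{d \discont}{dt} = 1$, which is the main quantitative difference from \cref{cor:hippo-t0-infty}. Plugging this into the conclusion of \cref{thm:gen-hippo-t0} immediately yields the state matrix $\bm{A}^0 + (\twpc(t) + \phid(t))\bm{I} - \overline{c}\bm{D}(\Aprime)^\top$, the input vector $\bm{B} - \overline{d}\bm{D}$, and the stated formula for $\bm{D}_n = (p_n \omega \psi)(\sigma(t, t-\theta))(\sigma_s \phi)(t, t-\theta)$, evaluated at the sliding left endpoint $s = t - \theta$ instead of being identically zero. The forms of $\bm{A}^0_{nk} = \gamma_{nk}$ and $\bm{B}_n = (p_n \omega \psi)(1)(\sigma_s \phi)(t,t)$ are inherited verbatim from the theorem, since they depend only on $t$ and not on $\discont$.

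The remaining step, and what I expect to be the main obstacle, is verifying the approximation hypothesis \eqref{eq:ut-approx}, namely that there exists a vector $\Aprime$ and scalars $\overline{c}, \overline{d}$ such that $u(t - \theta) = \overline{c} \sum_k \Aprime_k x_k(t) + \overline{d}\, u(t)$ as $N \to \infty$. By construction the state $x_k(t) = \langle u, \chi p_k(\sigma) \rangle_\mu$ are exactly the projections of $u$ onto the orthonormal basis $\{\chi p_k(\sigma)\}$ with respect to $\mu$ on $[t-\theta, t]$, so by \cref{prop:hippo-reconstruction} (combined with completeness of the basis $p_n$) the function $u\mid_{[t-\theta, t]}$ can be reconstructed in the limit $N \to \infty$ as $u(s) = \sum_k x_k(t) \cdot \chi(t,s) p_k(\sigma(t,s))$. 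Evaluating this reconstruction at the left endpoint $s = t - \theta$ (which lies in the support of $\omega \mathbb{I}[\discont, t]$) gives precisely a linear expression of the form $u(t - \theta) = \sum_k \Aprime_k x_k(t)$ with $\Aprime_k = \chi(t, t-\theta)p_k(\sigma(t, t-\theta))$, so \eqref{eq:ut-approx} holds with $\overline{c} = 1$ and $\overline{d} = 0$ (or more generally with any valid decomposition one chooses). Once this is in hand, the corollary falls out immediately from \cref{thm:gen-hippo-t0}.
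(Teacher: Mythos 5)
Your first two paragraphs are exactly the paper's proof: the corollary is stated as ``satisfying conditions of Theorem \ref{thm:gen-hippo-t0},'' so the paper simply sets $\discont = t-\theta$, notes $\frac{d\discont}{dt}=1$, and reads off the conclusion — a one-line specialization, which you reproduce correctly.

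The problem is your third paragraph. First, verifying \eqref{eq:ut-approx} is not part of this corollary at all: the reconstruction hypothesis is one of the ``conditions of Theorem \ref{thm:gen-hippo-t0}'' that the corollary \emph{assumes}, precisely because it cannot be established at this level of generality; the paper verifies it separately in each instantiation (LegT in \cref{cor:hippo-gen-legt}, FouT in the proof of \cref{thm:hippo-fourier}). Second, and more importantly, the argument you give for it is not sound. Completeness of $\{p_n\}$ gives convergence of the expansion $\sum_k x_k(t)\,\chi(t,s)p_k(\sigma(t,s))$ to $u$ in the $L^2(\mu)$ sense on the window, but you then \emph{evaluate pointwise at the endpoint} $s=\discont=t-\theta$, which is exactly where $L^2$ convergence tells you nothing. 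The FouT case is a concrete counterexample to your claim that one may always take $\overline{c}=1$, $\overline{d}=0$ with $\Aprime_k=\chi(t,t-\theta)p_k(\sigma(t,t-\theta))$: there the truncated Fourier reconstruction at the window endpoint converges to the \emph{average} $\tfrac{u(t)+u(t-1)}{2}$ (the basis effectively periodizes $u$ on the window, creating a jump at the seam), which is why the paper must take $\overline{d}=-1$ and run a separate symmetrization argument ($v(s)=u(2t-s-1)$, $w=\tfrac{u+v}{2}$) to extract $u(t-1)$. So your proposed general verification would produce the wrong $\Aprime$, $\overline{c}$, $\overline{d}$ — and hence the wrong $\bm{A}$, $\bm{B}$ — in the Fourier case. (There is also a mild circularity in invoking \cref{prop:hippo-reconstruction}, which is a statement about OSSMs, inside the argument that the system is an OSSM; the honest route is to argue directly from completeness of $\{p_n\}$ w.r.t.\ $\omega$, but even then only in $L^2$, not pointwise at the endpoint.) Dropping the third paragraph and leaving \eqref{eq:ut-approx} as a hypothesis, as the statement itself does, makes your proof correct and identical to the paper's.
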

\begin{proof}
This follows directly from Theorem \ref{thm:gen-hippo-t0} by setting $\discont=t-\theta$.
\end{proof}

\subsection{LegS (and LSSL?)}
\label{sec:proofs:legs}
\subsubsection{Explanation of S4-LegS}
Consider the case when

\begin{align*}
    \sigma = \omega^{-1},
\end{align*}

 i.e. the measure is completely ``tilted'' away, and let
\begin{equation}
  \label{eq:twf-partial-t}
  \frac{\partial}{\partial t} \sigma(t, s) = \twpa(t) \sigma(t, s)  + \twpb(t).
\end{equation}
Let's consider the special case of \eqref{eq:twf-partial-t} where \( \twpb(t)=0 \). This is most generally satisfied by
\begin{align*}
  \sigma(t, s) = \exp(\twpa(t) + z(s)).
\end{align*}

Note that the condition \( \sigma(t, t) = 1 \) forces \( z = -\twpa \). Hence, we have
\begin{align}
  \sigma(t, s) = \exp(\twpa(s) - \twpa(t)).
  \label{eq:exp-twf}
\end{align}

We now consider the following special case of \cref{cor:hippo-t0-infty}:

\begin{corollary}%
Let $\eta \geq 0$. The SSM \(  (-a'(t) (\bm{A} + (\eta + 1)\bm{I}), a'(t)\bm{B}) \), where $\discont$ is independent of $t$, is an OSSM for basis functions and measure
  \begin{align*}
    \frac{\omega(\sigma)}{\sigma^{\eta}}p_n(\sigma) \qquad
    \omega(t, s) &= \mathbbm{I}(\sigma) \frac{a'(s)\sigma^{2\eta+1}}{\omega(\sigma) }
  \end{align*}
  where $\sigma$ satisfies \eqref{eq:exp-twf},
 \begin{align}
  \phi(t, s) = \exp(\eta \twpa(s) - \eta \twpa(t)) = \sigma^{\eta}, %
\end{align} \(\bm{A} = \alpha_{nk} \) such that
\begin{align}
    y p_n '(y) = \sum_{k=0}^{n-1} \alpha_{nk} p_k(y) \label{eq:pn-lin-comb}
\end{align}
 and

\begin{align*}
    \bm{B}_n =  p_n(1).
\end{align*}
\label{cor:exp-twf}
\end{corollary}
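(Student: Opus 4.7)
The plan is to directly instantiate the general framework of Theorem \ref{thm:gen-hippo-t0} in the special case of Corollary \ref{cor:hippo-t0-infty} (since \( \discont \) is assumed independent of \( t \), so that \( \bm{D} = \bm{0} \)). The specific choices are:
\begin{align*}
\sigma(t,s) = \exp(\twpa(s) - \twpa(t)), \qquad \psi(\sigma) = \omega(\sigma)^{-1}, \qquad \phi(t,s) = \sigma^{\eta},
\end{align*}
so that the tilting $\chi = 1/(\psi(\sigma)\phi)$ becomes $\omega(\sigma)/\sigma^{\eta}$, which immediately yields the claimed basis functions $\chi\, p_n(\sigma) = \tfrac{\omega(\sigma)}{\sigma^{\eta}} p_n(\sigma)$ and the claimed measure $\mu = \omega(\sigma)\mathbb{I}(\sigma)\sigma_s \chi^{-2} = \mathbb{I}(\sigma)\,a'(s)\sigma^{2\eta+1}/\omega(\sigma)$, once we plug in $\sigma_s$.

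Next I would verify each hypothesis of Theorem \ref{thm:gen-hippo-t0} by direct differentiation. From $\sigma = \exp(a(s)-a(t))$ one computes $\sigma_t = -a'(t)\sigma$ and $\sigma_s = a'(s)\sigma$, so that $\partial_t \sigma_s = -a'(t)\sigma_s$, giving $\twpc(t) = -a'(t)$ in \eqref{eq:twf-partial-sigs}. For the tilting, $\partial_t \phi = -\eta a'(t)\phi$, so $\phid(t) = -\eta a'(t)$ in \eqref{eq:tilting-phi}. Because $\omega\psi \equiv 1$, the product $p_n\omega\psi$ collapses to $p_n$, so the linearization condition \eqref{eq:cpw-linear} becomes $\sigma_t\, p_n'(\sigma) = -a'(t) \cdot \sigma p_n'(\sigma)$; invoking the hypothesis \eqref{eq:pn-lin-comb} that $y p_n'(y) = \sum_{k<n}\alpha_{nk} p_k(y)$, this equals $-a'(t)\sum_{k}\alpha_{nk} p_k(\sigma)$, so $\gamma_{nk} = -a'(t)\alpha_{nk}$, i.e.\ $\bm{A}^0 = -a'(t)\bm{A}$ with $\bm{A}_{nk}=\alpha_{nk}$.

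Assembling the pieces via Corollary \ref{cor:hippo-t0-infty}: the state matrix is
\begin{align*}
\bm{A}^0 + (\twpc(t) + \phid(t))\bm{I} = -a'(t)\bm{A} - (1+\eta)a'(t)\bm{I} = -a'(t)\bigl(\bm{A} + (\eta+1)\bm{I}\bigr),
\end{align*}
matching the claim. The input matrix is $\bm{B}_n = (p_n\omega\psi)(1)\cdot(\sigma_s\phi)(t,t) = p_n(1)\cdot a'(t)\cdot 1 = a'(t)p_n(1)$, factoring the $a'(t)$ outside as stated. Finally, because $a$ is increasing and $\sigma(t,t) = 1$ with $\sigma(t,s)\to 0$ as $a(s)\to -\infty$, one can take $\discont = -\infty$ (independent of $t$), so $\mathbb{I}[\discont,t](s)$ is exactly $\mathbb{I}(\sigma(t,s))$, confirming the indicator in the measure.

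The only real subtlety, beyond careful bookkeeping of the partials, will be checking that the linearization \eqref{eq:cpw-linear} holds \emph{precisely} in the form required by Theorem \ref{thm:gen-hippo-t0}: one must recognize that the identity $yp_n'(y) = \sum_{k<n}\alpha_{nk}p_k(y)$ is an algebraic statement about $p_n$ as a polynomial of degree $n$ (so $yp_n'$ has degree $n$ with leading coefficient cancellation giving $k<n$ entries, i.e.\ lower triangular $\bm{A}$). Once this is identified, the rest of the proof is a mechanical substitution and matches the closure under scalar scaling discussed in the paper, so no further analytical work is needed.
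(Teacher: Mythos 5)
Your proposal is correct and follows essentially the same route as the paper's own proof: both instantiate \cref{cor:hippo-t0-infty} (hence \cref{thm:gen-hippo-t0}) with $\psi=\omega^{-1}$, $\phi=\sigma^{\eta}$, verify $\partial_t\sigma_s=-a'(t)\sigma_s$ and $\partial_t\phi=-\eta a'(t)\phi$ so that $\twpc(t)=-a'(t)$, $\phid(t)=-\eta a'(t)$, reduce \eqref{eq:cpw-linear} to \eqref{eq:pn-lin-comb} via $\omega\psi\equiv 1$ giving $\gamma_{nk}=-a'(t)\alpha_{nk}$, and assemble $-a'(t)(\bm{A}+(\eta+1)\bm{I})$ and $\bm{B}_n=a'(t)p_n(1)$. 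Your explicit identification of $\mathbb{I}[\discont,t]$ with $\mathbbm{I}(\sigma)$ and the factored $a'(t)$ in $\bm{B}$ are just slightly more careful bookkeeping of steps the paper handles implicitly.
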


\begin{proof}
 Given a orthonormal basis $p_0, p_1, \dots, p_{N-1}$ with respect to a measure $\omega$. Note that time-warping function $\sigma$ satisfying \eqref{eq:exp-twf} implies that $\sigma_s =  a'(s)\sigma$.

 We fix tilting  \(\chi(t, s) =\frac{\omega(\sigma)}{\sigma^{\eta}}\), which in turn follows by setting
 \[\psi = \omega^{-1}.\]
 We show shortly that we satisfy the pre-conditions of \cref{cor:hippo-t0-infty}, which implies (with our choice of $\chi$ and $\sigma$) that we have an OSSM with basis functions $p_n(t,s) = \frac{\omega(\sigma)}{\sigma^{\eta}}p_n(\sigma)$  and measure

\begin{align*}
    \omega(t,s) &=  \omega(\sigma(t, s)) \mathbb{I}[t_0,t] \sigma_s(t, s) \chi(t, s)^{-2} \\
     &= \mathbb{I}[\discont,t] \frac{\twpa'(s) \sigma^{2\eta+1}}{\omega(\sigma) }
\end{align*}

To complete the proof, we show that out choice of paramters above satisfies the conditions of  \cref{cor:hippo-t0-infty} (by showing they satisfy the conditions of \cref{thm:gen-hippo-t0}). We verify that \( \sigma \) and \(\phi \) satisfy~\eqref{eq:twf-partial-sigs} and~\eqref{eq:tilting-phi}, noting that

\begin{align*}
  \partial_t (\sigma_s) &= -\twpa'(t)\sigma_s ,
\end{align*}
and
\begin{align*}
  \partial_t (\phi) &= -\eta\twpa'(t)\phi.
\end{align*}
 This implies that setting $\twpc(t) = -\twpa'(t)$ and $\phid(t) = -\eta \twpa'(t)$ is enough to satisfy~\eqref{eq:twf-partial-sigs} and~\eqref{eq:tilting-phi}.

 Further, note that ~\eqref{eq:exp-twf} and the fact that $\psi = \omega^{-1}$ imply that

\begin{align*}
    \sigma_t( p_n \omega \psi)'(\sigma)
    &= -a'(t)\sigma p_n'(\sigma).
\end{align*}

It follows that~\eqref{eq:cpw-linear} is satisfied as long as

\begin{align*}
    \sigma p_n '(\sigma) = \sum_{k=0}^{n-1} \alpha_{nk} p_k(\sigma) %
\end{align*}

for some set of coefficients \( \{\alpha_{nk}\}_{k=0}^{N-1} \), which is exactly~\eqref{eq:pn-lin-comb}. This implies the $\gamma_{nk}$ in \cref{cor:hippo-t0-infty} satisfy.
\[\gamma_{nk}=-a'(t)\alpha_{nk}.\]
Let $\bm{A}$ be the matrix such that $\bm{A}_{nk} =-\alpha_{nk}$ and then note that $-a'(t) (\bm{A} + (\eta + 1)\bm{I})$ is exactly the first parameter of the SSM in \cref{cor:hippo-t0-infty}.  Similarly, recall in \cref{cor:hippo-t0-infty}

\begin{align*}
    \bm{B}_n &= p_n(1)(\sigma_s \phi)(t,t) \\
    &= p_n(1)a'(t),
\end{align*}

where the final equality follows since in our case, $\sigma_s(t,t) = a'(t)\exp(a(t)-a(t))=a'(t)$. Overloading notation and letting \( \bm{B}_n = p_n(1) \), all conditions of \cref{cor:hippo-t0-infty} hold, from which the claimed result follows.
\end{proof}

We are particularly interested in the following two special cases of Corollary \ref{cor:exp-twf}.

\begin{corollary}%
  The SSM \( ( -\frac{1}{t}(\bm{A} + \bm{I}),\frac{1}{t}\bm{B}  )\) is a OSSM for basis functions \(  p_n(\frac{s}{t}) \omega(\frac{s}{t}) \) with measure \(\frac{1}{t} \mathbb{I}[\discont,t]\left(\frac st\right) \cdot \omega(\frac{s}{t}) \) where
   \(\bm{A} = \alpha_{nk} \) as in \eqref{eq:pn-lin-comb} and \( \bm{B}_n =  p_n(1) \).
 \label{cor:scale_inv_hippo_legs}
\end{corollary}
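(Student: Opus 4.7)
The plan is to derive \cref{cor:scale_inv_hippo_legs} as an immediate specialization of the preceding \cref{cor:exp-twf}. Looking at the shape of the target SSM $(-\tfrac{1}{t}(\bm A + \bm I), \tfrac{1}{t}\bm B)$, the right match with the template $(-a'(t)(\bm A + (\eta+1)\bm I),\, a'(t)\bm B)$ is to take
\[
  \eta = 0, \qquad a'(t) = \tfrac{1}{t}, \qquad \text{i.e.,}\qquad a(t) = \ln t.
\]
With this choice, \eqref{eq:exp-twf} gives the time warping
\[
  \sigma(t,s) = \exp(a(s) - a(t)) = s/t,
\]
which is the scale-invariant warping appearing in HiPPO-LegS. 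I would open the proof by stating these substitutions explicitly and then checking that the hypotheses of \cref{cor:exp-twf} hold: the existence of coefficients $\alpha_{nk}$ satisfying $y p_n'(y) = \sum_{k<n} \alpha_{nk} p_k(y)$ is exactly the hypothesis \eqref{eq:pn-lin-comb}, which is assumed in the statement (and holds for any family of orthonormal functions for which multiplication by $y$ followed by differentiation is a lower-triangular operator, e.g., orthogonal polynomials).

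The remaining content is to read off the basis and measure after substitution. The SSM matrices become $(-\tfrac{1}{t}(\bm A + \bm I), \tfrac{1}{t}\bm B)$ with $\bm A_{nk} = \alpha_{nk}$ and $\bm B_n = p_n(1)$, matching the stated formulas. The basis functions from \cref{cor:exp-twf}, namely $\tfrac{\omega(\sigma)}{\sigma^{\eta}} p_n(\sigma)$, collapse to $\omega(s/t) p_n(s/t)$ when $\eta = 0$ and $\sigma = s/t$. The measure $\mathbbm{I}(\sigma)\tfrac{a'(s)\sigma^{2\eta+1}}{\omega(\sigma)}$ reduces, using $a'(s) = 1/s$ and $\sigma = s/t$, to
\[
  \mathbbm{I}(s/t)\,\frac{(1/s)(s/t)}{\omega(s/t)} \;=\; \frac{1}{t}\,\mathbbm{I}[\discont,t](s/t)\cdot \omega(s/t)^{-1},
\]
which is the displayed measure (with the understanding that in the Legendre instantiation $\omega \equiv 1$ on $[0,1]$, so $\omega$ and $\omega^{-1}$ coincide; the statement lists the factor up to this conventional identification).

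The only subtlety, and the step I would check most carefully, is the bookkeeping on $\omega$ versus $\omega^{-1}$ in the measure: one must trace the tilting $\chi = \omega(\sigma)/\sigma^{\eta}$ from \eqref{eq:chi-def} through \cref{lem:new-ortho}'s factor $\chi^{-2}$, and confirm that with $\eta = 0$ the remaining $\omega$-factor sits where claimed. Beyond that, the result is a direct corollary: no new differentiation or integration is required, since \cref{cor:exp-twf} (which in turn specializes \cref{thm:gen-hippo-t0}) has already handled the Leibniz-rule manipulation and the vanishing of the $\bm D$ term follows from $\discont$ being independent of $t$, as guaranteed by \cref{cor:hippo-t0-infty}.
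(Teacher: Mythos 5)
Your proposal is correct and follows essentially the same route as the paper's own proof, which likewise specializes \cref{cor:exp-twf} with $\eta = 0$ and $a'(t) = \tfrac{1}{t}$ (hence $\sigma(t,s) = s/t$, $\phi = 1$, $\chi = \omega$) and then reads off the matrices, basis functions, and measure. Your reduction of the measure to $\tfrac{1}{t}\,\mathbbm{I}(s/t)\,\omega(s/t)^{-1}$ is in fact slightly more careful than the paper's own display, which carries the same $\omega$-versus-$\omega^{-1}$ (and stray $\sigma$) bookkeeping slip; as you note, this is immaterial in the Legendre instantiation where $\omega \equiv 1$.
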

\begin{proof}
Letting \( \twpa'(t) = \frac{1}{t} \) implies that \( \twpa(t) = \ln{t} \). Then we can observe that is a case of Corollary \ref{cor:exp-twf} with time warping

\begin{align*}
    \sigma(t,s) &= \exp(-\ln t + \ln s) \\
    &= \exp(\ln (s/ t)) \\
    &= \frac{s}{t}. \\
\end{align*}

We set $\eta = 0$ in \cref{cor:exp-twf}, which in turn sets $\phi =  \sigma^0 = 1$. This gives the tilting

\begin{align*}
    \chi &= \phi^{-1}\psi^{-1} \\
    &= \omega.
\end{align*}

Then by \cref{cor:exp-twf}, it follows that that we can use $\sigma$ and $\chi$ to build an OSSM with basis functions
\begin{align*}
    \frac{\omega(\sigma)}{\sigma^{\eta}}p_n(\sigma) = \omega(\frac{s}{t}) \cdot p_n(\frac{s}{t})
\end{align*}

with measure

\begin{align*}
    \mathbbm{I}(\sigma) \frac{a'(s)\sigma^{2\eta+1}}{\omega(\sigma)} &= \frac{1}{t}\mathbbm{I}(\sigma) \frac{\sigma}{\omega(\sigma)}.
\end{align*}

Then the result follows.
\end{proof}

\begin{corollary}%
    The SSM \( (-(\bm{A} + \bm{I}), \bm{B}) \) is a OSSM for basis functions \( p_n(e^{s-t}) \omega(e^{s-t}) \) with measure \( \omega = \mathbb{I}[\discont,t](e^{s-t}) \frac{e^{s-t}}{\omega(e^{s-t})} \) where
   \(\bm{A} = \alpha_{nk} \) as in \eqref{eq:pn-lin-comb}
 and \( \bm{B}_n =  p_n(1) \).
 \label{cor:time_inv_hippo_legs}
\end{corollary}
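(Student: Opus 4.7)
The plan is to derive Corollary~\ref{cor:time_inv_hippo_legs} as an immediate specialization of Corollary~\ref{cor:exp-twf} in exact parallel to how Corollary~\ref{cor:scale_inv_hippo_legs} was obtained, but with a different choice of the warping generator $a$. Specifically, I would set $a'(t) = 1$ (so $a(t) = t$) and $\eta = 0$, and then simply read off the basis, measure, and SSM matrices from the statement of Corollary~\ref{cor:exp-twf}.

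First, under $a(t) = t$ the time warping from \eqref{eq:exp-twf} becomes
\begin{equation*}
  \sigma(t,s) \;=\; \exp\bigl(a(s) - a(t)\bigr) \;=\; e^{s-t},
\end{equation*}
which indeed satisfies $\sigma(t,t) = 1$ as required. Next, the choice $\eta = 0$ forces $\phi(t,s) = \sigma^{\eta} = 1$, so the tilting reduces to $\chi = \psi^{-1}\phi^{-1} = \omega$ (recall $\psi = \omega^{-1}$ in the setup of Corollary~\ref{cor:exp-twf}). Plugging these into the general formulas in Corollary~\ref{cor:exp-twf}, the SSM matrices become
\begin{equation*}
  \bigl(-a'(t)(\bm{A} + (\eta+1)\bm{I}),\; a'(t)\bm{B}\bigr) \;=\; \bigl(-(\bm{A}+\bm{I}),\; \bm{B}\bigr),
\end{equation*}
with $\bm{A}_{nk} = -\alpha_{nk}$ from \eqref{eq:pn-lin-comb} and $\bm{B}_n = p_n(1)$, matching the statement.

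Finally, I would read off the basis functions and measure. The basis is
\begin{equation*}
  \frac{\omega(\sigma)}{\sigma^{\eta}} p_n(\sigma) \;=\; \omega(e^{s-t})\, p_n(e^{s-t}),
\end{equation*}
and the measure is
\begin{equation*}
  \mathbb{I}(\sigma)\,\frac{a'(s)\,\sigma^{2\eta+1}}{\omega(\sigma)} \;=\; \mathbb{I}[\discont,t](e^{s-t})\,\frac{e^{s-t}}{\omega(e^{s-t})},
\end{equation*}
which is precisely what the corollary asserts. There is essentially no obstacle here; the only thing to double-check is that the $\discont$-independence-of-$t$ hypothesis of Corollary~\ref{cor:hippo-t0-infty} (invoked inside Corollary~\ref{cor:exp-twf}) is consistent with this setting so that the resulting SSM is genuinely time-invariant — which it is, since $a'(t) \equiv 1$ removes all explicit $t$-dependence from the coefficient matrices, leaving $(-(\bm{A}+\bm{I}), \bm{B})$ constant in $t$. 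This time-invariance, combined with the fact that $\sigma(t,s)$ depends only on $s-t$, is what upgrades the general OSSM conclusion of Corollary~\ref{cor:exp-twf} to the TOSSM interpretation used elsewhere in the paper (e.g.\ Corollary~\ref{cor:legs-time}).
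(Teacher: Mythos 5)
Your proposal is correct and follows essentially the same route as the paper's own proof: both instantiate Corollary~\ref{cor:exp-twf} with $a'(t)=1$ (hence $\sigma = e^{s-t}$) and $\eta=0$ (hence $\phi=1$, $\chi=\omega$), and then read off the matrices, basis functions, and measure. The only cosmetic discrepancy is your line ``$\bm{A}_{nk} = -\alpha_{nk}$'' versus the statement's ``$\bm{A}=\alpha_{nk}$'' as in \eqref{eq:pn-lin-comb}, a sign-convention ambiguity already present between the statement and proof of Corollary~\ref{cor:exp-twf} itself, so it does not affect the substance of the argument.
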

\begin{proof}
This is a case of Corollary \ref{cor:exp-twf} where $a'(t)=1$, $\sigma = \exp(s-t)$, and we pick $\eta = 0$, implying that $\phi =\sigma^0 = 1$. It follows that
\begin{align*}
    \chi &= \phi^{-1}\psi^{-1} \\
    &= \omega.
\end{align*}

Utilizing \cref{cor:exp-twf}, we can use $\sigma$ and $\chi$ to build an OSSM with basis functions
\begin{align*}
    \frac{\omega(\sigma)}{\sigma^{\eta}}p_n(\sigma) = \omega(\exp(s-t)) \cdot p_n(\exp(s-t))
\end{align*}

with measure

\begin{align*}
    \mathbbm{I}(\sigma) \frac{a'(s)\sigma^{2\eta+1}}{\omega(\sigma)} &= \mathbbm{I}(\sigma) \frac{\exp(s-t)}{\omega(\exp(s-t))}.
\end{align*}

 This gives us our final result.
\end{proof}

Next we instantiate \cref{cor:exp-twf} to prove \cref{cor:legs}. (Even though strictly not needed, we instantiate \cref{cor:time_inv_hippo_legs} and \cref{cor:scale_inv_hippo_legs} to prove \cref{thm:hippo-legs} and \cref{cor:legs-time}.) To that end, we will need the following result:

\begin{lemma}
Let the Legendre polynomials orthonormal over the interval $[0,1]$ be denoted as $L_n$. Then

\begin{align}
   y L_n '(y) &= nL_n(y) +
     \sqrt{2n+1}\left( \sum_{k=0}^{n-1}\sqrt{2k+1}L_k(y)  \right),
    \label{eq:leg-comb}
\end{align}

\begin{align}
   L_n '(y) &=
     2\sqrt{2n+1}\left( \sum_{0\le k\le n-1, n-k \text{ is odd }}\sqrt{2k+1}L_k(y)  \right),
    \label{eq:leg-comb-just-derv}
\end{align}

and

   \begin{equation}
\label{eq:leg-val-at+-1}
L_n(0) = (2n+1)^{\frac{1}{2}}(-1)^n \text{ and } L_n(1) = (2n+1)^{\frac{1}{2}}.
\end{equation}
\end{lemma}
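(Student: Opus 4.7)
The plan is to reduce all three identities to classical facts about the standard Legendre polynomials $P_n$ on $[-1,1]$. By the uniqueness (up to sign) of orthonormal polynomial families, the affine rescaling $L_n(y) = \sqrt{2n+1}\, P_n(2y-1)$ is forced by the orthonormality of $L_n$ on $[0,1]$. This rescaling is the workhorse of the whole argument, so I would establish it once at the start and then pull back each claimed identity through it.

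For the boundary values \eqref{eq:leg-val-at+-1}, plug $y=0$ and $y=1$ into the rescaling and invoke the classical values $P_n(1)=1$ and $P_n(-1)=(-1)^n$; this step is immediate. For the pure-derivative identity \eqref{eq:leg-comb-just-derv}, start from the standard expansion
\[
P_n'(x) \;=\; \sum_{\substack{0 \le k \le n-1 \\ n-k \text{ odd}}} (2k+1)\, P_k(x),
\]
which follows by unwinding the recurrence $(2n+1)P_n = P_{n+1}' - P_{n-1}'$. Then differentiate the rescaling, picking up a factor of $2$ from the chain rule and $\sqrt{2n+1}$ from the normalization, and finally convert each $P_k$ back via $P_k = L_k/\sqrt{2k+1}$ to land on the claimed formula.

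For \eqref{eq:leg-comb}, the trick is to write $y = (x+1)/2$ with $x = 2y-1$ and compute
\[
y L_n'(y) \;=\; \sqrt{2n+1}\,(x+1)\, P_n'(x) \;=\; \sqrt{2n+1}\,\bigl[\,x P_n'(x) + P_n'(x)\,\bigr].
\]
Apply the classical identity $x P_n'(x) = n P_n(x) + P_{n-1}'(x)$ to split off the diagonal term $n L_n(y)$, leaving $\sqrt{2n+1}\,[P_n'(x) + P_{n-1}'(x)]$. Here the parity pattern collapses: for each $k \in \{0,\dots,n-1\}$, exactly one of $n-k$ and $(n-1)-k$ is odd, so the two derivative expansions together contribute every $k$ exactly once with coefficient $(2k+1)$ (the $k=n-1$ term coming only from $P_n'$). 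Converting $P_k \to L_k/\sqrt{2k+1}$ gives the claimed expression.

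The main obstacle is the parity bookkeeping in this last step: verifying carefully that $P_n'(x) + P_{n-1}'(x) = \sum_{k=0}^{n-1}(2k+1)\, P_k(x)$ with the correct boundary case at $k=n-1$, and that nothing double-counts. Once that combinatorial check is in hand, both derivative identities fall out mechanically from the rescaling, and the boundary values require no work beyond the classical endpoint evaluations.
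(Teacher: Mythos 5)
Your proposal is correct and follows essentially the same route as the paper: both pass to the classical Legendre polynomials via the rescaling $L_n(y)=\sqrt{2n+1}\,P_n(2y-1)$, use $P_n(\pm 1)$ for the endpoint values, and pull the two derivative identities back through the change of variables. The only difference is cosmetic: the paper cites $(z+1)P_n'(z)=nP_n(z)+\sum_{k=0}^{n-1}(2k+1)P_k(z)$ and $P_n'(z)=\sum_{0\le k\le n-1,\ n-k\ \text{odd}}(2k+1)P_k(z)$ from prior work, whereas you rederive the former from $xP_n'(x)=nP_n(x)+P_{n-1}'(x)$ together with the parity bookkeeping, which checks out.
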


\begin{proof}
The Legendre polynomials satisfy the following orthogonality condition over $[-1,1]$:
\begin{align*}
    \int_{-1}^{1} P_m(z)P_n(z)\dd z = \frac{2}{2n+1}\delta_{mn}.
\end{align*}
Let us denote the normalized Legendre polynomials orthogonal over $[-1,1]$ as $\lambda_n P_n(z)$ where $\lambda_n = \sqrt{\frac{2n+1}{2}}$. To orthogonalize them over $[0,1]$, let $y = \frac{1+z}{2}$.  It follows that $z = 2y - 1$, $\dd z = 2 \dd y$. Note that we then have

\begin{align*}
    \int_{-1}^{1} P_m(z)P_n(z)\dd z &= \int_{0}^{1} 2P_m(2y-1)P_n(2y-1)\dd y.
\end{align*}

This implies that

\begin{align*}
     \int_{-1}^{1}\frac{2n+1}{2} \cdot 2P_m(2y-1)P_n(2y-1)\dd y = \delta_{mn}.
\end{align*}

Then if we let
\begin{align}
    L_n(y) = \sqrt{2}\lambda_nP_n(2y - 1) = \sqrt{2n+1}P_n(2y-1), \label{eq:norm-shifted-leg}
\end{align}

then we have an a set of functions over $[0,1]$ such that

\begin{align*}
    \int_{0}^{1}L_m(y)L_n(y)\dd y = \delta_{mn}.
\end{align*}

From \cite[(2.8), (2.9)]{chihara}, note that $P_n(-1) = (-1)^n$ and  $P_n(1) = 1$. This implies that

\begin{align*}
    L_n(0) = \sqrt{2n+1}P_n(-1), \quad L_n(1) = \sqrt{2n+1}P_n(1). \\
\end{align*}

Finally note that \eqref{eq:norm-shifted-leg} implies:

\begin{align*}
    L'_{n}(y) &= 2\sqrt{2n+1}P'_n(2y - 1) \\
    &= 2\sqrt{2n+1}P'_n(z).
\end{align*}

From \cite[7]{gu2020hippo}, we get

\[P'_n(z) = \sum_{0\le k\le n-1, n-k \text{ is odd }}(2k+1)P_k(z).\]
Using \eqref{eq:norm-shifted-leg} on the above, we get \eqref{eq:leg-comb-just-derv}.

We now consider

\begin{align*}
    y L_n '(y) &=  2y \sqrt{2n+1}P'_n(z) \\
    &= (1+z)\sqrt{2n+1}P'_n(z).
\end{align*}

From \cite[8]{gu2020hippo}, we get

\[(z+1)P'_n(z) = nP_n(z) + \sum_{k=0}^{n-1}(2k+1)P_k(z).\]

 Then the above becomes
 \begin{align*}
    y L_n '(y)
    &= \sqrt{2n+1}\left(nP_n(z) + \sum_{k=0}^{n-1}(2k+1)P_k(z)  \right).
\end{align*}

\eqref{eq:norm-shifted-leg} implies that $P_n(z) = \frac{L_n(y)}{\sqrt{2n+1}}$, thus
\begin{align*}
    y L_n '(y) &= nL_n(z) +
     \sqrt{2n+1}\left( \sum_{k=0}^{n-1}\sqrt{2k+1}L_k(z)  \right).
\end{align*}

\end{proof}

We now re-state and prove \cref{cor:legs}:
\begin{corollary}[\cref{cor:legs}, restated]
  Let \( L_n \) be the Legendre polynomials orthonormal over the interval $[0,1]$. Define \( \sigma(t,s) = \exp(a(s) - a(t)) \). %
  The SSM \( (a'(t) \bm{A}, a'(t) \bm{B} )\)
  is an OSSM with
  \begin{align*}
    \omega(t,s) = \mathbbm{I}(\sigma(t,s)) a'(s)\sigma(t,s) \qquad \qquad     p_n(t,s) = L_n(\sigma(t,s)),
  \end{align*}
where \(\bm{A}  \) and \(\bm{B}\) are defined as in \eqref{eq:legs}.
\end{corollary}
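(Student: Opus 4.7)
The plan is to specialize \cref{cor:exp-twf} to the Legendre polynomials $L_n$ orthonormal on $[0,1]$ (whose orthogonality weight is $\omega(y)=\mathbbm{I}(y)$) and to set the tilting exponent $\eta=0$, which forces $\phi=\sigma^{\eta}=1$ and $\chi=\omega(\sigma)/\sigma^{\eta}=\mathbbm{I}(\sigma)$. Taking $\discont=-\infty$ so that \cref{cor:hippo-t0-infty} applies, and plugging the time warp $\sigma(t,s)=\exp(a(s)-a(t))$ into the formulas of \cref{cor:exp-twf}, the basis functions come out to $\mathbbm{I}(\sigma(t,s))\,L_n(\sigma(t,s))$ (which agrees with $L_n(\sigma(t,s))$ on its support) and the measure to $\mathbbm{I}(\sigma(t,s))\,a'(s)\,\sigma(t,s)$, which are exactly the objects claimed by the corollary.

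What remains is to match the abstract matrices produced by \cref{cor:exp-twf} with the explicit entries of \eqref{eq:legs}. For the state matrix, I would use the Legendre identity~\eqref{eq:leg-comb}, $yL_n'(y) = nL_n(y) + \sqrt{2n+1}\sum_{k=0}^{n-1}\sqrt{2k+1}\,L_k(y)$, which identifies the coefficients in the linearization~\eqref{eq:pn-lin-comb} as $\alpha_{nn}=n$ and $\alpha_{nk}=\sqrt{(2n+1)(2k+1)}$ for $k<n$. Combining with the shift $(\eta+1)\bm{I}=\bm{I}$ and the overall prefactor $-a'(t)$ dictated by \cref{cor:exp-twf} produces a matrix whose diagonal is $-a'(t)(n+1)$ and whose strictly-lower entries are $-a'(t)\sqrt{(2n+1)(2k+1)}$, i.e.\ exactly $a'(t)\bm{A}$ for the HiPPO-LegS matrix of \eqref{eq:legs}. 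For the input vector, \cref{cor:exp-twf} yields $a'(t)\,L_n(1) = a'(t)\sqrt{2n+1}$ by \eqref{eq:leg-val-at+-1}, matching $a'(t)\bm{B}$ of \eqref{eq:legs}.

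The main obstacle is purely bookkeeping of signs. \cref{cor:legs} writes the SSM as $(a'(t)\bm{A},a'(t)\bm{B})$ with a \emph{positive} prefactor and no visible identity shift, whereas \cref{cor:exp-twf} outputs the shifted, sign-flipped form $-a'(t)(\bm{A}_{0}+(\eta+1)\bm{I})$. I would therefore be careful, in both diagonal and subdiagonal entries, to verify that the combination $\alpha_{nn}+(\eta+1)=n+1$ flips under the overall sign to the LegS diagonal $-(n+1)$, and that $\alpha_{nk}$ for $k<n$ flips to $-\sqrt{(2n+1)(2k+1)}$. Once this sign dictionary is pinned down, the remainder is a direct index-by-index comparison that uses only the two Legendre identities \eqref{eq:leg-comb} and \eqref{eq:leg-val-at+-1}, both already established in the excerpt; no further analytic ingredients are required.
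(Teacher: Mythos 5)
Your proposal is correct and follows essentially the same route as the paper: invoke \cref{cor:exp-twf} with $\omega=1$, $\eta=0$, $\discont=-\infty$ and the warp $\sigma(t,s)=\exp(a(s)-a(t))$, then identify the coefficients via \eqref{eq:leg-comb} and \eqref{eq:leg-val-at+-1} and check that $-(\bm{A}^0+\bm{I})$ and $L_n(1)=(2n+1)^{1/2}$ reproduce the entries of \eqref{eq:legs}. The sign/identity-shift bookkeeping you flag is exactly the step the paper carries out explicitly, so no further ingredients are needed.
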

\begin{proof}
We consider our basis functions, the Legendre polynomials, which are orthogonal with respect to unit measure. This allows us to invoke \cref{cor:exp-twf} with \(\omega = 1\). Further, here we have $\discont=-\infty$ and $\eta=0$. Now we have an SSM:
\[ \left( -a'(t)(\bm{A}^0 + \bm{I}) , a'(t)\bm{B} \right) \]  where
   \(\bm{A}^0_{nk} = \alpha_{nk} \) as in \eqref{eq:pn-lin-comb} and \( \bm{B}_n =  L_n(1) \).

  From \eqref{eq:leg-val-at+-1} observe that $\bm{B}_n = (2n+1)^{\frac{1}{2}}$. From \eqref{eq:leg-comb}, we have

\[\alpha_{nk} = \begin{cases} (2n + 1)^{\frac{1}{2}}(2k + 1)^{\frac{1}{2}} & k < n \\  n  & k = n \\ 0 & \text{otherwise}\end{cases}. \] We write that  $\bm{A} = -(\bm{A}^0 + \bm{I})$. Indeed,

\[-(\bm{A}^0 + \bm{I})_{nk} = -\begin{cases}
(2n+1)^{\frac{1}{2}}(2k + 1)^{\frac{1}{2}}  &\text{ if } k < n  \  \\
n + 1 \ \ &\text{ if } k = n  \\
0 \ \ &\text{ if } k > n \\
\end{cases}.
\]
Thus the $\bm{A}$ and $\bm{B}$ match those in \eqref{eq:legs}, which completes our claim.
\end{proof}

We now re-state and prove \cref{thm:hippo-legs}:
\begin{corollary}[\cref{thm:hippo-legs}, restated]
  Let \( L_n \) be the Legendre polynomials orthonormal over the interval $[0,1]$. Then the SSM
  \( ( \frac{1}{t}\bm{A}, \frac{1}{t}\bm{B}) \) is a OSSM for basis functions  \( L_n(\frac{s}{t}) \) and measure \( \frac{1}{t} \mathbb{I}[t_0,t] \) where
   \(\bm{A}  \) and \(\bm{B}\) are defined as in \eqref{eq:legs}.
   \label{cor:appn-legs}
\end{corollary}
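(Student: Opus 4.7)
The plan is to apply Corollary~\ref{cor:scale_inv_hippo_legs} (equivalently, the $a'(t) = 1/t$, $\eta = 0$ case of Corollary~\ref{cor:exp-twf}) to the specific family of Legendre polynomials $\{L_n\}$ orthonormal on $[0,1]$ with respect to the unit measure $\omega \equiv 1$. Everything should then drop out as a computational check.

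First I would fix the time-warping by setting $a'(t) = 1/t$, so $a(t) = \ln t$ and $\sigma(t,s) = \exp(a(s) - a(t)) = s/t$, which indeed satisfies $\sigma(t,t) = 1$. With $\omega \equiv 1$ and $\eta = 0$, the tilting simplifies to $\chi = 1$, and Corollary~\ref{cor:exp-twf} then gives basis functions $\chi \, p_n(\sigma) = L_n(s/t)$ and measure $\mathbbm{I}(\sigma)\, a'(s)\, \sigma^{2\eta+1}/\omega(\sigma) = \mathbb{I}[0,t](s) \cdot (1/s) \cdot (s/t) = (1/t)\, \mathbb{I}[0,t](s)$, exactly matching the claimed basis and measure.

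Next I would identify the state matrices produced by the corollary, namely $(-a'(t)(\bm{A}^0 + \bm{I}),\, a'(t)\bm{B})$ where $\bm{A}^0_{nk} = \alpha_{nk}$ is extracted from the linear combination $y L_n'(y) = \sum_k \alpha_{nk} L_k(y)$ and $\bm{B}_n = L_n(1)$. Using equation~\eqref{eq:leg-comb}, the coefficients are $\alpha_{nk} = \sqrt{(2n+1)(2k+1)}$ for $k < n$, $\alpha_{nn} = n$, and $\alpha_{nk} = 0$ for $k > n$. So $-(\bm{A}^0 + \bm{I})_{nk}$ gives $-\sqrt{(2n+1)(2k+1)}$ off-diagonally for $k < n$ and $-(n+1)$ on the diagonal, which agrees with equation~\eqref{eq:legs} once one notes that the diagonal entry there simplifies as $-(2n+1)^{1/2}(2n+1)^{1/2} \cdot \frac{n+1}{2n+1} = -(n+1)$. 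Similarly, by equation~\eqref{eq:leg-val-at+-1}, $\bm{B}_n = L_n(1) = (2n+1)^{1/2}$, matching~\eqref{eq:legs}. Multiplying through by $a'(t) = 1/t$ recovers exactly the SSM $(\tfrac{1}{t}\bm{A},\tfrac{1}{t}\bm{B})$ of the statement.

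There is no real obstacle here, since the heavy lifting was done in proving Theorem~\ref{thm:gen-hippo-t0} and its specialization Corollary~\ref{cor:exp-twf}; the only thing that requires care is the bookkeeping needed to verify that the recurrence coefficients $\alpha_{nk}$ for Legendre polynomials produce precisely the entries of $\bm{A}$ in~\eqref{eq:legs} (after absorbing the $+\bm{I}$ shift into the diagonal), and that the sign convention matches.
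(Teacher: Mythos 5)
Your proposal is correct and follows essentially the same route as the paper: it invokes Corollary~\ref{cor:scale_inv_hippo_legs} (i.e.\ the $a'(t)=\frac{1}{t}$, $\eta=0$, $\omega\equiv 1$ case of Corollary~\ref{cor:exp-twf}) and then reads off $\alpha_{nk}$ from \eqref{eq:leg-comb} and $\bm{B}_n=L_n(1)$ from \eqref{eq:leg-val-at+-1} to match \eqref{eq:legs}. The only difference is cosmetic bookkeeping (you verify the diagonal simplification and the $\frac{1}{t}$-scaling explicitly), which is exactly what the paper's proof does implicitly.
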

\begin{proof}
We consider our basis functions, the Legendre polynomials, which are orthogonal with respect to unit measure. This allows us to invoke Corollary \ref{cor:scale_inv_hippo_legs} with \(\omega = 1\). Now we have
\[ x'(t) = \frac{1}{t}\left[ -(\bm{A}^0 + \bm{I})x(t) + \bm{B} u(t) \right] \]  where
   \(\bm{A}^0_{nk} = \alpha_{nk} \) as in \eqref{eq:pn-lin-comb} and \( \bm{B}_n =  L_n(1) \).

  From \eqref{eq:leg-val-at+-1} observe that $\bm{B}_n = (2n+1)^{\frac{1}{2}}$. From \eqref{eq:leg-comb}, we have

\[\alpha_{nk} = \begin{cases} (2n + 1)^{\frac{1}{2}}(2k + 1)^{\frac{1}{2}} & k < n \\  n  & k = n \\ 0 & \text{otherwise}\end{cases}. \] We write that  $\bm{A} = -(\bm{A}^0 + \bm{I})$. Indeed,

\[-(\bm{A}^0 + \bm{I})_{nk} = -\begin{cases}
(2n+1)^{\frac{1}{2}}(2k + 1)^{\frac{1}{2}}  &\text{ if } k < n  \  \\
n + 1 \ \ &\text{ if } k = n  \\
0 \ \ &\text{ if } k > n \\
\end{cases},
\]

which completes our claim.

\end{proof}

We now restate and prove \cref{cor:legs-time}.

\begin{corollary}[\cref{cor:legs-time}, restated]
  Let \( L_n \) be the Legendre polynomials orthonormal over the interval $[0,1]$. Then the SSM \( ( \bm{A},  \bm{B} )\) is a TOSSM for basis functions \(  L_n(e^{-t}) \) with measure \( \omega=\mathbb{I}[t_0,t] e^{-t} \) where
   \(\bm{A},  \bm{B}\) are defined as in \eqref{eq:legs}.
\end{corollary}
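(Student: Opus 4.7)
The plan is to derive this corollary as a direct specialization of \cref{cor:time_inv_hippo_legs} with the Legendre polynomials, exactly paralleling the argument used in the proof of \cref{cor:appn-legs}. The key observation is that \cref{cor:time_inv_hippo_legs} is stated for an arbitrary base orthogonal family $(p_n, \omega)$ on $[0,1]$, and choosing $\omega \equiv 1$ with $p_n = L_n$ (the shifted Legendre polynomials, orthonormal on $[0,1]$) collapses its conclusion to precisely the objects appearing in our target statement.

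First, I would instantiate \cref{cor:time_inv_hippo_legs} with $\omega \equiv 1$ and $p_n = L_n$. This yields a TOSSM $(-(\bm{A}^0+\bm{I}), \bm{B})$ whose basis functions are $L_n(e^{s-t}) \cdot \omega(e^{s-t}) = L_n(e^{s-t})$ and whose measure is
\[
\mathbb{I}[\discont,t](e^{s-t}) \cdot \frac{e^{s-t}}{\omega(e^{s-t})} = \mathbb{I}[\discont,t](e^{s-t}) \cdot e^{s-t}.
\]
Because $e^{s-t}$ depends only on $\tau := t-s$, this is indeed time-invariant, and writing $\tau$ in place of $t-s$ recovers the claimed basis $L_n(e^{-\tau})$ and measure $e^{-\tau} \mathbb{I}$ (the indicator enforcing $\tau \ge 0$, i.e. that $e^{-\tau} \in [0,1]$).

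Second, I would verify that the matrices produced by \cref{cor:time_inv_hippo_legs} match \eqref{eq:legs}. This is the exact same calculation already carried out in the proof of \cref{cor:appn-legs}: by \eqref{eq:leg-val-at+-1} we get $\bm{B}_n = L_n(1) = (2n+1)^{1/2}$, and by \eqref{eq:leg-comb} the coefficients $\alpha_{nk}$ in \eqref{eq:pn-lin-comb} evaluate to $(2n+1)^{1/2}(2k+1)^{1/2}$ for $k<n$, $n$ for $k=n$, and $0$ for $k>n$. Therefore
\[
-(\bm{A}^0+\bm{I})_{nk} = -\begin{cases} (2n+1)^{1/2}(2k+1)^{1/2} & k<n \\ n+1 & k=n \\ 0 & k>n \end{cases},
\]
which is exactly the $\bm{A}$ in \eqref{eq:legs}. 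Combined with the matching $\bm{B}$, this identifies the OSSM from \cref{cor:time_inv_hippo_legs} as $(\bm{A},\bm{B})$ from \eqref{eq:legs}, completing the proof.

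There is essentially no hard step here, since all the machinery has been built up: the change-of-basis argument is absorbed into \cref{cor:exp-twf}/\cref{cor:time_inv_hippo_legs}, and the matrix identification is a copy of the LegS computation. The only care needed is bookkeeping the sign convention (the factor $-(\bm{A}^0+\bm{I})$ coming from $\cref{cor:time_inv_hippo_legs}$) and confirming that the indicator and exponential factors in the measure are correctly interpreted under $\tau = t-s$, so that the resulting system is genuinely time-invariant with measure $e^{-\tau}$ on $[0,\infty)$ as stated.
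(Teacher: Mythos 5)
Your proposal is correct and follows essentially the same route as the paper: specialize the exponential-warping corollary to $a'(t)=1$ (i.e.\ $\sigma=e^{s-t}$, $\omega\equiv 1$, $p_n=L_n$), observe time-invariance in $\tau=t-s$, and recycle the matrix identification from the proof of \cref{cor:appn-legs} via \eqref{eq:leg-comb} and \eqref{eq:leg-val-at+-1}. If anything, you cite the cleaner entry point (\cref{cor:time_inv_hippo_legs}) where the paper's proof nominally invokes \cref{cor:scale_inv_hippo_legs} — apparently a cross-reference slip, since its computation is exactly the $a'(t)=1$ case you use.
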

 \begin{proof}
We consider our basis functions, the Legendre polynomials, which are orthogonal with respect to unit measure, warping function $\sigma = \exp(s-t)$, and with tilting $\chi = \omega$. We note that $\sigma  = \exp(s-t) $ satisfies \eqref{eq:exp-twf} with, $a'(t) = 1$. This allows us to invoke Corollary \ref{cor:scale_inv_hippo_legs}.

  Then \( x'(t) = (\bm{A} + \bm{I})x(t) + \bm{B} u(t) \) orthogonalizes against the basis functions \( L_n(e^{s-t}) \) with measure \( \mathbb{I}[-\infty,t] e^{s-t} \) where
   \(\bm{A} = \alpha_{nk} \) as in \ref{eq:pn-lin-comb}. Note that the SSM basis functions $K_n(t,s) = K_n(s-t)$, hence we get the claimed SSM form utilizing the same argument for $\bm{A},\bm{B}$ as in the proof of \cref{cor:appn-legs}
\end{proof}

This explains why removing the \( \frac{1}{t} \) factor from HiPPO-LegS still works: it is orthogonalizing onto the Legendre polynomials with an exponential ``warping''.

\subsection{Finite Windows}
\label{sec:proofs:finite}
\subsubsection{LegT Derivation}

\begin{corollary}
 Let \( L_n \) be the Legendre polynomials orthonormal over the interval $[0,1]$ and let $\sigma =1- \frac{t-s}{\theta}$ for a constant $\theta$. Then the SSM \(\left(\frac{1}{\theta}\bm{A} ,  \frac{1}{\theta}\bm{B} \right)\) is a OSSM for basis functions \( L_n(\sigma) \) with measure \(
 \frac{1}{\theta}\mathbb{I}[t_0,t](\sigma)\) where
   \(\bm{A}, \bm{B}\) are defined as in \eqref{eq:legt}.
\label{cor:hippo-gen-legt}
\end{corollary}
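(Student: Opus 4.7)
The plan is to derive this as a direct instance of \cref{cor:hippo-gen-fin} (the $\discont = t - \theta$ case), by making the obvious specializations that match the LegT setup. Take the base orthonormal polynomials $p_n = L_n$ on $[0,1]$ against the uniform measure $\omega = 1$, trivial tilting $\chi = \phi = \psi = 1$, and the affine warping $\sigma(t,s) = 1 - (t-s)/\theta$. An immediate check gives $\sigma(t,t)=1$, $\sigma_s = 1/\theta$, $\sigma_t = -1/\theta$, and $\partial_t \sigma_s = 0$, so the structural hypotheses \eqref{eq:twf-partial-sigs} and \eqref{eq:tilting-phi} of \cref{thm:gen-hippo-t0} are satisfied with $\twpc(t)=\phid(t)=0$.

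Next I would extract each of the three matrix ingredients of \cref{cor:hippo-gen-fin}. The linearity condition \eqref{eq:cpw-linear} becomes $-\tfrac{1}{\theta} L_n'(\sigma) = \sum_k \gamma_{nk} L_k(\sigma)$, whose coefficients are read off from \eqref{eq:leg-comb-just-derv}: $\gamma_{nk} = -\tfrac{2}{\theta}\sqrt{2n+1}\sqrt{2k+1}$ when $k<n$ with $n-k$ odd, and zero otherwise. Using \eqref{eq:leg-val-at+-1}, I get $\bm{B}_n = L_n(1)\,\sigma_s = \tfrac{1}{\theta}\sqrt{2n+1}$, and since $\sigma(t,t-\theta)=0$, $\bm{D}_n = L_n(0)\,\sigma_s = \tfrac{(-1)^n}{\theta}\sqrt{2n+1}$.

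The remaining input is the boundary reconstruction \eqref{eq:ut-approx} for $u(t-\theta)$. With $\chi = 1$, the reconstruction property of \cref{prop:hippo-reconstruction} says $u(s) = \sum_n x_n(t) L_n(\sigma(t,s))$ in the limit $N\to\infty$; evaluating at $s = t-\theta$ gives $u(t-\theta) = \sum_k (-1)^k \sqrt{2k+1}\, x_k(t)$, so $\overline{c}=1$, $\overline{d}=0$, and $\bm{A}'_k = (-1)^k\sqrt{2k+1}$. Since $d\discont/dt = 1$, the correction to $\bm{A}^0$ is $-\bm{D}(\bm{A}')^\top$ with $(n,k)$-entry $-\tfrac{(-1)^{n+k}}{\theta}\sqrt{2n+1}\sqrt{2k+1}$.

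The final step, and the main obstacle, is the case analysis verifying that $\bm{A}^0 - \bm{D}(\bm{A}')^\top$ coincides with $\tfrac{1}{\theta}$ times the LegT matrix in \eqref{eq:legt}. One has to reconcile the fact that the derivative expansion $\gamma_{nk}$ is supported only on indices of a fixed parity, while the rank-one correction is dense and alternating in sign; the plan is to split into the subcases $k<n$ with $n-k$ odd, $k<n$ with $n-k$ even, $k=n$, and $k>n$, and in each case check that $\gamma_{nk}$ plus the rank-one term telescopes to $-\tfrac{1}{\theta}\sqrt{2n+1}\sqrt{2k+1}$ (resp.\ $(-1)^{n-k}$ times that, for $k>n$). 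Once this algebraic identity is in hand, together with the $\bm{B}$ computation above, the claim that $(\tfrac{1}{\theta}\bm{A}, \tfrac{1}{\theta}\bm{B})$ of \eqref{eq:legt} is the asserted OSSM follows directly from \cref{cor:hippo-gen-fin}.
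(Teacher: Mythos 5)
Your proposal is correct and follows essentially the same route as the paper's proof: instantiate \cref{cor:hippo-gen-fin} with $p_n=L_n$, $\omega=\psi=\phi=1$, $\sigma=1-(t-s)/\theta$, read off $\gamma_{nk}$ from \eqref{eq:leg-comb-just-derv}, take $\bm{A}'_n=L_n(0)$, $\overline{c}=1$, $\overline{d}=0$, and combine $\bm{A}^0-\bm{D}(\bm{A}')^\top$. The final case check you defer is exactly the short computation the paper performs (using $(-1)^{n+k}=(-1)^{n-k}$, the rank-one term alone gives the $k\ge n$ and even-difference entries, and adding $\gamma_{nk}=-\tfrac{2}{\theta}\sqrt{(2n+1)(2k+1)}$ fixes the odd-difference $k<n$ entries), so it goes through as you describe.
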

\begin{proof}
Out plan is to apply Corollary \ref{cor:hippo-gen-fin}, for which we must show that the basis functions $L_n(t,s)$, time warping $\sigma(t,s)$, and tilting $\chi(t,s) = \psi^{-1}\phi^{-1}(t,s)$ satisfy \eqref{eq:cpw-linear}, \eqref{eq:twf-partial-sigs}, and \eqref{eq:tilting-phi}, respectively. We first set some parameters--  note that because $\omega=1$ and  set $\psi=\phi=1$.

The above implies that we have

\begin{align*}
    \sigma_t( L_n \omega \psi)'(\sigma) =  -\frac{1}{\theta} L_n '(\sigma). %
\end{align*}

The above along with \eqref{eq:leg-comb-just-derv}, we see that the Legendre polynomials satisfy \eqref{eq:cpw-linear} with
\begin{equation}
\label{eq:gamma-Leg}
 \gamma_{nk}=\frac 1\theta\cdot \begin{cases} -2\cdot(2n + 1)^{\frac{1}{2}}(2k + 1)^{\frac{1}{2}} & k < n \text{ and } n - k \text{ is odd} \\0 & \text{otherwise}\end{cases}.
\end{equation}

We also note that \(
    \sigma_s = \frac{1}{\theta}
\).

It follows that
\begin{align*}
    \frac{\dd}{\dd t}\sigma_s %
   &=0 ,
\end{align*}

satisfying \eqref{eq:twf-partial-sigs} trivially by setting $c(t) = 0$. Similarly, since $\phi=1$ \eqref{eq:tilting-phi} is also satisfied trivially by setting $d(t) = 0$. Finally we note that the $L_n$ forms a complete basis over $[0,1]$, hence as $N\to\infty$, we have

 \begin{align*}
     u(t-\theta) = \sum_{k=0}^{N-1}x_k(t)L_n(\sigma(t,t-\theta))=\sum_{k=0}^{N-1}x_k(t)L_n(0).
 \end{align*}

The above defines $\Aprime$ by setting $\bm{\Aprime}_n = L_n(0)$ (as well as $\overline{c}=1$ and $\overline{d}=0$.) Now by Corollary \ref{cor:hippo-gen-fin}, we have an SSM

\[\left( \bm{A}^0 - \bm{D}\left(\Aprime\right)^\top , \bm{B}'\right),\]

where $\bm{D}_n = \frac{1}{\theta}L_n(0)$, and by \eqref{eq:leg-comb} \( \bm{A}^0_{nk} = \gamma_{nk} \) (as in \eqref{eq:gamma-Leg}) and $\bm{B}'_n = \frac{1}{\theta}L_n(1)$.

From \eqref{eq:leg-val-at+-1}, we have $\bm{D}_n = \frac{1}{\theta}(2n+1)^{\frac{1}{2}}(-1)^n$  and \( \bm{B}_n = \frac{1}{\theta}(2n+1)^{\frac{1}{2}}  \).

Thus, we have
\[\left(\bm{A}^0 - \bm{D}\left(\Aprime\right)^\top\right)_{nk}=
\frac{1}{\theta}\cdot \begin{cases}
-(2n + 1)^{\frac{1}{2}}(2k + 1)^{\frac{1}{2}} \left(2+(-1)^{n-k}\right) & k < n \text{ and } n - k \text{ is odd} \\-(2n + 1)^{\frac{1}{2}}(2k + 1)^{\frac{1}{2}} (-1)^{n-k} & \text{otherwise}
\end{cases}.
\]

The proof is complete by noting that $\bm{A}^0 - \bm{D}\left(\Aprime\right)^\top =\frac{1}{\theta}\bm{A}$ and $\bm{B}'=\frac{1}{\theta}\bm{B}$.

\end{proof}

We note that \cref{cor:hippo-gen-legt} implies \cref{prop:legt}. More specifically, \cref{prop:legt} follows by setting $\theta=1$ in \cref{cor:hippo-gen-legt} and noticing that the OSSM there is actually a TOSSM. (Technically we get basis function $L_n(1-t)$ for measure $\mathbbm{I}(1-t)$ but this is OK since $\int_0^1 L_k(1-t)L_j(1-t)dt=\int_0^1 L_k(t)L_j(t)dt$.)

We first give a proof of Theorem \ref{thm:hippo-fourier}. Then, we prove  Theorem \ref{thm:fourier-kernel-approx} as a function approximation result pertaining to S4-FouT.

\subsubsection{Explanation of S4-FouT}

\begin{proof}[Proof of Theorem \ref{thm:hippo-fourier}]
 We seek to derive $\bm{A}$ and $\bm{B'}$ from \eqref{eq:fout} using Corollary \ref{cor:hippo-gen-fin}: 
 
 We use the time-warping function \( \sigma(t, s) = 1-(t-s) \), which implies that we have 
 \begin{align}
  \label{eq:sigma-val}
    &\sigma_s(t,s) = 1,\\
    &\frac{\partial}{\partial t}\sigma_s(t,s) = 0
 \end{align}
 Thus, we can take 
 \begin{equation}
  \label{eq:twf-partial-sigs-val}
     c(t) = 0\text{ in }\frac{\partial}{\partial t} \sigma_s(t, s) = c(t) \sigma_s(t, s).
 \end{equation}
 
 We then have $\chi(t,s) = 1$ as we set
 \begin{align}
  \label{eq:phi-psi-val}
  &\psi(t,s) = \phi(t,s) = 1,\\
  &\frac{d}{dt}\phi(t,s) = 0.
 \end{align}
 So, we can take 
 \begin{equation}
    \label{eq:tilting-phi-val}
     d(t) = 0\text{ in } \frac{\dd}{\dd t}\phi(t, s) = d(t)\phi(t,s).
 \end{equation}
 
 We also have $\omega(\sigma) = 1,$ and we order our bases in the form \(p_n = (1, c_1(t), s_1(t), c_2(t), s_2(t), \ldots)\)\footnote{Note that this is 0-indexed.}, where the basis functions have derivatives:
\begin{align*}
    (1)'(\sigma)    &= 0;\\
    (c_n)'(\sigma) &= -2\pi n s_n(\sigma);\\
    (s_n)'(\sigma) &= 2\pi n c_n(\sigma).
\end{align*}

Consequently, we can define $\gamma_{nk}$  as follows: 
\begin{equation}
    \gamma_{nk} = \begin{cases}
2 \pi n & n-k=1,\ k \text{ odd} \\
-2 \pi k & k-n=1,\ n \text{ odd} \\
0 & \text{otherwise}
\end{cases}.\label{eq:cpw-linear-val}
\end{equation}

Further, the discontinuity is at $\discont = t-\theta,\ \theta = 1$ which implies that $\frac{dt_0}{dt} = 1.$ We now seek to use the stored approximation to \( u \) at time \( t \) to compute $u(t-1)$.

First, denote the latent state $x(t)$ with coefficients $x = (x^1(t), x^c_1(t), x^s_1(t), x^c_2(t), x^s_2(t), \ldots)$ and define the functions $v(s)$ and $w(s)$ such that we have
\[
v(s) = u(2t-s-1)\text{~~~~~and~~~~~}w(s) = \frac{u(s)+v(s)}{2}\text{~~for~~}s\in [t-1,t].
\]
Now, let $\hat{u}, \hat{v},$ and $\hat{w}$ denote the reconstruction of $u,v$ and $w,$ where we have%
\begin{align}
 \hat{u}(t,s) =\langle u(s), 1 \rangle + \sum_n \langle u(s), s_n(\sigma(t,s)) \rangle s_n(\sigma(t,s)) + \sum_n \langle u(s), c_n(\sigma(t,s)) \rangle c_n(\sigma(t,s)),
 \label{fou: ustore1}\\
 \hat{v}(t,s) =\langle v(s), 1 \rangle +  \sum_n \langle v(s), s_n(\sigma(t,s)) \rangle s_n(\sigma(t,s)) + \sum_n \langle v(s), c_n(\sigma(t,s)) \rangle c_n(\sigma(t,s)),
 \label{fou: ustore2}\\
 \hat{w}(t,s) =\langle w(s), 1 \rangle +  \sum_n \langle w(s), s_n(\sigma(t,s)) \rangle s_n(\sigma(t,s)) + \sum_n \langle w(s), c_n(\sigma(t,s)) \rangle c_n(\sigma(t,s)).
 \label{fou: ustore3}
\end{align}
Then, we claim that 
\begin{equation}
    \hat{u}(t,t) = \frac{u(t) + v(t)}{2} = \frac{u(t) + u(t-1)}{2}. \label{fou:approx}
\end{equation}

Towards that end, we examine the sine and cosine coefficients of $u$ and $v$ as follows:
\begin{align}
\allowdisplaybreaks
    \langle{v, c_n}\rangle &= \int v(s) c_n(\sigma(t,s)) \mathbb{I}[t-1, t]ds = \int u(2t-s-1) c_n(\sigma(t,s))  \mathbb{I}[t-1,t]ds\nonumber  \\
                    &= \int u(s') c_n(1-\sigma(t,s'))\mathbb{I}[t-1,t] ds' \label{approx:cov1}\\
                    &= \int u(s')c_n(\sigma(t,s'))\mathbb{I}[t-1,t]ds' = \langle{u, c_n}\rangle.\nonumber \\
    \langle{v, s_n}\rangle &= \int v(s) s_n(\sigma(t,s))  \mathbb{I}[t-1,t]ds= \int u(2t-s-1) s_n(\sigma(t,s)) \mathbb{I}[t-1,t]ds \nonumber \\
                    &= \int u(s')s_n(1-\sigma(t,s'))  \mathbb{I}[t-1,t]ds' \label{approx:cov2}\\
                    &= -\int u(s') s_n(\sigma(t,s'))  \mathbb{I}[t-1,t]ds' = -\langle{u, s_n}\rangle.\nonumber
\end{align}
Here, for \eqref{approx:cov1} and \eqref{approx:cov2}, we use the change of variables $s' \leftarrow 2t-s-1,$ which gives us \[\sigma(t,s)  = 1-(t-s) = 1-(1+t-s-1)  = 1-[1-(t-(2t-s-1))] = 1-(1-(t-s')) = 1-\sigma(t,s').\]

Then, we use the fact that $c_n(1-\sigma(t,s')) = c_n(\sigma(t,s'))$ but $s_n(1-\sigma(t,s')) = -s_n(\sigma(t,s')).$ That is, both $u$ and $v$ have the same cosine coefficients but negated sine coefficients of each other. But, we know that both $s_n(\sigma(t,t-1)) = s_n(1-(t-(t-1))) = s_n(0) = 0$ and $s_n(\sigma(t,t)) = s_n(1-(t-t)) = s_n(1) = 0$, and hence, the reconstruction of $\hat{u}$ at the endpoints $\sigma(t,t-1) = 0$ and $\sigma(t,t) = 1$ depends only on the cosine coefficients, whence we assert that the reconstruction $\hat{u}$ agrees with $\hat{v}$ at both endpoints. Therefore, we have $\hat{u}(t,t) = \hat{v}(t,t)$ implying that $\hat{w}(t,t) = \hat{u}(t,t)$.

Note that $w$ is continuous and periodic, for which the basis $\{1, c_n, s_n\}_{n}$ is complete, and hence, we know that as $N \to \infty,$ $\hat{w} \to w.$ Thus, at $s=t,$ we have $\hat{u}(t,t)= \hat{w}(t,t) = w(t) = \frac{u(t)+v(t)}{2} = \frac{u(t)+u(t-1)}{2},$ which completes the proof of the claim in \eqref{fou:approx}.

Recall from \eqref{fou: ustore1} that we can express the stored approximation of $u(t)$, given by $\hat{u}(t,s),$ as follows:
\begin{align*}
  \hat{u}(t,s) =\langle u(s), 1 \rangle + \sum_n \langle u(s), s_n(\sigma(t,s)) \rangle s_n(\sigma(t,s)) + \sum_n \langle u(s), c_n(\sigma(t,s)) \rangle c_n(\sigma(t,s))\label{fou: store1}
\end{align*}
For the value at $t,$ the approximation $\hat{u}(t,t)$ is then given by
\begin{align*}
  \hat{u}(t,t) = x^1(t)+ \sum_k x_k^c(t) c_k(1)+\sum_k x_k^s(t) s_k(1)
  =x^1(t)+ \sum_k\sqrt{2}x_k^c(t).
\end{align*}
Due to \eqref{fou:approx}, we know \( u(t-1) = 2\hat{u}(t,t)- u(t)\), which combined with the above yields:
\begin{equation}
    u(t-1) =2x^1(t)+ 2\sqrt{2}\sum_k x_k^c(t) - u(t).
    \label{eq:fou-utm1}
\end{equation}

Finally, with regards to Corollary  \ref{cor:hippo-gen-fin}, for Theorem \ref{thm:gen-hippo-t0}, \eqref{eq:twf-partial-sigs-val} satisfies \eqref{eq:twf-partial-sigs} and \eqref{eq:tilting-phi-val} satisfies \eqref{eq:tilting-phi} with \eqref{eq:cpw-linear-val} satisfying \eqref{eq:cpw-linear} for $\bm{A}^0.$ Moreover, from \eqref{eq:fou-utm1}, we can take $\overline{c} = 1, \overline{d} = -1,$ and $\Aprime_k:=\begin{cases}
2 & k = 0\\
2\sqrt{2} & k\text{ odd}\\
0 & \text{ otherwise}
\end{cases}$ to satisfy \eqref{eq:ut-approx}.

Invoking Corollary \ref{cor:hippo-gen-fin} now yields the following OSSM:\footnote{
Recall that, like the coefficients, the matrices are 0-indexed.}
 \[(\bm{A}^0 + (\twpc(t)+\phid(t))\bm{I} - \overline{c}\bm{D}\left(\Aprime\right)^\top,\ \bm{B}-\overline{d}\bm{D}),\] 
 where \( \bm{A}^0_{nk} = \gamma_{nk} \) with $\bm{D}_n$ and $\bm{B}_n$ specified as follows:
\begin{align}
    \bm{D}_n &= \begin{cases}
 1 &  n =0 \\
\sqrt{2} &  n \text{ odd} \\
0 & \text{otherwise}
\end{cases} \label{fou: utm1}\\
\bm{B}_n &= \begin{cases}
 1 &  n =0 \\
\sqrt{2}, & n \text{ odd} \\
0 & \text{otherwise}
\end{cases} \label{fou: ut}
\end{align}
Here, the values are derived from the expressions of  Corollary \ref{cor:hippo-gen-fin}: \[\bm{D}_n = (p_n \omega \psi)(\sigma(t,t-1)) (\sigma_s \phi)(t,t - 1)\text{ and }  \bm{B}_n = (p_n \omega \psi)(1)(\sigma_s\phi)(t,t).\] Recall that we have $p_n \in \{1, c_n, s_n\}, \omega(t,s) = 1$, and from \eqref{eq:sigma-val} and \eqref{eq:phi-psi-val}, $\sigma_s(t,s) = 1$ with $\psi(t,s) = \phi(t,s) = 1.$ 
 Thus, \eqref{fou: utm1} is due to $1(0)\cdot 1 = 1, s_n(0)\cdot 1  = 0$ but $c_n(0)\cdot 1 = \sqrt{2}.$  Similarly, \eqref{fou: ut} is because $1(0)\cdot 1= 1, s_n(1)\cdot 1 = 0$ but again $c_n(1)\cdot 1 = \sqrt{2}.$

Now, we have
\begin{align*}
&[\bm{D}\left(\Aprime\right)^\top]_{nk}=
\begin{cases}
    2 &   n = k =  0  \\
    2\sqrt{2} & n = 0,\ k \text{ odd or } k = 0,\ n \text{ odd} \\
    4 &   n, k \text{ odd}  \\
    0 & \text{otherwise}
\end{cases} 
&[\overline{d}\bm{D}]_n =
\begin{cases}
    -1 &   n = 0  \\
    -\sqrt{2} & n\text{ odd}\\
    0 & n\text{ otherwise}
\end{cases}.
\end{align*} 

As $c(t)=d(t) = 0,$ we define $\bm{A}\leftarrow \bm{A}^0 - \overline{c}\bm{D}\left(\Aprime\right)^\top$ and $\bm{B}\leftarrow \bm{B}-\overline{d}\bm{D}$, given by
\begin{align*}
  &\bm{A}_{nk} =
  \begin{cases}%
    -2 & n = k = 0 \\
    -2\sqrt{2} & n = 0,\ k \text{ odd or } k = 0,\ n \text{ odd} \\
    -4 & n, k \text{ odd} \\
    2 \pi n & n-k=1, k \text{ odd} \\
    -2 \pi k & k-n=1, n \text{ odd} \\
    0 & \text{otherwise}
  \end{cases}
  &\bm{B}_{n} =
  \begin{cases}%
    2 & n = 0 \\
    2\sqrt{2} & n \text{ odd} \\
    0 & \text{otherwise}
  \end{cases}
\end{align*}
\end{proof}

\subsubsection{Function Approximation Error}
\begin{proof}[Proof of Theorem \ref{thm:fourier-kernel-approx}]
First, the state size being $N$ dictates that there are $\lfloor N/2 \rfloor$ $s_n$ and $c_n$ basis functions each. We fix time $t$ and denote $x_n^c$ and $x_n^s$ to be the respective coefficients for $s_n$ and $c_n$ basis corresponding to S4-Fou. Since $\{s_n, c_n\}_{n \geq 0}$ forms an orthonormal basis, by Parseval's identity, we have
\begin{equation}
    \lVert {K - \hat{K}}\rVert_2^2 = \sum_{n=\lfloor N/2 \rfloor}^\infty {x_n^c}^2(t) + {x_n^s}^2(t). \label{coeff: pars}
\end{equation}
Thus, in order to bound the error, it suffices to bound the high-order coefficients by integration by parts as follows:
\begin{align*}
    x^c_n(t) &= \langle{K, c_n}\rangle = \int_0^1 K(t)c_n(t)dt \\
            &= \left[K(t)\frac{1}{2\pi n} s_n(t) \right]_0^1 - \frac{1}{2\pi n} \int_0^1 K'(t)s_n(t)dt \\
            &= -\frac{1}{2\pi n} \int_0^1 K'(t)s_n(t)dt.
\end{align*}
The quantity in the bracket vanishes as $s_n$ is periodic. Therefore
\[\lvert{x^c_n}\rvert \leq \left\lvert{\frac{1}{2\pi n} \int_0^1 K'(t)s_n(t)dt}\right\rvert \leq \frac{1}{2\pi n} \int_0^1 \lvert{K'(t)}\rvert\lvert{s_n(t)}\rvert dt\leq \frac{L}{2\pi n},\]
where we use the fact that $K$ is $L-$Lipshitz. 
For $x_n^s$, a similar argument holds and we get:
\[\lvert{x^s_n}\rvert \leq \left\lvert{\frac{1}{2\pi} \int_0^1 K'(t)c_n(t)dt}\right\rvert \leq \frac{1}{2\pi} \int_0^1 \lvert{K'(t)}\rvert\lvert{c_n(t)}\rvert dt \leq \frac{L}{2\pi n.} \]
Due to \eqref{coeff: pars}, this then implies that 
\begin{align}
    \lVert{K - \hat{K}}\rVert_2^2 &= \sum_{n=\lfloor N/2 \rfloor}^\infty {x_n^c}^2(t) + {x_n^s}^2(t) = \sum_{n=\lfloor N/2 \rfloor}^\infty \lvert{x_n^c}\rvert^2(t) + \lvert{x_n^s}\rvert^2(t)\nonumber \\
    &\leq \sum_{n=\lfloor N/2 \rfloor}^\infty \frac{2L^2}{(2\pi n)^2} = \frac{2L^2}{(2\pi )^2}\sum_{n=\lfloor N/2 \rfloor}^\infty \frac{1}{n^2} =\frac{2L^2}{(2\pi )^2}\frac{1}{\lfloor N/2 \rfloor}\nonumber  \\
    & \leq \frac{L^2}{\pi^2(N-2)}. \label{coeff: lip}
\end{align}
We use \eqref{coeff: lip} to get the following estimate on $\lVert{K - \hat{K}}\rVert:$
\[
\lVert{K - \hat{K}}\rVert_2 \leq \frac{L}{\pi\sqrt{(N-2)}}.
\]
Thus, it suffices for $N$ to satisfy the following inequality:
\[
\frac{L}{\pi\sqrt{(N-2)}} \leq \epsilon \implies \sqrt{N-2} \geq \frac{L}{\pi\epsilon}\implies N \geq \paren{\frac{L}{\pi\epsilon}}^{2}+2.
\]

We now use the same argument as above to the fact that $K$ has order-$k$ bounded derivative. By iteration, we get: 
\[\lvert{x^s_n}\rvert = \lvert{x^c_n}\rvert  \leq \left\lvert{\frac{1}{(2\pi n)^k} \int_0^1 K^{(k)}(t)s_n(t)dt}\right\rvert \leq \frac{1}{(2\pi n)^k} \int_0^1 \lvert{K^{(k)}}\rvert\lvert{s_n(t)}\rvert dt \leq \frac{L}{(2\pi n)^k}.\]
Again, due to \eqref{coeff: pars}, this then gives us the following estimate on the square error:
\begin{align}
    \lVert{K - \hat{K}}\rVert_2^2 &= \sum_{n=\lfloor N/2 \rfloor}^\infty {x_n^c}^2(t) + {x_n^s}^2(t) = \sum_{n=\lfloor N/2 \rfloor}^\infty \lvert{x_n^c}\rvert^2(t) + \lvert{x_n^s}\rvert^2(t)\nonumber \\
    &\leq \sum_{n=\lfloor N/2 \rfloor}^\infty \frac{2L^2}{(2\pi n)^{2k}} = \frac{2L^2}{(2\pi )^{2k}}\sum_{n=\lfloor N/2 \rfloor}^\infty \frac{1}{n^{2k}} = \frac{2L^2}{(2\pi )^{2k}}\frac{1}{(\lfloor N/2 \rfloor)^{2k-1}} \nonumber \\
    &\leq \frac{L^2}{\pi^{2k}(N-2)^{2k-1}}. \label{coeff: kder}
\end{align}
If $K$ has order $k-$bounded derivatives, then we use \eqref{coeff: kder} to get the following estimate on $\lVert{K - \hat{K}}\rVert:$
\[
\lVert{K - \hat{K}}\rVert_2 \leq \frac{L}{\pi^k{(N-2)^{-k+1/2}}}.
\]
Again, it suffices for $N$ to satisfy the following inequality:
\[\frac{L}{\pi^k{(N-2)^{-k+1/2}}} \leq \epsilon \implies (N-2)^{k-1/2} \geq \frac{L}{\pi^k\epsilon}\implies N \geq \paren{\frac{L}{\pi^k\epsilon}}^{\frac{2}{2k-1}}+2.\]
\end{proof}

\subsubsection{Delay Network}

Finally, we prove \cref{thm:delay-legt}.
Note that this is a stronger version of the LegT portion of \cref{thm:delay}, while the FouT portion is a corollary of the proof of \cref{thm:hippo-fourier}.

We start by working out some calculations concretely to provide an example.
The SSM corresponding to HiPPO-LegT is
\begin{align*}
  \bm{A} &=
  \bm{P}^{\frac{1}{2}}
  \begin{bmatrix}%
    -1 & 1 & -1 & 1 \\
    -1 & -1 & 1 & -1 \\
    -1 & -1 & -1 & 1 \\
    -1 & -1 & -1 & -1 \\
  \end{bmatrix}
  \bm{P}^{\frac{1}{2}}
  \\
  \bm{B} &= \bm{P}^{\frac{1}{2}} \bm{1} \\
  \bm{C} &= \bm{Z}^\top \bm{P}^{\frac{1}{2}} \\
  \\
  \bm{P} &= \diag\{1+2n\} \\
  \bm{Z}^\top &= \begin{bmatrix} 1 & -1 & 1 & -1 \end{bmatrix}
\end{align*}

The transfer function is
\begin{align*}
  \bm{C}(s\bm{I}-\bm{A})^{-1}\bm{B} &= \bm{Z} (s\bm{P}^{-1} - \bm{A})^{-1} \bm{1} \\
\end{align*}
(In the RHS and for the rest of this part, we will redefine \( \bm{A} \) to be the \( \pm 1 \) matrix found above for convenience.)

\paragraph{Case N=1.}

We have \( \bm{A} = -1, \bm{B} = \bm{C} = 1 \), and the transfer function is \( \bm{C}(s\bm{I}-\bm{A})^{-1}\bm{B} = \frac{1}{1+s} \).

\paragraph{Case N=2.}

The transfer function is
\begin{align*}
  \bm{C}(s\bm{I}-\bm{A})^{-1}\bm{B}
  &=
  \begin{bmatrix} 1 & -1 \end{bmatrix}
  \left(
    s\bm{P}^{-1} -
    \begin{bmatrix}%
      -1 & 1 \\
      -1 & -1 \\
    \end{bmatrix}
  \right)^{-1}
  \begin{bmatrix} 1 \\ 1 \end{bmatrix}
  \\
  &=
  \begin{bmatrix} 1 & -1 \end{bmatrix}
    \begin{bmatrix}%
      s+1 & -1 \\
      1 & \frac{s}{3}+1 \\
    \end{bmatrix}^{-1}
  \begin{bmatrix} 1 \\ 1 \end{bmatrix}
  \\
  &=
  \frac{1}{\frac{s^2}{3}+\frac{4s}{3}+2}
  \begin{bmatrix} 1 & -1 \end{bmatrix}
    \begin{bmatrix}%
      1+\frac{s}{3} & 1 \\
      -1 & 1+s \\
    \end{bmatrix}
  \begin{bmatrix} 1 \\ 1 \end{bmatrix}
  \\
  &=
  \frac{2-\frac{2s}{3}}{\frac{s^2}{3}+\frac{4s}{3}+2}
  \\
  &=
  \frac{1-\frac{s}{3}}{1+\frac{2s}{3}+\frac{s^2}{6}}
\end{align*}

It can be verified that this is indeed \( [1 / 2]_{\exp}(-s) \).

\paragraph{A General Recursion.}
We will now sketch out a method to relate these transfer functions recursively.

We will redefine \( \bm{Z} \) to be the vector that ENDS in \( +1 \).

The main idea is to write
\begin{align*}
  \bm{A}_{n}
  &=
  \begin{bmatrix}%
    \bm{A}_{n-1} & \bm{Z}_{n-1} \\
    -\bm{1}_{n-1}^\top & -1 \\
  \end{bmatrix}
  \\
  \left( s\bm{P}_{n}^{-1} - \bm{A}_{n} \right)^{-1}
  &=
  \begin{bmatrix}%
    s\bm{P}_{n-1}^{-1} - \bm{A}_{n-1}  & -\bm{Z}_{n-1} \\
    \bm{1}_{n-1}^\top & 1+\frac{s}{2n+1} \\
  \end{bmatrix}^{-1}
  .
\end{align*}
Now we can use the block matrix inversion formula.\footnote{\url{https://en.wikipedia.org/wiki/Block_matrix\#/Block_matrix_inversion}}
Ideally, this will produce a recurrence where the desired transfer function \( \bm{Z}_{n} (s\bm{P}_n^{-1} - \bm{A}_{n})^{-1} \bm{1}_{n} \) will depend on \( \bm{Z}_{n-1} (s\bm{P}_n^{-1} - \bm{A}_{n-1})^{-1} \bm{1}_{n-1} \).
However, looking at the block matrix inversion formula, it becomes clear that there are also dependencies on terms like \( \bm{1}_{n-1}^\top (s\bm{P}_{n-1}^{-1} - \bm{A}_{n-1})^{-1} \bm{1}_{n-1} \) and \( \bm{Z}_{n-1} (s\bm{P}_{n-1}^{-1} - \bm{A}_{n-1})^{-1} \bm{Z}_{n-1}^\top \).

The solution is to track all of these terms simultaneously.

We will compute the 4 transfer functions
\begin{align*}
  \bm{H}_n(s)
  &:=
  \begin{bmatrix}
    \bm{H}_n^{1z}(s) & \bm{H}_n^{11}(s) \\
    \bm{H}_n^{zz}(s) & \bm{H}_n^{z1}(s) \\
  \end{bmatrix}
  \\
  &:=
  \begin{bmatrix}
    \bm{1}_n^\top (s\bm{P}_n^{-1} - \bm{A}_n)^{-1} \bm{Z}_n & \bm{1}_n^\top (s\bm{P}_n^{-1} - \bm{A}_n)^{-1} \bm{1}_n \\
    \bm{Z}_n^\top (s\bm{P}_n^{-1} - \bm{A}_n)^{-1} \bm{Z}_n & \bm{Z}_n^\top (s\bm{P}_n^{-1} - \bm{A}_n)^{-1} \bm{1}_n \\
  \end{bmatrix}
  \\
  &=
  \begin{bmatrix} \bm{1}_n^\top \\ \bm{Z}_n^\top \end{bmatrix}
  (s\bm{P}_n^{-1} - \bm{A}_{n})^{-1}
  \begin{bmatrix} \bm{Z}_n & \bm{1}_n \end{bmatrix}
\end{align*}

\begin{lemma}%
  \label{lmm:block-ldu}
Instead of using the explicit block matrix inversion formula, it will be easier to work with the following factorization used to derive it (block LDU decomposition\footnote{\url{https://en.wikipedia.org/wiki/Schur_complement\#/Background}}).
\begin{align*}
  \begin{bmatrix} A & B \\ C & D \end{bmatrix}
  &=
  \begin{bmatrix} I & 0 \\ CA^{-1} & I \end{bmatrix}
  \begin{bmatrix} A & 0 \\ 0 & D-CA^{-1}B \end{bmatrix}
  \begin{bmatrix} I & A^{-1}B \\ 0 & I \end{bmatrix}
  \\
  \begin{bmatrix} A & B \\ C & D \end{bmatrix}^{-1}
  &=
  \begin{bmatrix} I & -A^{-1}B \\ 0 & I \end{bmatrix}
  \begin{bmatrix} A^{-1} & 0 \\ 0 & (D-CA^{-1}B)^{-1} \end{bmatrix}
  \begin{bmatrix} I & 0 \\ -CA^{-1} & I \end{bmatrix}
\end{align*}
\end{lemma}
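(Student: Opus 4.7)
The plan is to verify both identities by direct block matrix multiplication, which is the standard route for this sort of Schur-complement factorization. The only assumptions needed are that $A$ is invertible (so that $A^{-1}B$ and $CA^{-1}$ are defined) and, for the second identity, that the Schur complement $S := D - CA^{-1}B$ is invertible. There is no real obstacle here; the result is a purely algebraic identity, and the difficulty is mostly bookkeeping to keep the block multiplications straight.

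For the first identity, I would first multiply the middle block-diagonal matrix on the right by the upper block-triangular factor, obtaining
\begin{equation*}
  \begin{bmatrix} A & 0 \\ 0 & S \end{bmatrix}
  \begin{bmatrix} I & A^{-1}B \\ 0 & I \end{bmatrix}
  =
  \begin{bmatrix} A & B \\ 0 & S \end{bmatrix}.
\end{equation*}
Then I would left-multiply by the lower block-triangular factor to get
\begin{equation*}
  \begin{bmatrix} I & 0 \\ CA^{-1} & I \end{bmatrix}
  \begin{bmatrix} A & B \\ 0 & S \end{bmatrix}
  =
  \begin{bmatrix} A & B \\ C & CA^{-1}B + S \end{bmatrix}
  =
  \begin{bmatrix} A & B \\ C & D \end{bmatrix},
\end{equation*}
where the last equality uses $S = D - CA^{-1}B$. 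This establishes the LDU decomposition.

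For the second identity, the cleanest approach is to invert the factored form using $(XYZ)^{-1} = Z^{-1}Y^{-1}X^{-1}$. Observe that for any block $M$,
\begin{equation*}
  \begin{bmatrix} I & 0 \\ M & I \end{bmatrix}^{-1}
  =
  \begin{bmatrix} I & 0 \\ -M & I \end{bmatrix},
  \qquad
  \begin{bmatrix} I & M \\ 0 & I \end{bmatrix}^{-1}
  =
  \begin{bmatrix} I & -M \\ 0 & I \end{bmatrix},
\end{equation*}
which is immediate from block multiplication. Likewise, the block-diagonal factor inverts blockwise to $\operatorname{diag}(A^{-1}, S^{-1})$. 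Applying these to the reversed product of the three factors yields exactly the claimed formula. As a sanity check (and to give a second independent verification), I would then multiply the candidate inverse by the original matrix and confirm that the four resulting blocks collapse to $I$, $0$, $0$, $I$ using $S = D - CA^{-1}B$.

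The only step worth flagging is the hypothesis management: the lemma is used later to track transfer functions of the form $(s\bm{P}^{-1} - \bm{A})^{-1}$, where the relevant $A$-block is $s\bm{P}_{n-1}^{-1} - \bm{A}_{n-1}$. One should note that invertibility of this block fails only on a finite set of $s$ (the eigenvalues of $\bm{A}_{n-1}\bm{P}_{n-1}$), so the factorization is valid as a rational identity in $s$, which is all that is needed for the transfer-function application. With that remark, both identities follow from the direct block computations above.
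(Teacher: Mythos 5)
Your proof is correct: both block multiplications check out, and the inversion of the factored form via $(XYZ)^{-1}=Z^{-1}Y^{-1}X^{-1}$ together with the elementary inverses of the unitriangular and block-diagonal factors is exactly the standard argument. The paper states this lemma without proof (citing it as a well-known Schur-complement fact), so your verification simply supplies the canonical derivation the paper implicitly relies on; your closing remark that invertibility of the $A$-block holds as a rational identity in $s$ for the transfer-function application is a sensible addition.
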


Using \cref{lmm:block-ldu}, we can factor the inverse as
\begin{align*}
  (s\bm{P}_n^{-1} - \bm{A}_{n})^{-1}
  =&
  \begin{bmatrix}
    \bm{I}_{n-1} & (s\bm{P}_{n-1}^{-1} - \bm{A}_{{n-1}})^{-1} \bm{Z}_{n-1} \\
    & 1 \\
  \end{bmatrix}
  \\&
  \begin{bmatrix}
    (s\bm{P}_{n-1}^{-1} - \bm{A}_{{n-1}})^{-1} & \\
    & \left(1+\frac{s}{2n+1} + (-1)^{n-1} \bm{H}_{n-1}^{1z}(s)\right)^{-1} \\
  \end{bmatrix}
  \\&
  \begin{bmatrix}
    \bm{I}_{n-1} & \\
    -\bm{1}_{n-1}^\top (s\bm{P}_{n-1}^{-1} - \bm{A}_{{n-1}})^{-1} & 1 \\
  \end{bmatrix}
\end{align*}

Now we compute
\begin{align*}
  &
  \begin{bmatrix} \bm{1}_n^\top \\ \bm{Z}_n^\top \end{bmatrix}
  \begin{bmatrix}
    \bm{I}_{n-1} & (s\bm{P}_{n-1}^{-1} - \bm{A}_{{n-1}})^{-1} \bm{Z}_{n-1} \\
    & 1 \\
  \end{bmatrix}
  \\&=
  \begin{bmatrix} \bm{1}_{n-1}^\top & 1 \\ -\bm{Z}_{n-1}^\top & 1  \end{bmatrix}
  \begin{bmatrix}
    \bm{I}_{n-1} & (s\bm{P}_{n-1}^{-1} - \bm{A}_{{n-1}})^{-1} \bm{Z}_{n-1} \\
    & 1 \\
  \end{bmatrix}
  \\&=
  \begin{bmatrix}
    \bm{1}_{n-1}^\top & 1 + \bm{H}_{n-1}^{1z}(s) \\
    -\bm{Z}_{n-1}^\top & 1 - \bm{H}_{n-1}^{zz}(s)
  \end{bmatrix}
\end{align*}
and
\begin{align*}
  &
  \begin{bmatrix}
    \bm{I}_{n-1} & \\
    -\bm{1}_{n-1}^\top (s\bm{P}_{n-1}^{-1} - \bm{A}_{{n-1}})^{-1} & 1 \\
  \end{bmatrix}
  \begin{bmatrix} \bm{Z}_n & \bm{1}_n \end{bmatrix}
  \\&=
  \begin{bmatrix}
    \bm{I}_{n-1} & \\
    -\bm{1}_{n-1}^\top (s\bm{P}_{n-1}^{-1} - \bm{A}_{{n-1}})^{-1} & 1 \\
  \end{bmatrix}
  \begin{bmatrix} -\bm{Z}_{n-1} & \bm{1}_{n-1} \\ 1 & 1 \end{bmatrix}
  \\&=
  \begin{bmatrix}
    -\bm{Z}_{n-1} & \bm{1}_{n-1} \\
    1 + \bm{H}_{n-1}^{1z}(s) & 1 - \bm{H}_{n-1}^{11}(s)
  \end{bmatrix}
\end{align*}

Now we can derive the full recurrence for all these functions.
\begin{align*}
  \bm{H}_n(s)
  &=
  \begin{bmatrix}
    \bm{H}_n^{1z}(s) & \bm{H}_n^{11}(s) \\
    \bm{H}_n^{zz}(s) & \bm{H}_n^{z1}(s) \\
  \end{bmatrix}
  \\
  &=
  \begin{bmatrix} \bm{1}_n^\top \\ \bm{Z}_n^\top \end{bmatrix}
  (s\bm{P}_n^{-1} - \bm{A}_{n})^{-1}
  \begin{bmatrix} \bm{Z}_n & \bm{1}_n \end{bmatrix}
  \\
  &=
  \begin{bmatrix} \bm{1}_n^\top \\ \bm{Z}_n^\top \end{bmatrix}
  \begin{bmatrix}
    \bm{I}_{n-1} & (s\bm{P}_{n-1}^{-1} - \bm{A}_{{n-1}})^{-1} \bm{Z}_{n-1} \\
    & 1 \\
  \end{bmatrix}
  \\&
  \begin{bmatrix}
    (s\bm{P}_{n-1}^{-1} - \bm{A}_{{n-1}})^{-1} & \\
    & \left(1+\frac{s}{2n+1} + (-1)^{n-1} \bm{H}_{n-1}^{1z}(s)\right)^{-1} \\
  \end{bmatrix}
  \\&
  \begin{bmatrix}
    \bm{I}_{n-1} & \\
    -\bm{1}_{n-1}^\top (s\bm{P}_{n-1}^{-1} - \bm{A}_{{n-1}})^{-1} & 1 \\
  \end{bmatrix}
  \begin{bmatrix} \bm{Z}_n & \bm{1}_n \end{bmatrix}
  \\&=
  \begin{bmatrix}
    \bm{1}_{n-1}^\top & 1 + \bm{H}_{n-1}^{1z}(s) \\
    -\bm{Z}_{n-1}^\top & 1 - \bm{H}_{n-1}^{zz}(s)
  \end{bmatrix}
  \\&\quad \cdot
  \begin{bmatrix}
    (s\bm{P}_{n-1}^{-1} - \bm{A}_{{n-1}})^{-1} & \\
    & \left(1+\frac{s}{2n+1} + \bm{H}_{n-1}^{1z}(s)\right)^{-1} \\
  \end{bmatrix}
  \\&\quad \cdot
  \begin{bmatrix}
    -\bm{Z}_{n-1} & \bm{1}_{n-1} \\
    1 + \bm{H}_{n-1}^{1z}(s) & 1 - \bm{H}_{n-1}^{11}(s)
  \end{bmatrix}
  \\&=
  \begin{bmatrix}
    \bm{1}_{n-1}^\top  \\
    -\bm{Z}_{n-1}^\top
  \end{bmatrix}
    (s\bm{P}_{n-1}^{-1} - \bm{A}_{{n-1}})^{-1}
  \begin{bmatrix}
    -\bm{Z}_{n-1} & \bm{1}_{n-1}
  \end{bmatrix}
  \\&\quad +
  \begin{bmatrix}
    1 + \bm{H}_{n-1}^{1z}(s) \\
    1 - \bm{H}_{n-1}^{zz}(s)
  \end{bmatrix}
  \left(1+\frac{s}{2n+1} + \bm{H}_{n-1}^{1z}(s)\right)^{-1}
  \begin{bmatrix}
    1 + \bm{H}_{n-1}^{1z}(s) & 1 - \bm{H}_{n-1}^{11}(s)
  \end{bmatrix}
  \\&=
  \begin{bmatrix}
    -\bm{H}_{n-1}^{1z}(s) & \bm{H}_{n-1}^{11}(s) \\
    \bm{H}_{n-1}^{zz}(s) & -\bm{H}_{n-1}^{z1}(s) \\
  \end{bmatrix}
  \\&\quad +
  \begin{bmatrix}
    1 + \bm{H}_{n-1}^{1z}(s) \\
    1 - \bm{H}_{n-1}^{zz}(s)
  \end{bmatrix}
  \left(1+\frac{s}{2n+1} + \bm{H}_{n-1}^{1z}(s)\right)^{-1}
  \begin{bmatrix}
    1 + \bm{H}_{n-1}^{1z}(s) & 1 - \bm{H}_{n-1}^{11}(s)
  \end{bmatrix}
\end{align*}

Now we'll define a few transformations which will simplfy the calculations.
Define
\begin{align*}
  G_n^{1z}(s) &= \frac{1}{2}(1 + \bm{H}_n^{1z}(s)) \\
  G_n^{11}(s) &= 1 - \bm{H}_n^{1z}(s) \\
  G_n^{zz}(s) &= 1 - \bm{H}_n^{1z}(s) \\
  G_n^{z1}(s) &= (-1)^n \bm{H}_n^{z1}(s) \\
\end{align*}

These satisfy the following recurrences:
\begin{align*}
  G_n^{1z}(s) &= 1-G_{n-1}^{1z}(s) + \frac{G_{n-1}^{1z}(s) G_{n-1}^{1z}(s)}{G_{n-1}^{1z}(s) + \frac{s}{2(2n+1)}} \\
  G_n^{11}(s) &= G_{n-1}^{11}(s) - \frac{G_{n-1}^{11}(s) G_{n-1}^{1z}(s)}{G_{n-1}^{1z}(s) + \frac{s}{2(2n+1)}} \\
  G_n^{zz}(s) &= G_{n-1}^{zz}(s) - \frac{G_{n-1}^{zz}(s) G_{n-1}^{1z}(s)}{G_{n-1}^{1z}(s) + \frac{s}{2(2n+1)}} \\
  G_n^{z1}(s) &= G_{n-1}^{z1}(s) - (-1)^{n-1} \frac{G_{n-1}^{11}(s) G_{n-1}^{zz}(s)}{G_{n-1}^{1z}(s) + \frac{s}{2(2n+1)}} \\
\end{align*}

We can analyze each term separately.

\textbf{Case} \( \bm{G_n^{1z}(s)} \).

This will be the most important term, as it determines the denominator of the expressions.
Simplifying the recurrence slightly gives
\begin{align*}
  G_n^{1z}(s) &= 1-G_{n-1}^{1z}(s) + \frac{G_{n-1}^{1z}(s) G_{n-1}^{1z}(s)}{G_{n-1}^{1z}(s) + \frac{s}{2(2n+1)}} \\
  &= \frac{(G_{n-1}^{1z}(s) + \frac{s}{2(2n+1)}) - \frac{s}{2(2n+1)} \cdot G_{n-1}^{1z}(s)}{G_{n-1}^{1z}(s) + \frac{s}{2(2n+1)}}
  .
\end{align*}
Now let \( G_{n}^{1z}(s) = \frac{P_{n}^{1z}(s)}{Q_{n}^{1z}(s)} \) where \( P, Q \) are polynomials.
Clearing the denominator \( Q \) yields
\begin{align*}
  \frac{P_{n}^{1z}(s)}{Q_{n}^{1z}(s)}
  &= \frac{(P_{n-1}^{1z}(s) + \frac{s}{2(2n+1)}Q_{n-1}^{1z}(s)) - \frac{s}{2(2n+1)} \cdot P_{n-1}^{1z}(s)}{P_{n-1}^{1z}(s) + \frac{s}{2(2n+1)} \cdot Q_{n-1}^{1z}(s)}
  .
\end{align*}

This results in the recurrence
\begin{align*}
  Q_n^{1z}(s) &= P_{n-1}^{1z}(s) + \frac{s}{2(2n+1)} \cdot Q_{n-1}^{1z}(s) \\
  P_n^{1z}(s) &= Q_{n-1}^{1z}(s) - \frac{s}{2(2n+1)} \cdot P_{n-1}^{1z}(s)
  .
\end{align*}

But this is exactly the \emph{fundamental recurrence formula} for continuants of the continued fraction
\begin{align*}
  e^{s} = 1+\frac{s}{1-\frac{\frac{1}{2}s}{1+\frac{\frac{1}{6}s}{1-\frac{\frac{1}{6}s}{1+\frac{\frac{1}{10}s}{1-\frac{\frac{1}{10}s}{1+\ddots}}}}}}
\end{align*}
Therefore \( Q_{n-1}^{1z}(s) \) are the denominators of the Pade approximants.

Note that by definition of \( P, Q \),
\begin{align*}
  G_{n-1}^{1z}(s) + \frac{s}{2(2n+1)} = \frac{P_{n-1}^{1z}(s) + \frac{s}{2(2n+1)} \cdot Q_{n-1}^{1z}(s)}{Q_{n-1}^{1z}(s)} = \frac{Q_{n}^{1z}(s)}{Q_{n-1}^{1z}(s)}
\end{align*}

Going forward we will also suppress the superscript of \( Q \), \( Q_{n-1}(s) := Q_{n-1}^{1z}(s) \), as it will be evident that all terms have the same denominator \( Q_{n}(s) \)

\textbf{Case} \( \bm{G_n^{11}(s)} \).

First note that \( G_n^{11}(s) = G_n^{zz}(s) \) is straightforward from the fact that their recurrences are identical.
The recurrence is
\begin{align*}
  G_n^{11}(s) &= G_{n-1}^{11}(s) - \frac{G_{n-1}^{11}(s) G_{n-1}^{1z}(s)}{G_{n-1}^{1z}(s) + \frac{s}{2(2n+1)}} \\
  &= \frac{\frac{s}{2(2n+1)} \cdot G_{n-1}^{11}(s)}{G_{n-1}^{1z}(s) + \frac{s}{2(2n+1)}} \\
  &= \frac{\frac{s}{2(2n+1)} \cdot G_{n-1}^{11}(s) Q_{n-1}(s)}{P_{n-1}^{1z}(s) + \frac{s}{2(2n+1)} \cdot Q_{n-1}(s)} \\
  &= \frac{\frac{s}{2(2n+1)} \cdot G_{n-1}^{11}(s) Q_{n-1}(s)}{Q_{n}(s)} \\
\end{align*}
Therefore
\begin{align*}
  G_n^{11}(s) Q_n(s) &= \frac{s}{2(2n+1)} \cdot G_{n-1}^{11}(s) Q_{n-1}(s) = \prod_{i=1}^n \frac{s}{2(2i+1)} \\
  G_n^{11}(s) &= \frac{\prod_{i=1}^n \frac{s}{2(2i+1)}}{Q_n(s)}
\end{align*}

\textbf{Case} \( \bm{G_n^{z1}(s)} \).

Define
\begin{align*}
  G_{n}^{z1}(s) = \frac{P_{n}^{z1}(s)}{Q_{n}(s)}
\end{align*}

This term satisfies the formula
\begin{align*}
  G_n^{z1}(s) &= G_{n-1}^{z1}(s) - (-1)^{n-1} \frac{G_{n-1}^{11}(s) G_{n-1}^{zz}(s)}{G_{n-1}^{1z}(s) + \frac{s}{2(2n+1)}}
  \\&=
  \frac{P_{n-1}^{z1}(s)}{Q_{n-1}(s)} - (-1)^{n-1} \frac{\left(\prod_{i=0}^{n-1} \frac{s}{2(2i+1)}\right)^2 / Q_{n-1}(s)^2}{\frac{Q_{n}(s)}{Q_{n-1}(s)}} \\
  \\&=
  \frac{P_{n-1}^{z1}(s) Q_{n}(s)}{Q_{n-1}(s) Q_{n}(s)} - (-1)^{n-1} \frac{\left(\prod_{i=0}^{n-1} \frac{s}{2(2i+1)}\right)^2}{Q_{n}(s) Q_{n-1}(s)} \\
\end{align*}
By definition of \( P^{z1} \),
\begin{align*}
  P_{n-1}^{z1}(s) Q_{n}(s) - (-1)^{n-1} \left(\prod_{i=0}^{n-1} \frac{s}{2(2i+1)}\right)^2 = P_{n}^{z1}(s) Q_{n-1}(s)
\end{align*}

But note that this is exactly satisfied by the Pad\'e approximants, by the determinantal formula of continued fractions.
This shows that \( G_{n-1}^{1z}(s) \) are the Pad\'e approximants of \( e^{-s} \), as desired.

\subsection{Normalization and Timescales}
\label{sec:proofs:timescale}

\begin{proposition}[Closure properties of TOSSMs]%
  \label{prop:closure}
  Consider a TOSSM \( (\bm{A}, \bm{B}) \) for basis functions \( p_n(t) \) and measure \( \omega(t) \).
  Then, the following are also TOSSMs with the corresponding basis functions and measure:
  \begin{enumerate}
      \item Constant scaling changes the timescale:
      \( (c \bm{A}, c\bm{B}) \) is a TOSSM with basis \( p(ct) \) and measure \( \omega(ct) c \).
      \item Identity shift tilts by exponential:
      \( (\bm{A}+c\bm{I}, \bm{B}) \) is a TOSSM with basis \( p(t)e^{-ct} \) and measure \( \omega(t) e^{2ct} \).
      \item Unitary change of basis preserves measure:
        \( (\bm{V}\bm{A}\bm{V}^*, \bm{V}\bm{B})\) is a TOSSM with basis \( \bm{V} p(t) \) and measure \( \omega(t)\).
  \end{enumerate}
\end{proposition}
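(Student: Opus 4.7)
The plan is to verify each of the three closure properties directly from \cref{def:hippo}, in each case computing the new SSM basis kernels, factoring them into a basis times a measure, and then checking orthonormality. All three proofs rest on elementary identities for the matrix exponential, so I expect no genuine obstacle, only careful bookkeeping.

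First I would handle scalar scaling. Using $e^{t(c\bm{A})}(c\bm{B}) = c\cdot e^{(ct)\bm{A}}\bm{B}$, the new SSM basis kernel is $K'_n(t) = c\cdot p_n(ct)\omega(ct)$, which matches the proposed factorization $p'_n(t)\omega'(t) = p_n(ct)\cdot c\omega(ct)$. Orthonormality follows from the substitution $u=ct$ in $\int p_n(ct)p_m(ct)\cdot c\omega(ct)\,dt$, which reduces to $\int p_n(u)p_m(u)\omega(u)\,du = \delta_{nm}$.

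Next I would handle the identity shift. Since $c\bm{I}$ commutes with $\bm{A}$, we get $e^{t(\bm{A}+c\bm{I})} = e^{ct}\,e^{t\bm{A}}$, so $K'_n(t) = e^{ct}p_n(t)\omega(t)$. Writing this as $\bigl(p_n(t)e^{-ct}\bigr)\cdot\bigl(\omega(t)e^{2ct}\bigr)$ gives the claimed basis and measure, and the $e^{\pm ct}$ factors cancel inside the orthonormality integral, reducing it to the original one.

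Finally I would handle the unitary change of basis. The key identity is $e^{t(\bm{V}\bm{A}\bm{V}^{*})} = \bm{V}\,e^{t\bm{A}}\bm{V}^{*}$, so the new kernel vector is $e^{t(\bm{V}\bm{A}\bm{V}^{*})}(\bm{V}\bm{B}) = \bm{V}\,e^{t\bm{A}}\bm{B}$, i.e.\ the new basis is $\bm{V}p(t)$ with measure $\omega(t)$ unchanged. Orthonormality of $\bm{V}p$ follows from $\int (\bm{V}p(t))(\bm{V}p(t))^{*}\omega(t)\,dt = \bm{V}\bigl(\int p(t)p(t)^{*}\omega(t)\,dt\bigr)\bm{V}^{*} = \bm{V}\bm{I}\bm{V}^{*} = \bm{I}$. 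The only subtlety worth flagging is the unitary case over $\mathbb{C}$, where the inner product must use the conjugate transpose consistently, but this is routine and not a real obstacle.
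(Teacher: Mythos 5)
Your proposal is correct and follows essentially the same route as the paper's proof: compute the new kernels via the matrix-exponential identities, factor into the claimed basis and measure, and verify orthonormality by change of variables (scaling), cancellation of the exponential factors (shift), and conjugation of the identity $\int p(t)\omega(t)p(t)^{\top}\dd t = \bm{I}$ (unitary case). No gaps to report.
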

\begin{proof}

We define \( p(t) \) to be the vector of basis functions for the OSSM \( (\bm{A}, \bm{B}) \),
\begin{align*}
  p(t) = \begin{bmatrix} p_0(t) \\ \vdots \\ p_{N-1}(t) \end{bmatrix}
  .
\end{align*}
Recall that the SSM kernels are \( K_n(t) = p_n(t) \omega(t) \) so that \( p(t) \omega(t) = e^{t\bm{A}}\bm{B} \).

1. The SSM kernels are
\begin{align*}
  e^{t(c\bm{A})}(c\bm{B}) = c e^{(ct)\bm{A})}\bm{B} = c p(ct) \omega(ct).
\end{align*}

It remains to show that the \( p_n(ct) \) are orthonormal with respect to measure \( c \omega(ct) \):
\begin{align*}
  \int  p_j(ct)p_k(ct)\omega(ct) c = \delta_{jk}
\end{align*}
which follows immediately from the change of variables formula.

2. Using the commutativity of \( \bm{A} \) and \( \bm{I} \), the SSM kernels are
\begin{align*}
  e^{t(\bm{A}+c\bm{I})}\bm{B}
  = e^{t\bm{A}}e^{ct\bm{I}}\bm{B}
  = e^{ct} p(t) \omega(t)
  .
\end{align*}

It remains to show that \( p_n(t) e^{-ct} \) are orthonormal with respect to measure \( \omega(t) e^{2ct} \):
\begin{align*}
\int  p_j(t)e^{-ct}p_k(t)e^{-ct}\omega(t) e^{2ct}= \int  p_j(t)p_k(t)\omega(t) = \delta_{jk}.
\end{align*}

3. The SSM basis is
\begin{align*}
  e^{t \bm{V} \bm{A} \bm{V}^*} \bm{V} \bm{B} = \bm{V} e^{t \bm{A}} \bm{B} = \bm{V} p(t) \omega(t)
  .
\end{align*}

It remains to show that the basis functions \( \bm{V} p(t) \) are orthonormal with respect to \( \omega(t) \).
Note that orthonormality of a set of basis functions can be expressed as \( \int p(t) \omega(t) p(t)^\top = \bm{I} \),
so that
\begin{align*}
  \int (\bm{V} p(t)) \omega(t) (\bm{V} p(t))^*
  &= \bm{V} \left[  \int p(t) \omega(t) p(t)^\top \right] \bm{V}^*
  \\&= \bm{I}
  .
\end{align*}

\end{proof}

\end{document}